\definecolor{bgcolor}{rgb}{0.66,0.88,1.00}
\theoremstyle{plain}
\newtheorem{theorem}{Theorem}[section]
\newtheorem{lemma}{Lemma}[section]
\newtheorem{proposition}{Proposition}[section]
\newtheorem{remark}[theorem]{Remark}
\theoremstyle{definition}
\newtheorem{definition}[theorem]{Definition}
\newcommand{\dataset}[1]{{\tt #1}\xspace}
\algrenewcommand\algorithmicrequire{\textbf{Input:}}
\algrenewcommand\algorithmicensure{\textbf{Output:}}
\newcommand{\FL}{Feldman-Langberg}
\newcommand{\kMeans}{$k$-means}
\newcommand{\costR}{\ensuremath{\mathsf{cost}^R}}
\newcommand{\costC}{\ensuremath{\mathsf{cost}^C}}
\newtheorem{problem}{Problem}
\def\1{\bm{1}}
\def\eps{{\varepsilon}}
\def\vtheta{{\bm{\theta}}}
\def\vxi{{\bm{\xi}}}
\def\vc{{\bm{c}}}
\newcommand{\ve}{\@ifnextchar\bgroup{\velong}{{\bm{e}}}}
\newcommand{\velong}[1]{{\bm{#1}}}
\def\vg{{\bm{g}}}
\def\vu{{\bm{u}}}
\def\vw{{\bm{w}}}
\def\vx{{\bm{x}}}
\def\vy{{\bm{y}}}
\def\mA{{\bm{A}}}
\def\mB{{\bm{B}}}
\def\mC{{\bm{C}}}
\def\mD{{\bm{D}}}
\def\mI{{\bm{I}}}
\def\mU{{\bm{U}}}
\def\mX{{\bm{X}}}
\def\mZ{{\bm{Z}}}
\DeclareMathAlphabet{\mathsfit}{\encodingdefault}{\sfdefault}{m}{sl}
\SetMathAlphabet{\mathsfit}{bold}{\encodingdefault}{\sfdefault}{bx}{n}
\def\gA{{\mathcal{A}}}
\def\gC{{\mathcal{C}}}
\def\gG{{\mathcal{G}}}
\def\gN{{\mathcal{N}}}
\def\gX{{\mathcal{X}}}
\def\gY{{\mathcal{Y}}}
\def\gZ{{\mathcal{Z}}}
\def\sP{{\mathbb{P}}}
\newcommand{\eat}[1]{}
\DeclareMathOperator*{\argmin}{arg\,min}
\newcommand\numberthis{\addtocounter{equation}{1}\tag{\theequation}}
\newcommand{\norm}[1]{\left\| #1 \right\|}
\newcommand{\normF}[1]{\left\| #1 \right\|_{\mathrm{F}}}
\def\eqref#1{(\ref{#1})}
\newcommand{\R}{\mathbb{R}}
\newcommand{\algname}[1]{{\sf\footnotesize#1}\xspace}
\title{Coresets for Vertical Federated Learning: Regularized Linear Regression and $K$-Means Clustering}
\author{%
  Lingxiao Huang\thanks{Alphabetical order.} \\
  Nanjing University \\
  \texttt{huanglingxiao1990@126.com} \\
  \And
  Zhize Li$^*$\\
  Carnegie Mellon University \\
  \texttt{zhizeli@cmu.edu} \\
  \AND
  Jialin Sun$^*$ \\
  Fudan University \\
  \texttt{sunjl20@fudan.edu.cn} \\
  \And
  Haoyu Zhao$^*$ \\
  Princeton Univeristy \\
  \texttt{haoyu@princeton.edu} \\
}
\begin{document}

\maketitle

\begin{abstract}
    Vertical federated learning (VFL), where data features are stored in multiple parties distributively, is an important area in machine learning. However, the communication complexity for VFL is typically very high. In this paper, we propose a unified framework by constructing \emph{coresets} in a distributed fashion for communication-efficient VFL. We study two important learning tasks in the VFL setting: regularized linear regression and $k$-means clustering, and apply our coreset framework to both problems. We theoretically show that using coresets can drastically alleviate the communication complexity, while nearly maintain the solution quality. Numerical experiments are conducted to corroborate our theoretical findings.
\end{abstract}



\section{Introduction}
\label{sec:introduction}


Federated learning (FL)~\citep{mcmahan2017communication, konevcny2016federated, li2020federated, kairouz2021advances, wang2021field} is a learning framework where multiple clients/parties collaboratively train a machine learning model under the coordination of a central server without exposing their raw data (i.e., each party’s raw data is stored locally and not transferred). 
There are two large categories of FL, horizontal federated learning (HFL) and vertical federated learning (VFL), based on the distribution characteristics of the data. 
In HFL, different parties usually hold different datasets but all datasets share the same features; while in VFL, all parties use the same dataset but different parties hold different subsets of the features (see Figure~\ref{fig:model}).

Compared to HFL, VFL~\citep{yang2019federated, liu2019communication, wei2022vertical} is generally harder and requires more communication: as a single party cannot observe the full features, it requires communication with other parties to compute the loss and the gradient of \textit{a single data}. 
This will result in two potential problems: (i) it may require a huge amount of communication to jointly train the machine learning model when the dataset is large; and (ii) the procedure of VFL transfers the information of local data and may cause privacy leakage.
Most of the VFL literature focus on the privacy issue, and designing secure training procedure for different machine learning models in the VFL setting~\citep{hardy2017private, yang2019federated, weng2020privacy, chen2021secureboost+}. 
However, the communication efficiency of the training procedure in VFL is somewhat underexplored. 
For unsupervised clustering problems, \citet{ding2016kmeans} propose constant approximation schemes for \kMeans\ clustering, and their communication complexity is \emph{linear} in terms of the dataset size.
For linear regression, although the communciaiton complexity can be improved to \emph{sublinear} via sampling, such as SGD-type uniform sampling for the dataset~\citep{liu2019communication, yang2019federated}, the final performance is not comparable to that using the whole dataset.
Thus previous algorithms usually do not scale or perform well to the big data scenarios. \footnote{In our numerical experiments (Section \ref{sec:experiments}), we provide some results to justify this claim.} 
This leads us to consider the following question:

\begin{quote}
    \emph{How to train machine learning models using sublinear communication complexity in terms of the dataset size without sacrificing the performance in the vertical federated learning (VFL) setting?}
\end{quote}

In this paper, we try to answer this question, and our method is based on the notion of \textit{coreset}~\citep{harpeled2004on,feldman2011unified,feldman2013turning}. Roughly speaking, coreset can be viewed as a small data summary of the original dataset, and the machine learning model trained on the coreset performs similarly to the model trained using the full dataset. Therefore, as long as we can obtain a coreset in the VFL setting in a communication-efficient way, we can then run existing algorithms on the coreset instead of the full dataset.

\paragraph{Our contribution} We study the communication-efficient methods for vertical federated learning with an emphasis on scalability, and design a general paradigm through the lens of coreset. Concretely, we have the following key contributions:
\begin{enumerate}
    \item We design a unified framework for coreset construction in the vertical federated learning setting (Section \ref{sec:unified}), which can help reduce the communication complexity (Theorem~\ref{thm:coreset_reduce}).
    \item We study the regularized linear regression  (Definition \ref{def:vertical_regression}) and \kMeans\ (Definition \ref{def:vertical_kmeans}) problems in the VFL setting, and apply our unified coreset construction framework to them. We show that we can get $\eps$-approximation for these two problems using only $o(n)$ sublinear communications under mild conditions, where $n$ is the size of the dataset (Section \ref{sec:vrlr} and \ref{sec:vkmc}).
    \item We conduct numerical experiments to validate our theoretical results. Our numerical experiments corroborate our findings that using coresets can drastically reduce the communication complexity, while maintaining the quality of the solution (Section \ref{sec:experiments}).
    Moreover, compared to uniform sampling, applying our coresets can achieve a better solution with the same or smaller communication complexity.
\end{enumerate}

\subsection{More related works}
\label{sec:related}

\paragraph{Federated learning} Federated learning was introduced by \citet{mcmahan2017communication}, and received increasing attention in recent years. 
There exist many works studied in the horizontal federated learning (HFL) setting, such as algorithms with multiple local update steps~\citep{mcmahan2017communication, das2020improved, karimireddy2020scaffold, gorbunov2021local, mitra2021linear, zhao2021fedpage} .
There are also many algorithms with communication compression~\citep{karimireddy2019error, mishchenko2019distributed, li2020acceleration, li2020unified, gorbunov2021marina, li2021canita, richtarik2021ef21, fatkhullin2021ef21, zhao2021faster, richtarik20223pc,zhao2022beer} 
and algorithms with privacy preserving~\citep{wei2020federated, hu2020personalized, zhao2020local, truex2020ldp,li2022soteriafl}.

\paragraph{Vertical federated learning} 
Due to the difficulties of VFL, people designed VFL algorithms for some particular machine learning models, including linear regression~\citep{liu2019communication, yang2019federated}, logistic regression~\citep{yang2019parallel, yang2019quasi, he2021secure}, gradient boosting trees~\citep{tian2020federboost, cheng2021secureboost, chen2021secureboost+}, and \kMeans~\cite{ding2016kmeans}. 
For $k$-means, \citet{ding2016kmeans} proposed an algorithm that computes the global centers based on the product of local centers, which requires $O(nT)$ communication complexity. For linear regression, \citet{liu2019communication} and \citet{yang2019federated} used uniform sampling to get unbiased gradient estimation and improved the communication efficiency, but the performance may not be good compared to that without sampling. \citet{yang2019quasi} also applied uniform sampling to quasi-Newton algorithm and improved communication complexity for logistic regression.
People also studied other settings in VFL, e.g., how to align the data among different parties~\citep{sun2021vertical}, how to adopt asynchronous training~\citep{chen2020vafl, gu2020privacy}, and how to defend against attacks in VFL~\citep{liu2021rvfr, luo2021feature}.
In this work, we aim to develop communication-efficient algorithms to handle large-scale VFL problems.

\paragraph{Coreset} Coresets have been applied to a large family of problems in machine learning and statistics, including clustering~\cite{feldman2011unified,braverman2016new,huang2020coresets,cohenaddad2021new,cohenaddad2022towards}, regression~\cite{drineas2006sampling,li2013iterative,boutsidis2013near,cohen2015uniform,jubran2019fast,chhaya2020coresets}, low rank approximation~\cite{cohen2017input}, and mixture model~\cite{lucic2017training,huang2020coresetsFR}.
Specifically, \citet{chhaya2020coresets} investigated coreset construction for regularized regression with different norms.
\citet{feldman2011unified,braverman2016new} proposed an importance sampling framework for coreset construction for clustering (including \kMeans). 
The coreset size for \kMeans\ clustering has been improved by several following works~\cite{huang2020coresets,cohenaddad2021new,cohenaddad2022towards} to $\tilde{O}(k\eps^{-4})$, and \citet{cohenaddad2022towards} proved a lower bound of size $\Omega(\eps^{-2}k)$.
Due to the mergable property of coresets, there have been studies on coreset construction in the distributed/horizontal setting~\cite{balcan2013distributed,phillips2016coresets,bachem2018scalable,lu2020robust}.
To our knowledge, we are the first to consider coreset construction in VFL.

\section{Problem Formulation/Model}
\label{sec:problem}

In this section, we formally define our problems: coresets for vertical regularized linear regression and coresets for vertical \kMeans\ clustering (Problem~\ref{problem:VFL_coreset}).

\paragraph{Vertical federated learning model.}
We first introduce the model of vertical federated learning (VFL).
Let $\mX\subset \R^d$ be a dataset of size $n$ that is vertically separated stored in $T$ data parties ($T\geq 2$).
Concretely, we represent each point $\vx_i\in \mX$ by $\vx_i = (\vx_i^{(1)}, \ldots, \vx_i^{(T)})$ where $\vx_i^{(j)}\in \R^{d_j}$ ($j\in [T]$), and each party $j\in [T]$ holds a local dataset $\mX^{(j)} = \left\{\vx_i^{(j)}\right\}_{i\in [n]}$.
Note that $\sum_{j\in [T]} d_j = d$.
Additionally, if there is a label $y_i\in \R$ for each point $\vx_i\in \mX$, we assume the label vector $\vy\in \R^n$ is stored in Party $T$.
The objective of vertical federated learning is to collaboratively solve certain training problems in the central server with a total communication complexity as small as possible. 

Similar to~\citet[Figure 1]{ding2016kmeans}, we only allow the communication between the central server and each of the $T$ parties, and require the central server to hold the final solution.
Note that the central server can also be replaced with any party in practice.
For the communication complexity, we assume that transporting an integer/floating-point costs 1 unit, and consequently, transporting a $d$-dimensional vector costs $d$ communication units.
See Figure~\ref{fig:model} for an illustration.

\begin{figure}
     \centering
     \begin{subfigure}[b]{0.49\textwidth}
         \centering
         \includegraphics[width=\textwidth]{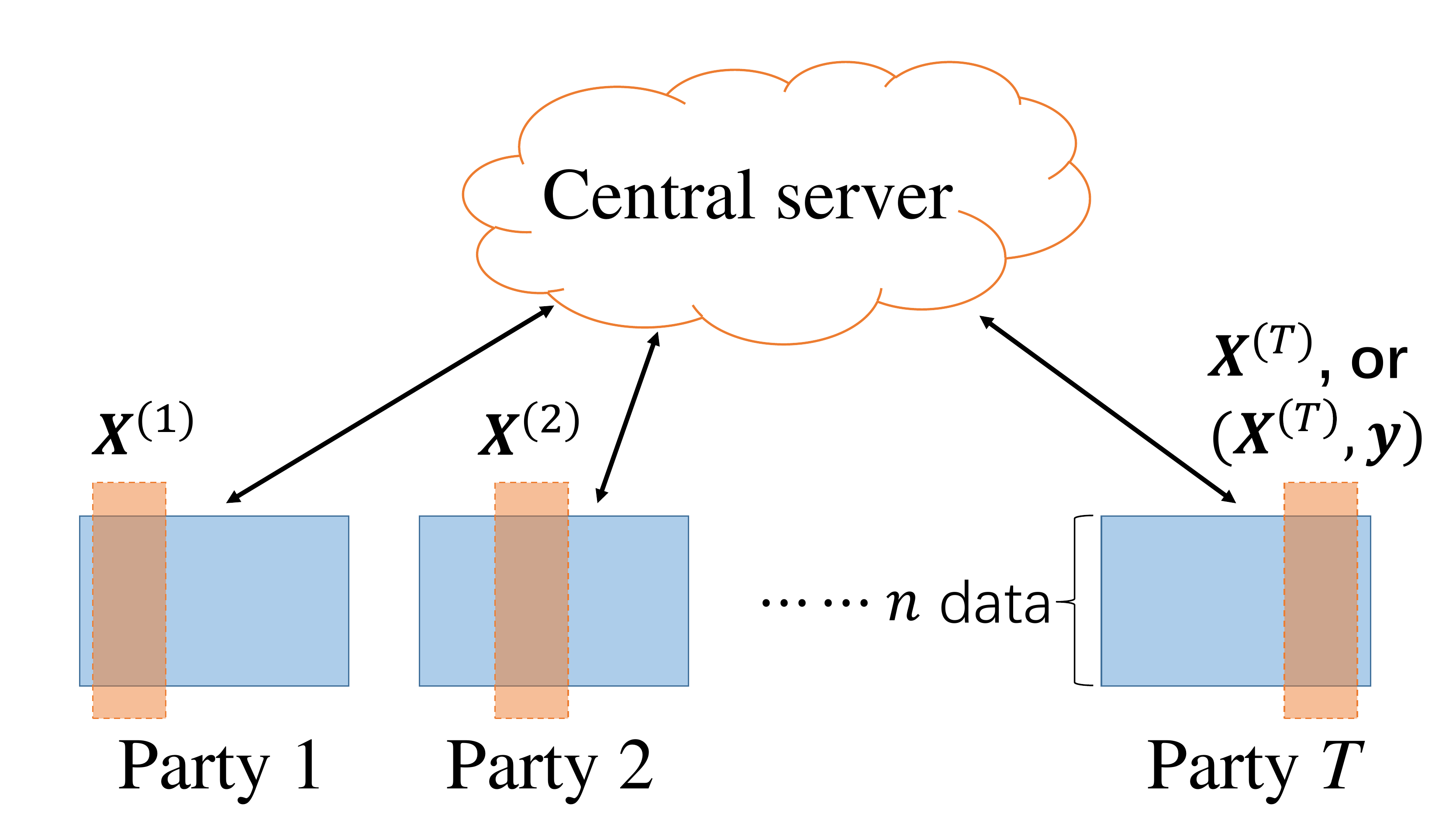}
         \caption{Communication model in VFL}
         \label{fig:model}
     \end{subfigure}
     \hfill
     \begin{subfigure}[b]{0.49\textwidth}
         \centering
         \includegraphics[width=\textwidth]{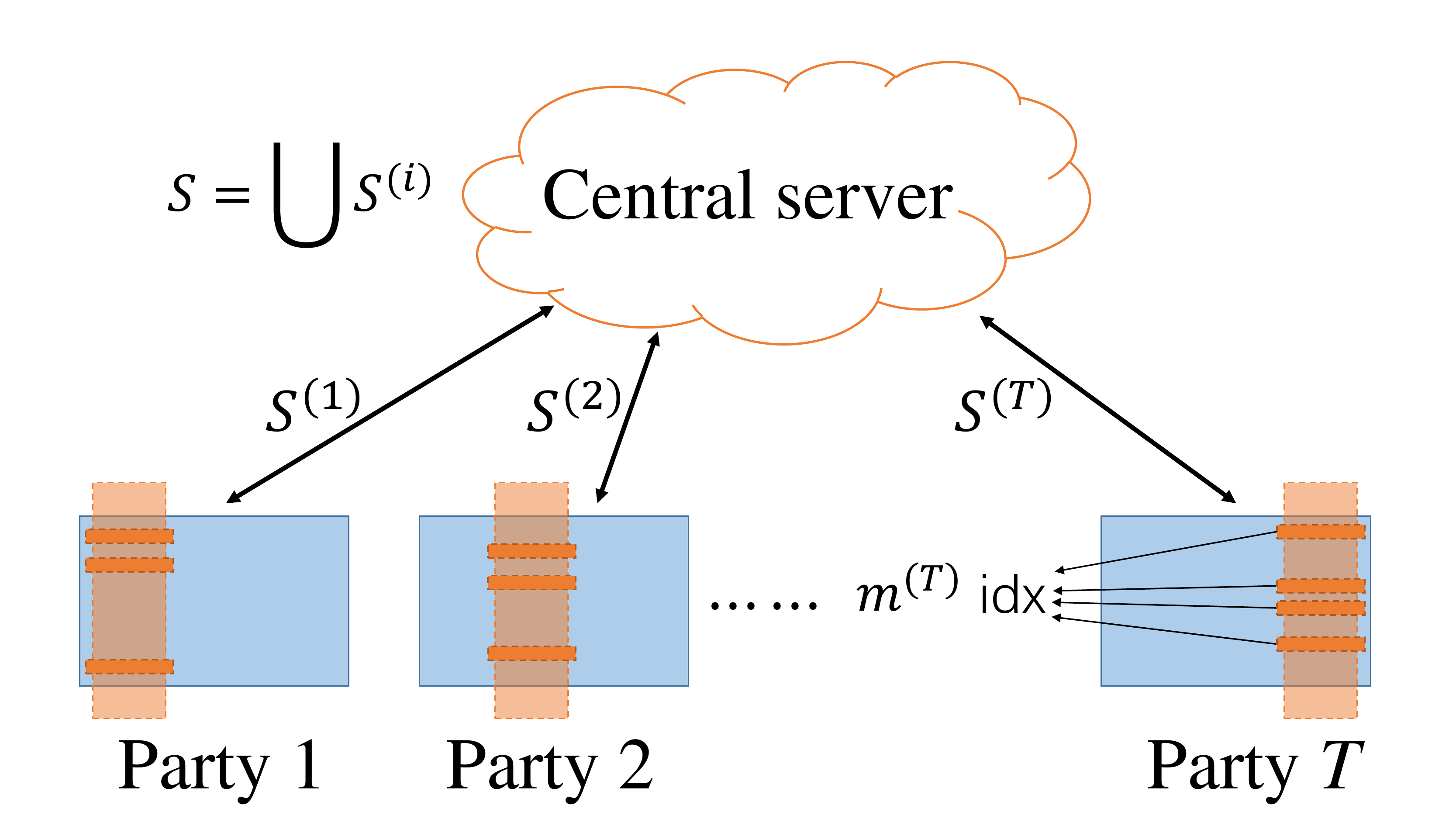}
         \caption{General coreset framework in VFL}
         \label{fig:framework}
     \end{subfigure}
     \caption{Illustration of coreset construction in VFL}
\end{figure}


\paragraph{Vertical regularized linear regression and vertical \kMeans\ clustering.}
%
In this paper, we consider the following two important machine learning problems in the VFL model.

\begin{definition}[\bf{Vertical regularized linear regression (\textrm{VRLR})}]
    \label{def:vertical_regression}
    Given a dataset $\mX\subset \R^d$ together with labels $\vy\in \R^n$ in the VFL model, a regularization function $R: \R^d\rightarrow \R_{\geq 0}$, the goal of the vertical regularized linear regression problem (\textrm{VRLR}) is to compute a vector $\vtheta\in \R^d$ in the server that (approximately) minimizes 
    $
    \costR(\mX, \vtheta) := \sum_{i\in [n]} \costR_i(\mX, \vtheta) = \sum_{i\in [n]} (\vx_i^\top \vtheta - \vy_i)^2 + R(\vtheta),
    $
    and the total communication complexity is as small as possible.
\end{definition}

\noindent
%
%
%

\begin{definition}[\bf{Vertical \kMeans\ clustering (\textrm{VKMC})}]
    \label{def:vertical_kmeans}
    Given a dataset $\mX\subset \R^d$ in the VFL model, an integer $k\geq 1$, let $\gC$ denote the collection of all $k$-center sets $\mC\in \gC$ with $|\mC|=k$ and $d(\cdot, \cdot)$ denote the Euclidean distance.
    The goal of the vertical \kMeans\ clustering problem (\textrm{VKMC}) is to compute a $k$-center set $\mC\in \gC$ in the server that (approximately) minimizes 
    $
    \costC(\mX, \mC) := \sum_{i\in [n]} \costC_i(\mX, \mC) = \sum_{i\in [n]} d(\vx_i,\mC)^2 = \sum_{i\in [n]}\min_{\vc\in \mC} d(\vx_i, \vc)^2,
    $
    and the total communication complexity is as small as possible.
\end{definition}

\noindent
\citet{ding2016kmeans} proposed a similar vertical \kMeans\ clustering problem and provided constant approximation schemes.
They additionally compute an assignment from all points $x_i$ to solution $C$, which requires a communication complexity of at least $\Omega(nT)$.
Due to huge $n$, directly solving \textrm{VRLR} or \textrm{VKMC} is a non-trivial task and may need a large communication complexity.
To this end, we introduce a powerful data-reduction technique, called \emph{coresets}~\citep{harpeled2004on,feldman2011unified,feldman2013turning}.

\paragraph{Coresets for \textrm{VRLR} and \textrm{VKMC}.} 
Roughly speaking, a coreset is a small summary of the original dataset, that approximates the learning objective for every possible choice of learning parameters.
We first define coresets for offline regularized linear regression and \kMeans\ clustering as follows.
As mentioned in Section~\ref{sec:related}, both problems have been well studied in the literature~\cite{feldman2011unified,braverman2016new,chhaya2020coresets,huang2020coresets,cohenaddad2021new,cohenaddad2022towards}.
\begin{definition}[\bf{Coresets for offline regularized linear regression}]
    \label{def:corese_VRLR}
    Given a dataset $\mX\subset \R^d$ together with labels $\vy\in \R^n$ and $\eps\in (0,1)$, a subset $S\subseteq [n]$ together with a weight function $w: S\rightarrow \R_{\geq 0}$ is called an $\eps$-coreset for offline regularized linear regression if for any $\vtheta\in \R^d$,
    \[
    \costR(S,\vtheta) := \sum_{i\in S} w(i)\cdot (\vx_i^\top \vtheta - y_i)^2 + R(\vtheta) \in (1\pm \eps)\cdot \costR(\mX,\vtheta).
    \]
\end{definition}
%
\begin{definition}[\bf{Coresets for offline \kMeans\ clustering}]
    \label{def:corese_VKMC}
    Given a dataset $\mX\subset \R^d$, an integer $k\geq 1$ and $\eps\in (0,1)$, a subset $S\subseteq [n]$ together with a weight function $w: S\rightarrow \R_{\geq 0}$ is called an $\eps$-coreset for offline \kMeans\ clustering if for any $k$-center set $\mC\subset \R^d$,
    \[
    \costC(S,\mC) := \sum_{i\in S} w(i)\cdot d(\vx_i, \mC)^2 \in (1\pm \eps)\cdot \costC(\mX,\mC).
    \]
\end{definition}
%
Now we are ready to give the following main problem.
\begin{problem}[\bf{Coreset construction for \textrm{VRLR} and \textrm{VKMC}}]
    \label{problem:VFL_coreset}
    Given a dataset $\mX\subset \R^d$ (together with labels $\vy\in \R^n$) in the VFL model and $\eps \in (0,1)$, our goal is to construct an $\eps$-coreset for regularized linear regression (or \kMeans\ clustering) in the server, with as small communication complexity as possible. 
    See Figure~\ref{fig:framework} for an illustration.
\end{problem}

\noindent
Note that our coreset is a subset of indices which is slightly different from that in previous work~\citep{harpeled2004on,feldman2011unified,feldman2013turning}, whose coreset consists of weighted points.
This is because we would like to reduce data transportation from parties to the server due to privacy considerations.
Specifically, if the communication schemes for \textrm{VRLR} and \textrm{VKMC} do not need to make data transportation,
then we can avoid data transportation by first applying our coreset construction scheme and then doing the communication schemes based on the coreset. 
Moreover, we have the following theorem that shows how coresets reduce the communication complexity in the VFL models, and the proof is in Section~\ref{sec:coreset_reduce}.

\begin{theorem}[\bf{Coresets reduce the communication complexity for \textrm{VRLR} and \textrm{VKMC}}]
    \label{thm:coreset_reduce} 
    Given $\eps\in (0,1)$, suppose there exist 
    \begin{enumerate}
        \item a communication scheme $A$ that given a (weighted) dataset $\mX\subset \R^d$ together with labels $\vy\in \R^n$ in the VFL model, computes an $\alpha$-approximate solution ($\alpha \geq 1$) for \textrm{VRLR} (or \textrm{VKMC}) in the server with a communication complexity $\Lambda(n)$;
        \item a communication scheme $A'$ that given a (weighted) dataset $\mX\subset \R^d$ together with labels $\vy\in \R^n$ in the VFL model, constructs an $\eps$-coreset for \textrm{VRLR} (or \textrm{VKMC} respecitively) of size $m$ in the server with a communication complexity $\Lambda_0$.
    \end{enumerate}
    Then there exists a communication scheme that given a (weighted) dataset $\mX\subset \R^d$ together with labels $\vy\in \R^n$ in the VFL model, computes an $(1+3\eps)\alpha$-approximate solution ($\alpha \geq 1$) for \textrm{VRLR} (or \textrm{VKMC} respectively) in the server with a communication complexity $\Lambda_0+ 2mT + \Lambda(m)$.
\end{theorem}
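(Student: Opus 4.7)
The plan is to compose the two given schemes in sequence: first run $A'$ to build an $\eps$-coreset $(S,w)$ of size $m$ at the server, then broadcast the coreset to every party so that the VFL scheme $A$ can be invoked on it, and finally run $A$ on the resulting weighted dataset of size $m$. Since $A$ and $A'$ both produce their outputs at the server, the only bridging ingredient is making each party locally aware of which of its local indices belong to $S$ and what their weights are.

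For the communication accounting, the coreset construction contributes $\Lambda_0$. After $A'$ terminates, the server holds the index set $S\subseteq[n]$ (of size $m$) together with the weight vector $w:S\to\R_{\geq 0}$. To hand the coreset over in a VFL-compatible way, the server transmits both $S$ and $w$ to each of the $T$ parties, which costs $m+m=2m$ units per party and thus $2mT$ in total (using the paper's convention that each integer/float costs one unit). Each party $j$ then restricts its local view $\mX^{(j)}$ to indices in $S$, and party $T$ also restricts $\vy$; the scheme $A$ is invoked on this reduced weighted instance, costing $\Lambda(m)$. Summing the three stages gives $\Lambda_0+2mT+\Lambda(m)$ as claimed.

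For the approximation guarantee, I write the \textrm{VRLR} case (the \textrm{VKMC} argument is identical with $\costC$ and $\mC$ replacing $\costR$ and $\vtheta$). Let $\tilde{\vtheta}$ be the solution output by $A$ on the coreset, and let $\vtheta^\star$ be an optimum of the original \textrm{VRLR} instance on $\mX$. Then the chain of inequalities
\begin{align*}
\costR(\mX,\tilde{\vtheta})
 &\leq (1-\eps)^{-1}\costR(S,\tilde{\vtheta})
  \leq (1-\eps)^{-1}\alpha\,\costR(S,\vtheta^\star) \\
 &\leq (1-\eps)^{-1}(1+\eps)\,\alpha\,\costR(\mX,\vtheta^\star)
\end{align*}
uses, in order, the lower half of the coreset guarantee from Definition~\ref{def:corese_VRLR}, the $\alpha$-approximation property of $A$ on the coreset (noting that $\vtheta^\star$ is feasible on the coreset since both problems optimise over the same $\R^d$), and the upper half of the coreset guarantee.

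The main delicate point is simply converting the prefactor $\frac{1+\eps}{1-\eps}$ into the clean $1+3\eps$ stated in the theorem. One checks that $\frac{1+\eps}{1-\eps}\leq 1+3\eps$ iff $(1+3\eps)(1-\eps)\geq 1+\eps$, i.e.\ $\eps\leq 1/3$, so the stated bound follows for this range of $\eps$; larger $\eps$ is handled by rescaling the coreset accuracy parameter by a constant factor before invoking $A'$, which changes neither the asymptotic communication nor the final form of the statement. Everything else is bookkeeping, so I expect this step to be the only subtle one.
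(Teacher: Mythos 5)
Your proposal is correct and follows essentially the same route as the paper: run $A'$, broadcast $(S,w)$ at cost $2mT$, run $A$ on the coreset, and chain the two halves of the coreset guarantee around the $\alpha$-approximation property. In fact you are slightly more careful than the paper, which loosely writes the lower-half coreset bound as a factor $(1+\eps)$ rather than $(1-\eps)^{-1}$ and thereby gets $(1+\eps)^2\leq 1+3\eps$ for all $\eps\in(0,1)$ without needing your $\eps\leq 1/3$ restriction and rescaling remark.
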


\noindent
Usually, $\Lambda(m) = \Omega(mT)$ and $\Lambda_0$ is small or comparable to $Tm$ (see Theorems~\ref{thm:coreset-vrlr} and~\ref{thm:coreset-vkmc} for examples).
Consequently, the total communication complexity by introducing coresets is dominated by $\Lambda(m)$, which is much smaller compared to $\Lambda(n)$.
Hence, coreset can efficiently reduce the communication complexity with a slight sacrifice on the approximate ratio.

\eat{
\begin{table}[!ht]
    \centering
    \begin{tabular}{|c|c|}
    \hline
        $n$ & number of data \\
        $m$ & coreset size \\
        $T$ & number of parties \\
        $\vx_i$ & $i$-th data \\
        $\vy$ & label vector \\
        $\vx_i^{(j)}$ & $i$-th data stored on party $j$ \\
        $\vtheta$ & regression vector \\
        $\mX$ & dataset \\
        $\mX^{(j)}$ & data stored on party $j$ \\
        $\mC$ & center set \\
        $g_i$ & sensitivity upper bound for $\vx_i$ \\
        $g_i^{(j)}$ & sensitivity upper bound for $\vx_i$ on party $j$ \\
        $\gG^{(j)}$ & total sensitivity on party $j$ \\
        $\gG$ & total sensitivity \\
        $\tau$ & separability parameter in clustering \\
    \hline
    \end{tabular}
    \caption{Table of notations}
    \label{tab:table-of-notions}
\end{table}
}

\section{A Unified Scheme for VFL Coresets via Importance Sampling}
\label{sec:unified}

In this section, we propose a unified communication scheme (Algorithm~\ref{alg:dis}) that will be used as a meta-algorithm for solving Problem~\ref{problem:VFL_coreset}.
We assume each party $j\in [T]$ holds a real number $g_i^{(j)}\geq 0$ for data $\vx_i^{(j)}$ in Algorithm~\ref{alg:dis}, that will be computed locally for both \textrm{VRLR} (Algorithm~\ref{alg:coreset-vrlr}) and \textrm{VKMC} (Algorithm~\ref{alg:coreset-vkmc}).
There are three communication rounds in Algorithm~\ref{alg:dis}.
In the first round (Lines 2-4), the server knows all ``local total sensitivities'' $\gG^{(j)}$, takes samples of $[T]$ with probability proportional to $\gG^{(j)}$, and sends $a_j$ to each party $j$, where $a_j$ is the number of local samples of party $j$ for the second round.
In the second round (Lines 5-6), each party samples a collection $S^{(j)}\subseteq [n]$ of size $a_j$ with probability proportional to $g_i^{(j)}$.
The server achieves the union $S = \bigcup_{j\in [T]} S^{(j)}$.
In the third round (Lines 7-8), the goal is to compute weights $w(i)$ for all samples.
In the end, we achieve a weighted subset $(S,w)$.
We propose the following theorem to analyze the performance of Algorithm~\ref{alg:dis} and show that $(S,w)$ is a coreset when size $m$ is large enough.
%

\begin{theorem}[\bf{The performance of Algorithm~\ref{alg:dis}}]
    \label{thm:meta_alg}
    The communication complexity of Algorithm~\ref{alg:dis} is $O(m T)$.
    Let $\eps,\delta\in (0,1/2)$ and $k\geq 1$ be an integer.
    We have
    \begin{itemize}
        \item Let $\zeta = \max\limits_{i\in [n]} \nicefrac{\sup\limits_{\vtheta\in \R^d}\frac{\costR_i(\mX,\vtheta)}{\costR(\mX,\vtheta)}}{\sum_{j\in [T]} g_i^{(j)}}$ and $m = O\left(\eps^{-2}\zeta\gG(d^2\log (\zeta\gG) + \log (1/\delta)) \right)$.
        With probability at least $1-\delta$, $(S,w)$ is an $\eps$-coreset for offline regularized linear regression.
        \item Let $\zeta = \max\limits_{i\in [n]} \nicefrac{\sup\limits_{\mC\in \gC}\frac{\costC_i(\mX,\mC)}{\costC(\mX,\mC)}}{\sum_{j\in [T]} g_i^{(j)}}$ and $m = O\left(\eps^{-2}\zeta\gG(dk\log (\zeta\gG) + \log (1/\delta)) \right)$.
        With probability at least $1-\delta$, $(S,w)$ is an $\eps$-coreset  for offline \kMeans\ clustering.
    \end{itemize}
\end{theorem}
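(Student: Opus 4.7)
The plan is to (i) tally the communication budget round by round, and (ii) recognise Algorithm~\ref{alg:dis} as an instance of centralised sensitivity sampling, after which the standard Feldman-Langberg framework~\citep{feldman2011unified,braverman2016new} delivers the coreset guarantee. For the communication bound I would inspect the three rounds separately: round one has each party send its scalar $\gG^{(j)}$ and receive the scalar $a_j$, costing $O(T)$; round two returns the $a_j$ sampled indices per party, and since $\sum_j a_j = m$, this amounts to $m$ units; round three computes $w(i) = 1/(m p_i)$, which requires each party to transmit $g_i^{(j)}$ for every $i\in S$, using at most $mT$ units. Summing yields the stated $O(mT)$.

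For the coreset guarantee, the key observation is that each draw of the two-stage sampling collapses, by the tower property, to a single draw from the centralised distribution
\[
    p_i \;=\; \sum_{j\in[T]} \frac{\gG^{(j)}}{\gG}\cdot\frac{g_i^{(j)}}{\gG^{(j)}} \;=\; \frac{\sum_{j\in[T]}g_i^{(j)}}{\gG},
\]
and the inverse reweighting $w(i)=1/(mp_i)$ makes the resulting partial sums unbiased estimators of $\costR(\mX,\vtheta)$ and $\costC(\mX,\mC)$ for every choice of parameters. By the definition of $\zeta$, we have pointwise
\[
    \sup_{\vtheta\in\R^d}\frac{\costR_i(\mX,\vtheta)}{\costR(\mX,\vtheta)} \;\le\; \zeta\sum_{j\in[T]} g_i^{(j)} \;=\; \zeta\gG\cdot p_i,
\]
so Algorithm~\ref{alg:dis} realises importance sampling at rates dominating the true sensitivities, with total-sensitivity upper bound $\zeta\gG$ (the VKMC inequality is identical after substituting $\costC$).

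I would then invoke the Feldman-Langberg sensitivity-sampling theorem: under such a sampling regime, a sample of size $m = O(\eps^{-2}\zeta\gG(\Delta + \log(1/\delta)))$ yields an $\eps$-coreset with probability $1-\delta$, where $\Delta$ upper bounds the pseudo-dimension of the induced cost class (with a possible $\log(\zeta\gG)$ factor absorbed from the scale-dependent discretisation in the uniform-convergence argument). For VRLR, one expands $(\vtheta^\top\vx_i - y_i)^2$ as a linear function of the $d^2+d+1$ coordinates of $(\vtheta\vtheta^\top,\vtheta,1)$ to obtain $\Delta = O(d^2)$; for VKMC, the class of squared Euclidean distances to a $k$-centre set has pseudo-dimension $O(dk)$ via the standard min-of-$k$-affine-functions argument. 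Substituting these two dimensions into the generic FL bound recovers exactly the stated sample sizes.

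The main delicate point will be handling the regularizer in VRLR: since $R(\vtheta)$ is an arbitrary non-negative function that appears additively on both sides of the coreset inequality, the FL concentration must be applied only to $\sum_i(\vtheta^\top\vx_i - y_i)^2$ while $R(\vtheta)$ is carried as an additive cushion in the denominator of the sensitivity. The resulting additive error $\eps\sum_i(\vtheta^\top\vx_i - y_i)^2 \le \eps\,\costR(\mX,\vtheta)$ then closes the multiplicative $(1\pm\eps)$ guarantee without imposing any structural hypothesis on $R$.
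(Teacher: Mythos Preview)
Your proposal is correct and follows essentially the same route as the paper: a round-by-round tally of the communication (the paper additionally counts the broadcast of $S$ in Line~6 as $mT$, which you omit, but your total $O(mT)$ is unaffected), followed by the observation that the two-stage sampling collapses to centralised importance sampling with probability $p_i=\sum_j g_i^{(j)}/\gG$, and then an appeal to the Feldman--Langberg framework with the rescaled sensitivities $g_i'=\zeta\sum_j g_i^{(j)}$. Your added remarks on the pseudo-dimension bounds and the handling of the regulariser $R(\vtheta)$ are more explicit than the paper's treatment (which simply invokes its Theorem~\ref{thm:fl} as a black box), but they do not constitute a different argument.
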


\noindent
The proof can be found in Appendix~\ref{sec:proof_meta}.
The main idea is to show that Algorithm~\ref{alg:dis} simulates a well-known importance sampling framework for offline coreset construction by~\cite{feldman2011unified,braverman2016new}.
The term $\sup_{\vtheta\in \R^d}\frac{\costR_i(\mX,\vtheta)}{\costR(\mX,\vtheta)}$ (or $\sup_{\mC\in \gC}\frac{\costC_i(\mX,\mC)}{\costC(\mX,\mC)}$) is called the \emph{sensitivity} of point $\vx_i$ for \textrm{VRLR} (or \textrm{VKMC}) that represents the maximum contribution of $\vx_i$ over all possible parameters.
Algorithm~\ref{alg:dis} aims to use $\sum_{j\in [T]} g_i^{(j)}$ to estimate the sensitivity of $\vx_i$, and hence, $\zeta$ represents the maximum sensitivity gap over all points.
The performance of Algorithm~\ref{alg:dis} mainly depends on the quality of these estimations $\sum_{j\in [T]} g_i^{(j)}$.
As both $\zeta$ and the total sum $\gG = \sum_{i\in [n], j\in [T]} g_i^{(j)}$ become smaller, the required size $m$ becomes smaller.
Specifically, if both $\zeta$ and $\gG$ only depends on parameters $k,d,T$, the coreset size $m$ is independent of $n$ as expected.
Combining with Theorem~\ref{thm:coreset_reduce}, we can heavily reduce the communication complexity for \textrm{VRLR} or \textrm{VKMC}.

\begin{algorithm}
\caption{A unified importance sampling for coreset construction in the VFL model}
\label{alg:dis}
\begin{algorithmic}[1]
\Require Each party $j\in [T]$ holds data $\vx_i^{(j)}$ together with a real number $g_i^{(j)}\geq 0$, an integer $m\geq 1$
\Ensure a weighted collection $S\subseteq [n]$ of size $|S|\leq m$
\Procedure{DIS}{$m, \{g_i^{(j)}: {i\in [n], j\in [T]}\}$} 
    \State Each party $j\in [T]$ sends $\gG^{(j)} \leftarrow \sum_{i\in [n]} g_i^{(j)}$ to the server. \Comment{1st round begins}
    \State The server computes $\gG = \sum_{j\in [T]} \gG^{(j)}$ and samples a multiset $A\subseteq [T]$ of $m$ samples, where each sample $j\in [T]$ is selected with probability $\nicefrac{\gG^{(j)}}{\gG}$.
    \State The server sends $a_j\leftarrow \# j\in A$ to each party $j\in [T]$. \Comment{1st round ends}
    \State Each party $j\in [T]$ samples a multiset $S^{(j)} \subseteq [n]$ of size $a_j$, where each sample $i\in [n]$ is selected with probability $\nicefrac{g_i^{(j)}}{\gG^{(j)}}$, and sends $S^{(j)}$ to the server.
    \Comment{2nd round begins}
    \State The server broadcasts a multiset $S\leftarrow \bigcup_{j\in [T]}S^{(j)}$ to all parties. \Comment{2nd round ends}
    \State Each party $j\in [T]$ sends $G^{(j)} = \left\{g_i^{(j)} :i\in S\right\}$ to the server. \Comment{3rd round begins } 
    \State The server computes weights $w(i)\leftarrow \nicefrac{\gG}{|S| \cdot \sum_{j\in [T]} g_i^{(j)}}$ for each $i\in S$. \Comment{3rd round ends}
    \State\Return $(S,w)$
\EndProcedure
\end{algorithmic}
\end{algorithm}

\paragraph{Privacy issue.}
We consider the privacy of the proposed scheme from two aspects: coreset construction and model training.
As for the coreset construction part (Algorithm 1), the privacy leakage comes from the "sensitivity score" $g_i^{(j)}$ of the data points in different parties.
To tackle this problem, we can use secure aggregation \cite{bonawitz2017practical}  to transport the sum $g_i=\sum_{j=1}^T g_i^{(j)}$ to the server without revealing the exact values of $g_i^{(j)}$s (Line 7 of Algorithm~\ref{alg:dis}).
The server only knows $(S, w)$ and $\gG^{(j)}$s.
For the model training part, we can apply the secure VFL algorithms if existed, e.g., using homomorphic encryption on SAGA for regression (it is an extension from SGD to SAGA \cite{hardy2017private}).

Note that the VFL communication model in Section~\ref{sec:problem} is assumed to be semi-honest.
Suppose some party $j$ is malicious, then it can report a large enough $\gG^{(j)}$ (Line 2 of Algorithm~\ref{alg:dis}) such that the server sets the number of samples $a_j\approx m$ in party $j$ (Line 4 of Algorithm~\ref{alg:dis}).
Consequently, party $j$ can sample a large multi-set $S^{(j)}$ which heavily affects the resulting set $S$.
For instance, by reporting $S^{(j)}$ of uniform samples, party $j$ can make $S$ close to uniform sampling and loss the theoretical guarantees in Theorem~\ref{thm:meta_alg}.


\section{Coreset Construction for \textrm{VRLR}}\label{sec:vrlr}
In this section, we discuss the coreset construction for \textrm{VRLR}. We first show that it is generally hard to construct a strong coreset for \textrm{VRLR}. Then, we show how to communication-efficiently construct coresets for \textrm{VRLR} under mild assumption.
All missing proofs can be found in Section~\ref{sec:proof_vrlr}.

With slightly abuse of notation, we denote $\mX\in\R^{n\times d} , \mX^{(j)}\in\R^{n\times d_j}$ to be the data matrix of whole data and the data matrix stored on party $j$ respectively. Since there are labels $\vy$ stored on party $T$, $\mX^{(T)}$ has dimension $n\times (d_T+1)$.

\paragraph{Communication complexity lower bound for \textrm{VRLR}} We first show that it is \textit{hard} to compute the coreset for \textrm{VRLR} by proving an $\Omega(n)$ deterministic communication complexity lower bound.

\begin{theorem}[\bf{Communication complexity of coreset construction for \textrm{VRLR}}]\label{thm:hardness-vrlr}
    Let $T\geq 2$. 
    Given constant $\eps \in (0,1)$, any deterministic communication scheme that constructs an $\eps$-coreset for \textrm{VRLR} requires a communication complexity $\Omega(n)$.  
\end{theorem}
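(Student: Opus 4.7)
}

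The plan is to prove the $\Omega(n)$ lower bound by reduction from a canonical $\Omega(n)$-hard two-party deterministic communication problem, such as \textsc{Set-Disjointness} or \textsc{Equality} on $\{0,1\}^n$. The idea is to embed an instance of the hard problem into a \textrm{VRLR} instance so that any valid $\eps$-coreset for the \textrm{VRLR} instance must encode $\Omega(n)$ bits of joint information about the hard problem's inputs; then a standard rectangle argument from communication complexity will give the claimed lower bound.

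Concretely, I would focus on $T = 2$ and set $d_1 = d_2 = n$. Given inputs $a, b \in \{0,1\}^n$ held by Party~1 and Party~2 respectively, let Party~1 store $\vx_i^{(1)} = a_i e_i$, Party~2 store $\vx_i^{(2)} = b_i e_i$ together with labels $y_i = 0$, and take $R \equiv 0$. The cost function then reduces to the separable quadratic
\[
\costR(\mX, \vtheta) \;=\; \sum_{i=1}^n \bigl(a_i \theta_i^{(1)} + b_i \theta_i^{(2)}\bigr)^2,
\]
whose nonzero coefficient pattern exactly encodes $(a,b)$. Plugging in the coordinate query $\vtheta = e_j^{(1)}$ gives $\costR(\mX,\vtheta) = a_j$, whereas the corresponding coreset cost is $w(j)\,a_j\,\mathbf{1}[j \in S]$; a symmetric statement holds for $e_j^{(2)}$. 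For $\eps < 1$ the $(1 \pm \eps)$-guarantee thus forces $j \in S$ with $w(j) \in [1-\eps, 1+\eps]$ for every $j \in \mathrm{supp}(a) \cup \mathrm{supp}(b)$.

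I would conclude by the standard rectangle / fooling-set argument. Each transcript $\pi$ of the deterministic scheme induces a combinatorial rectangle $A_\pi \times B_\pi$ of inputs producing $\pi$, and all inputs in the rectangle must share the same output coreset $(S_\pi, w_\pi)$. The forcing above then requires $S_\pi \supseteq \mathrm{supp}(a) \cup \mathrm{supp}(b)$ for every $(a, b) \in A_\pi \times B_\pi$. Restricting the input family to pairs with $|\mathrm{supp}(a) \cup \mathrm{supp}(b)| = n/2$, each of the $\binom{n}{n/2} = 2^{\Omega(n)}$ possible $(n/2)$-subsets of $[n]$ must be realized as some $S_\pi$; hence the number of transcripts---and therefore the deterministic communication---is $\Omega(n)$.

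The main obstacle is the ``identity'' coreset $(S, w) = ([n], \mathbf{1})$, which is an exact $\eps$-coreset for every instance and can be returned with no communication. The lower bound therefore implicitly rules this out by treating an $\eps$-coreset as a genuine compression (e.g., $|S| < n$ or $|S| = o(n)$), under which the rectangle argument above goes through and yields the $\Omega(n)$ bound. Equivalently, one can account for the bits needed to explicitly describe the output $(S, w)$: once $S$ must take $2^{\Omega(n)}$ distinct values across inputs, just transmitting $S$ to the server already costs $\Omega(n)$ bits in the worst case, matching the claimed complexity.
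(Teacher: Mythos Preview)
Your high-level plan---reduce from a two-party problem with $\Omega(n)$ deterministic complexity---is exactly what the paper does, but the execution diverges and your specific argument has a real gap.

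The paper's reduction is far simpler: it takes $d=1$ (not $d=2n$), sets $\mX=a\in\{0,1\}^n$ as the single feature column held by Party~1 and $\vy=b\in\{0,1\}^n$ as the labels held by Party~2, and reduces from \textsc{Equality}. With $\theta=1$ and $R\equiv 0$ the cost is $\sum_i (a_i-b_i)^2$, which is zero iff $a=b$; an $\eps$-coreset preserves ``zero vs.\ positive'', so reading off the data at the coreset indices decides \textsc{Equality}. This yields $C+O(|S|)=\Omega(n)$, and since $d=1$ admits constant-size coresets, the $\Omega(n)$ falls entirely on the communication $C$.

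Your construction instead inflates the dimension to $d=2n$, and this is where the argument breaks. Your forcing step only shows $S_\pi \supseteq \mathrm{supp}(a)\cup\mathrm{supp}(b)$, not equality. So the claim ``each of the $\binom{n}{n/2}$ possible $(n/2)$-subsets must be realized as some $S_\pi$'' is unjustified: a single transcript with $S_\pi=[n]$ covers all of them simultaneously, and the counting collapses. Your proposed fix---assume $|S|<n$ or $|S|=o(n)$---does not rescue this. With $|S|<n$ the containment still does not pin down $S_\pi$, and with $|S|=o(n)$ your own instance becomes infeasible: because each coordinate $j\in\mathrm{supp}(a)\cup\mathrm{supp}(b)$ lives in its own direction, any valid coreset for your data must already have $|S|\ge|\mathrm{supp}(a)\cup\mathrm{supp}(b)|$, which is $\Theta(n)$ for your chosen inputs. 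What you have actually proved is a coreset-\emph{size} lower bound (unsurprising when $d=\Theta(n)$), not a communication lower bound.

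You are right that the identity coreset $([n],\mathbf 1)$ is an issue for \emph{any} such statement, and the paper does not discuss it explicitly either. But the paper's $d=1$ instance is the right way around this: small coresets genuinely exist there, so the reduction separates communication from output size. In your instance they cannot be separated.
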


The communication complexity lower bound for linear regression has also been considered in the HFL setting~\citep{vempala2020communication}, e.g., \citet{vempala2020communication} also gets a deterministic communication complexity lower bound. 
Theorem \ref{thm:hardness-vrlr} shows that linear regression in the VFL setting is ``hard'' and thus we may need to add data assumptions to get theoretical guarantees for coreset construction.

\paragraph{Communication-efficient coreset construction for \textrm{VRLR}} Now we show that under mild condition, we can construct a strong coreset for \textrm{VRLR} using $o(n)$ number of communication. Specifically, we assume the data $\mX$ satisfies the following assumption, which will be justified in the appendix.

\begin{restatable}{assumption}{assvrlr}\label{ass:vrlr}
    Let $\mU^{(j)} \in \R^{n\times d'_j}$ denote the orthonormal basis of the column space of $\mX^{(j)}$ stored on party $j$ ($\mU^{(T)}$ denotes the orthonormal basis of $[\mX^{(T)}, y]$), and then the matrix $\mU = [\mU^{(1)}, \mU^{(2)},\dots,\mU^{(T)}]$ has smallest singular value $\sigma_{\min}(\mU) \ge \gamma >0$.
\end{restatable}


%
Intuitively, $\gamma\in (0,1]$ represents the degree of orthonormal among data in different parties. As the larger $\gamma$ is, the more orthonormal among the column spaces of $X^{(j)}$, and thus $U$ is more close to the orthonormal basis computed on $X$ directly.
Now we introduce our coreset construction algorithm for \textrm{VRLR} (Algorithm \ref{alg:coreset-vrlr}). At a very high level perspective, we let each party $j$ to compute a coreset $S^{(j)}$ based on its own data $\mX^{(j)}$, and combine all the $S^{(j)}$ together to obtain a final coreset $S$. More specifically, for each party $j$, we let it to compute $\mU^{(j)} = [\vu_1^{(j)},\dots,\vu_n^{(j)}]^\top$ based on the data $\mX^{(j)}$, and set $g_i^{(j)} = \|\vu^{(j)}_i\|^2+\frac{1}{n}$ to be the weight of data $i$ on party $j$. Then, we set $g_i = \sum_{j\in [T]}g_i^{(j)}$ to be the final weight of data $\vx_i$ and want to sample $m$ samples using weight $g_i$. To do this, we apply the DIS procedure (Algorithm \ref{alg:dis}).

\begin{algorithm}[!t]
\caption{Vertical federated coreset construction for Regularized Linear Regression (VRLR)}
\label{alg:coreset-vrlr}
\begin{algorithmic}[1]
\Require Each party $j\in [T]$ holds the data $\vx_i^{(j)}$ for all $i\in [n]$, coreset size $m$.
\For{each party $j \in [T]$}
    \State Compute orthornormal basis $\mU^{(j)} = [\vu_1^{(j)},\dots,\vu_n^{(j)}]^\top$ of $\mX^{(j)}$
    \State $g_i^{(j)} \leftarrow \|\vu_i^{(j)}\|^2+\frac{1}{n}$ for all $i\in [n]$
\EndFor
\State \Return $(S,w)\leftarrow\texttt{DIS}(m, \{g_i^{(j)}\})$
\end{algorithmic}
\end{algorithm}

\begin{theorem}[\bf{Coresets for \textrm{VRLR}}]\label{thm:coreset-vrlr}
    For a given dataset $\mX \subset \R^d$ satisfying Assumption~\ref{ass:vrlr}, number of parties $T \geq 1$ and constants $\eps, \delta \in (0,1)$, with probability at least $1-\delta$, Algorithm \ref{alg:coreset-vrlr} constructs an $\eps$-coreset for \textrm{VRLR} of size $m = O(\eps^{-2}\gamma^{-2}d(d^2\log{\gamma^{-2}d}+\log{1/\delta}))$, and uses communication complexity $O(mT)$.
\end{theorem}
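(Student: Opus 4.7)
Algorithm~\ref{alg:coreset-vrlr} is simply the DIS meta-procedure instantiated with the specific sensitivity estimates $g_i^{(j)} = \|\vu_i^{(j)}\|^2 + \tfrac{1}{n}$, so the bound $O(mT)$ on communication complexity follows directly from Theorem~\ref{thm:meta_alg}. To obtain the coreset-size bound, the plan is to invoke the first bullet of Theorem~\ref{thm:meta_alg} by controlling the two driving quantities $\gG = \sum_{i,j} g_i^{(j)}$ and $\zeta = \max_i \sup_\vtheta\tfrac{\costR_i(\mX,\vtheta)/\costR(\mX,\vtheta)}{\sum_j g_i^{(j)}}$.

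The bound on $\gG$ is immediate: column-orthonormality of $\mU^{(j)}$ gives $\sum_i \|\vu_i^{(j)}\|^2 = \Tr(\mU^{(j)\top}\mU^{(j)}) = d'_j$, so summing over $j$ yields $\sum_{i,j}\|\vu_i^{(j)}\|^2 \le d+1$ (the ``$+1$'' coming from the label column appended to party $T$), while the additive constants contribute $\sum_{i,j}\tfrac{1}{n} = T$. Under the natural assumption $T \le d$, this gives $\gG = O(d)$.

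The heart of the argument is the estimate $\zeta \le \gamma^{-2}$, which is where Assumption~\ref{ass:vrlr} enters. First pass to the augmented matrix $\mA = [\mA^{(1)},\dots,\mA^{(T)}]$ with $\mA^{(j)} = \mX^{(j)}$ for $j<T$ and $\mA^{(T)} = [\mX^{(T)},\vy]$, introduce $\vphi = [\vtheta;-1]$ so that $\sum_{i'}(\vx_{i'}^\top\vtheta - y_{i'})^2 = \|\mA\vphi\|^2$ and $\vx_i^\top\vtheta - y_i = \va_i^\top\vphi$, and splice together the local thin QR factorizations $\mA^{(j)} = \mU^{(j)}\mR^{(j)}$ into $\mA\vphi = \widetilde{\mU}\vbeta$, where $\widetilde{\mU} = [\mU^{(1)},\dots,\mU^{(T)}]$ and $\vbeta = \diag(\mR^{(1)},\dots,\mR^{(T)})\,\vphi$. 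Then Assumption~\ref{ass:vrlr} gives $\|\mA\vphi\|^2 \ge \gamma^2\|\vbeta\|^2$, while the Cauchy-Schwarz inequality on the $i$-th row yields $(\va_i^\top\vphi)^2 = (\widetilde{\vu}_i^\top\vbeta)^2 \le \|\vbeta\|^2 \sum_j\|\vu_i^{(j)}\|^2$; combining these bounds the unregularized ratio by $\gamma^{-2}\sum_j\|\vu_i^{(j)}\|^2$.

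To fold in the regularizer, interpret $\costR_i = (\vx_i^\top\vtheta - y_i)^2 + R(\vtheta)/n$ and apply the elementary inequality $\tfrac{a+b}{c+d}\le \tfrac{a}{c}+\tfrac{b}{d}$ with $c = \sum_{i'}(\vx_{i'}^\top\vtheta - y_{i'})^2$ and $d = R(\vtheta)$ to obtain $\costR_i/\costR \le \gamma^{-2}\sum_j\|\vu_i^{(j)}\|^2 + 1/n$. Dividing by $\sum_j g_i^{(j)} = \sum_j\|\vu_i^{(j)}\|^2 + T/n$ and using $\gamma\le 1 \le T$ then gives $\zeta \le \gamma^{-2}$. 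Substituting $\zeta = O(\gamma^{-2})$ and $\gG = O(d)$ into Theorem~\ref{thm:meta_alg} produces the claimed size $m = O(\eps^{-2}\gamma^{-2}d(d^2\log(\gamma^{-2}d) + \log(1/\delta)))$. The chief obstacle is the $\zeta$-estimate: mapping the global leverage-score-type quantity $\sup_\vtheta(\vx_i^\top\vtheta - y_i)^2/\sum_{i'}(\vx_{i'}^\top\vtheta - y_{i'})^2$ back to the sum of \emph{local} leverage scores $\sum_j\|\vu_i^{(j)}\|^2$ is only possible because Assumption~\ref{ass:vrlr} guarantees $\widetilde{\mU}$ is not too ill-conditioned, and the price of that detour is exactly the $\gamma^{-2}$ factor.
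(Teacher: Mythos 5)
Your proposal is correct and follows essentially the same route as the paper's own proof: bound $\gG=O(d)$ by the trace of the local orthonormal bases (with $T\le d$ since each party holds at least one feature), bound $\zeta\le\gamma^{-2}$ by passing to the augmented matrix $[\mX,\vy]$, changing basis to $\mU=[\mU^{(1)},\dots,\mU^{(T)}]$, and invoking $\sigma_{\min}(\mU)\ge\gamma$ together with Cauchy--Schwarz, then splitting off the regularizer additively before plugging into Theorem~\ref{thm:meta_alg}. Your writeup is in fact slightly more explicit than the paper's about the block-diagonal QR change of variables that justifies replacing $\sup_{\vphi}(\va_i^\top\vphi)^2/\|\mA\vphi\|^2$ by $\sup_{\vbeta}(\widetilde{\vu}_i^\top\vbeta)^2/\|\widetilde{\mU}\vbeta\|^2$, but the argument is the same.
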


Note that the coreset size and the total communication are all independent on $n$, and thus when combined with Theorem \ref{thm:coreset_reduce}, using coreset construction can reduce the communication complexity for \textrm{VRLR}.
When Assumption \ref{ass:vrlr} is not satisfied, Algorithm \ref{thm:coreset-vrlr} is not guaranteed to return a strong coreset.
However, as shown in the following remark, it will return another kind of coreset called \emph{robust coreset} ~\cite{feldman2011unified,huang2018epsilon,wang2021robust}, which allows a small portion of data to be treated as outliers and excluded both in $S$ and $\mX$ when evaluating the quality of $S$.
The outliers represent a small percentage of data with unbounded sensitivity gap.
More details can be found in the Theorem~\ref{thm:robust-coreset-vrlr}.
\begin{remark}[Robust coreset for \textrm{VRLR}]
Given a dataset $\mX\subset \R^d$ together with labels $\vy\in \R^n$, $\eps\in (0,1)$ and $\beta\in [0,1)$, a subset $S\subseteq [n]$ together with a weight function $w: S\rightarrow \R_{\geq 0}$ is called a $(\beta,\eps)$-robust coreset for offline regularized linear regression if for any $\vtheta\in \R^d$, there exists a subset $O_{\vtheta} \subseteq [n]$ such that $|O_{\vtheta}|/n \leq \beta$, $|S \cap O_{\vtheta}|/|S| \leq \beta$ and
    \[
    \costR(S \backslash O_{\vtheta},\vtheta) \in \costR(\mX \backslash O_{\vtheta},\vtheta) \pm \eps \cdot \costR(\mX,\vtheta).
    \]
If Assumption \ref{ass:vrlr} is not satisfied, for $m=O((\eps\beta T)^{-2}d^6)$, 
Algorithm \ref{alg:coreset-vrlr} will return a $(\beta, \eps)$-robust coreset for \textrm{VRLR} with communication complexity $O(mT)$.
\end{remark}

\section{Coreset Construction for \textrm{VKMC}}
\label{sec:vkmc}
In this section, we discuss the coreset construction for \textrm{VKMC}.
Similar to \textrm{VRLR}, we first show it generally requires $\Omega(n)$ communication complexity to construct a coreset for \textrm{VKMC}, and then we show that it is possible to vastly reduce the communication complexity (Algorithm~\ref{alg:coreset-vkmc}) under mild data assumption.
All missing proofs can be found in Section~\ref{sec:proof_vkmc}.
\paragraph{Communication complexity lower bound for \textrm{VKMC}.}
We first present an $\Omega(n)$ communication complexity lower bound for constructing an $\eps$-coreset for \textrm{VKMC} in the following theorem.
\begin{theorem}[\bf{Communication complexity of coreset construction for \textrm{VKMC}}]
    \label{thm:hardness-vkmc}
    Let $d\geq T \geq 2$. Given a constant $\eps \in (0,1)$ and an integer $k \geq 3$, any randomized communication scheme that constructs an $\eps$-coreset for \textrm{VKMC} with probability 0.99 requires a communication complexity $\Omega(n)$.
\end{theorem}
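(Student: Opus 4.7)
The plan is to establish the lower bound by reducing from a standard two-party communication problem with a known $\Omega(n)$ randomized communication complexity, namely Set-Disjointness $\mathrm{DISJ}_n$ (via Razborov's lower bound) or Indexing $\mathrm{IND}_n$. It suffices to prove the bound in the case $T = d = 2$, since larger $T$ and $d$ trivially include this as a sub-case (extra parties can hold dummy coordinates and the dataset can be padded with zeros). The server that receives the coreset plays the role of a verifier who, after coreset construction, can evaluate $\costC(S, \mathcal{C})$ for any $k$-center set $\mathcal{C}$ it chooses without further communication.

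First, I would construct a VFL instance encoding the hard problem. Given Alice's string $a \in \{0,1\}^n$ (held by party $1$) and Bob's string $b \in \{0,1\}^n$ (held by party $2$), I would place $n$ ``bit-encoding'' points $\vx_i = (\vx_i^{(1)}, \vx_i^{(2)}) \in \R^2$ where $\vx_i^{(1)}$ is a fixed function of $(i, a_i)$ (e.g.\ $3i + a_i$, so each $i$ occupies its own ``column'') and $\vx_i^{(2)}$ is a fixed function of $b_i$. For $k \geq 3$, I would add $k - 2$ clusters of ``anchor'' points placed pairwise at distances $\gg n$, so that any $O(1)$-approximate clustering must devote $k-2$ of the centers to the anchors, leaving the remaining two centers to cluster the $n$ bit-encoding points.

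Second, I would design a family of $k$-center ``probe'' configurations $\mathcal{C}_1, \ldots, \mathcal{C}_n$ that each isolate a single input bit. Concretely, for each $i^* \in [n]$, $\mathcal{C}_{i^*}$ would place $k-2$ centers at the anchor locations and two centers at carefully chosen points in $\R^2$ so that the exact cost $\costC(\mX, \mathcal{C}_{i^*})$ differs in the two possible values of $a_{i^*} b_{i^*}$ (for a $\mathrm{DISJ}$ reduction) or $a_{i^*}$ (for an $\mathrm{IND}$ reduction) by an absolute $\Omega(1)$ amount, while the total cost $\costC(\mX, \mathcal{C}_{i^*})$ is kept to $O(1)$ by normalizing the bit-point coordinates. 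The $(1 \pm \epsilon)$-coreset guarantee then preserves the gap and lets the server decode each bit; repeating encoding or rescaling handles the multiplicative slack when $\epsilon$ is a fixed constant.

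Third, composing: any (possibly randomized) communication scheme producing an $\epsilon$-coreset at the server with probability $0.99$ using $o(n)$ total bits could be combined with the server's local probe queries (which use no extra communication) into an $o(n)$-bit randomized protocol solving the chosen hard problem with probability $\geq 0.99$, contradicting its $\Omega(n)$ randomized lower bound. The main obstacle I anticipate is ensuring the cost gap at each probe truly dominates the multiplicative coreset error $\epsilon \cdot \costC(\mX, \mathcal{C}_{i^*})$; this requires careful normalization so that the bit-specific ``signal'' is $\Omega(\costC(\mX, \mathcal{C}_{i^*}))$ rather than lower order, and verifying that the anchor cluster structure genuinely forces $k - 2$ of the centers to the anchors in every probe configuration we use, so that the coreset cannot evade error by rerouting centers to the bit-encoding region.
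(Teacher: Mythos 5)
Your high-level strategy (reduction from set-disjointness) is the same as the paper's, but your gadget is much more elaborate and the obstacle you flag at the end is not a technicality --- it is a genuine gap that your construction, as described, cannot overcome. If the $n$ bit-encoding points occupy $n$ distinct ``columns'' (e.g.\ first coordinate $3i+a_i$), then any probe $\mathcal{C}_{i^*}$ has only $O(1)$ centers available for the bit region, so the $n-1$ points outside column $i^*$ necessarily contribute $\Omega(n)$ (after any global rescaling, $\Omega(n)$ times the per-point scale) to $\costC(\mX,\mathcal{C}_{i^*})$, while the bit-specific signal is a single point's contribution. The signal is therefore an $O(1/n)$ fraction of the total cost, and the multiplicative error $\eps\cdot\costC(\mX,\mathcal{C}_{i^*})$ of an $\eps$-coreset swamps it for any constant $\eps$; rescaling does not help because it scales signal and noise identically. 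A second, smaller issue: your fallback to Indexing does not work here, since $\mathrm{IND}_n$ only has an $\Omega(n)$ lower bound for \emph{one-way} protocols, whereas the coreset construction scheme is interactive.

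The paper's proof avoids all of this by not trying to decode individual bits. It places the points at $\vx_i=(a_i,b_i)\in\{0,1\}^2$ (so all $n$ points collapse onto at most four locations), sets $k=3$, and uses the single probe $\mC=\{(0,0),(0,1),(1,0)\}$. Then $\costC(\mX,\mC)=0$ iff $\mathrm{DISJ}(a,b)=0$, and the $(1\pm\eps)$ guarantee trivially preserves the zero-versus-positive distinction, so no normalization of the gap against the total cost is needed. If you want to salvage your per-bit approach, you would have to ensure that in each probe \emph{every} point other than the target has exactly zero cost --- which forces you back to a collapsed point configuration like the paper's and makes the per-bit decoding unnecessary, since deciding the single OR bit already yields the $\Omega(n)$ bound.
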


\noindent
Different from \textrm{VRLR}, we have a randomized communication complexity lower bound for \textrm{VKMC}. 
Similarly, we also need to introduce certain data assumptions to get theoretical guarantees for coreset construction due to this hardness result.
\paragraph{Communication-efficient coreset construction for \textrm{VKMC}} Now we show how to communication-efficiently construct coresets for \textrm{VKMC} under mild condition.
Specifically, we assume that the data satisfies the following assumption, which will be justified in the appendix.
\begin{restatable}{assumption}{assvkmc}\label{ass:vkmc}
    There exists $\tau \geq 1$ and some party $t \in [T]$ such that $\norm{\vx_{i}-\vx_{j}}^{2} \leq \tau \norm{\vx^{(t)}_i-\vx^{(t)}_j}^2$ for any $i, j \in [n]$.
\end{restatable}

This assumption says that, there is a party that is ``important'', and any two data points which can be differentiated can also be differentiated on that party to some extent. Specifically, as $\tau$ is more close to 1, Assumption 5.1 implies that there exists a party $t\in [T]$ whose local pairwise distances $\|x_i^{(t)} - x_j^{(t)}\|$s are close to the corresponding global pairwise distances $\|x_i - x_j\|$s.
Then we introduce our coreset construction algorithm for \textrm{VKMC} (Algorithm~\ref{alg:coreset-vkmc}).
For the input, note that there exist several constant approximation algorithms for \kMeans\  \citep{kumar2004simple, vassilvitskii2006k}.
The widely used $k$-means++ algorithm \citep{vassilvitskii2006k} provides an $O(\ln{k})$-approximation and performs well in practice.
Similar to Algorithm~\ref{alg:coreset-vrlr} for \textrm{VRLR}, Algorithm~\ref{alg:coreset-vkmc} also applies Algorithm~\ref{alg:dis} after computing $g_i^{(j)}$ locally.
The key is to construct local sensitivities $g_i^{(j)}$ to upper bound both $\zeta$ and $\gG$ in Theorem~\ref{thm:meta_alg}.
The derivation of the local sensitivities $g_i^{(j)}$ defined in Line 10 is partly inspired by \cite{varadarajan2012sensitivity}, which upper bounds the total sensitivity of a point set in clustering problems by projecting points onto an optimal solution.
Intuitively, if some party $t$ satisfies Assumption~\ref{ass:vkmc}, a constant factor approximate solution computed locally in party $t$ can also induce a global one.
Then by projecting points onto this global constant approximation, we can prove that $g_i^{(t)}$ (scaled by some constant factor) is an upper bound of the global sensitivity of $\vx_i$ for any $i\in[n]$.
Though unaware of which party satisfies Assumption~\ref{ass:vkmc}, it suffices to sum up $g_i^{(j)}$ over $j\in[T]$, only costing an additional $T$ in $\gG$.
Finally, we can upper bound $\zeta$ by $O(\tau)$ and $\gG$ by $O(kT)$ respectively.
The main theorem is as follows.
%


\begin{algorithm}[!t]
\caption{Vertical federated coreset construction for \kMeans\ Clustering (VKMC)}
\label{alg:coreset-vkmc}
\begin{algorithmic}[1]
\Require Each party $j\in [T]$ holds the data $\vx_i^{(j)}$ for all $i\in [n]$, coreset size $m$, number of centers $k$, an $\alpha$-approximation algorithm $\gA$ (e.g. $k$-means++).
\Ensure a weighted collection $S\subseteq [n]$ of size $|S|\leq m$
\ForAll{party $j \in [T]$}
    \State $\mC^{(j)} \leftarrow \gA(\{\vx_i^{(j)}\}_{i\in [n]})$. Note that $\mC^{(j)} = \{\vc_1^{(j)}, \vc_2^{(j)},\dots, \vc_{k}^{(j)}\}$.
    \State Initialize $\mB_l^{(j)}=\varnothing$ for $l\in[k]$.
    \ForAll{$i \in [n]$}
        \State $\pi(i) \leftarrow \argmin_{l\in[k]} d(\vx_i^{(j)},\vc_{l}^{(j)})$\Comment{a mapping to find the closest center locally.}
        \State $\mB_{\pi(i)}^{(j)} \leftarrow \mB_{\pi(i)}^{(j)} \cup i $.
    \EndFor
    \State $\text{cost}^{(j)} \leftarrow \sum_{i\in [n]} d(\vx_i^{(j)}, \mC^{(j)})^2$ \Comment{$d(\vx_i^{(j)},\mC^{(j)})=d(\vx_i^{(j)},\vc_{\pi(i)}^{(j)})$}
    \ForAll{$i\in[n]$}
    \State $l \leftarrow \pi(i)$, $g_i^{(j)} \leftarrow \frac{\alpha d(\vx_i^{(j)}, \vc_l^{(j)})^2}{\text{cost}^{(j)}} + \frac{\alpha \sum_{i'\in \mB_l^{(j)}}d(\vx_{i'}^{(j)}, \vc_l^{(j)})^2}{|\mB_l^{(j)}| \text{cost}^{(j)}} + \frac{2\alpha}{|\mB_l^{(j)}|}$.
    \EndFor
\EndFor
\State \Return $(S,w)\leftarrow\texttt{DIS}(m, \{g_i^{(j)}\})$
\end{algorithmic}
\end{algorithm}

\begin{theorem}[\bf{Coresets for \textrm{VKMC}}]
    \label{thm:coreset-vkmc}
    For a given dataset $\mX \subset \R^d$ satisfying Assumption~\ref{ass:vkmc}, an $\alpha$-approximation algorithm for \kMeans\ with $\alpha=O(1)$, integers $k \geq 1$, $T \geq 1$ and constants $\eps, \delta \in (0,1)$, with probability at least $1-\delta$, Algorithm \ref{alg:coreset-vkmc} constructs an $\eps$-coreset for \textrm{VKMC} of size $m = O(\eps^{-2}\alpha\tau kT(dk\log{(\alpha\tau kT)}+\log{1/\delta}))$, and uses communication complexity $O(mT)$.
\end{theorem}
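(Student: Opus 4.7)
}
The plan is to invoke Theorem~\ref{thm:meta_alg} (the clustering case) with the local sensitivity estimates $g_i^{(j)}$ defined in Line~10 of Algorithm~\ref{alg:coreset-vkmc}. It suffices to establish two bounds: first, that $\gG = \sum_{i \in [n], j \in [T]} g_i^{(j)} = O(\alpha k T)$, and second, that the sensitivity-gap parameter $\zeta$ satisfies $\zeta = O(\tau/\alpha)$, so that $\zeta \gG = O(\tau k T)$. Plugging these into the coreset size bound $m = O(\eps^{-2} \zeta \gG (dk \log(\zeta \gG) + \log(1/\delta)))$ from Theorem~\ref{thm:meta_alg} immediately yields the claimed coreset size, and the communication complexity $O(mT)$ is inherited directly from Algorithm~\ref{alg:dis}.

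\paragraph{Bounding $\gG^{(j)}$.}
For each party $j$, I would sum the three terms defining $g_i^{(j)}$ over $i \in [n]$: the first term sums to $\alpha$ since the numerators add to $\mathrm{cost}^{(j)}$; the second term, when summed over $i \in \mB_l^{(j)}$ and then over $l \in [k]$, collapses to $\alpha$ by construction; and the third term sums to $2\alpha k$. Hence $\gG^{(j)} = O(\alpha k)$, and summing over $T$ parties gives $\gG = O(\alpha k T)$.

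\paragraph{Bounding $\zeta$ via Assumption~\ref{ass:vkmc}.}
The harder step is the sensitivity bound. Fix the party $t$ guaranteed by Assumption~\ref{ass:vkmc}. The standard Varadarajan--Xiao argument~\cite{varadarajan2012sensitivity} shows that, for a point $\vx_i^{(t)}$ lying in the cluster $\mB_l^{(t)}$ of an $\alpha$-approximate local solution $\mC^{(t)}$, the local $k$-means sensitivity of $\vx_i^{(t)}$ (against all center sets in the local space) is upper bounded by exactly the expression appearing in Line~10 (up to the $\alpha$ factor). To lift this to the global sensitivity, I would use Assumption~\ref{ass:vkmc} twice: for any global center set $\mC \subset \R^d$, let $\mC^{(t)}$ denote its projection onto the coordinates of party $t$; then $d(\vx_i, \mC)^2 \leq \tau \cdot d(\vx_i^{(t)}, \mC^{(t)})^2$ in the numerator of the sensitivity ratio, while in the denominator $\sum_{i'} d(\vx_{i'}, \mC)^2 \geq \sum_{i'} d(\vx_{i'}^{(t)}, \mC^{(t)})^2$ (this inequality uses the pairwise-distance form of Assumption~\ref{ass:vkmc} applied to an average-of-cluster-means reformulation, a standard trick via $\sum_i d(\vx_i,\mC)^2 = \tfrac{1}{2|B|}\sum_{i,i' \in B}\|\vx_i-\vx_{i'}\|^2$ restricted to the partition induced by $\mC$). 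This yields a factor $\tau$ blow-up, so the global sensitivity of $\vx_i$ is at most $\tau \cdot \alpha^{-1} g_i^{(t)}$. Since party $t$ is unknown, I use $\sum_{j \in [T]} g_i^{(j)} \geq g_i^{(t)}$, paying only the already-accounted factor $T$ in $\gG$; hence $\zeta = O(\tau/\alpha)$.

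\paragraph{Combining and main obstacle.}
Substituting $\zeta \gG = O(\tau k T)$ into the bound from Theorem~\ref{thm:meta_alg} gives the coreset size $m = O(\eps^{-2} \alpha \tau k T (dk \log(\alpha \tau k T) + \log(1/\delta)))$, with overall communication $O(mT)$ from Algorithm~\ref{alg:dis}. The main obstacle is carefully justifying the sensitivity upper bound by $g_i^{(t)}$, specifically the denominator inequality $\mathrm{cost}^{(t)} \leq \mathrm{OPT}^{(t)} \cdot \alpha$ combined with the partition-based lower bound on $\mathrm{cost}(\mX,\mC)$ in terms of the local cost under the partition $\{\mB_l^{(t)}\}$; one must also verify that the extra additive term $2\alpha/|\mB_l^{(j)}|$ in Line~10 properly accounts for the average-distance-to-center contribution that arises when bounding a generic point's sensitivity via triangle inequality from the cluster's representative. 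Once these inequalities are assembled, the rest of the argument is routine bookkeeping.
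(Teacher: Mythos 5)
Your overall plan is the right one and matches the paper's: bound $\gG=O(\alpha kT)$ by summing the three terms of Line~10 (your computation of $\gG^{(j)}=O(\alpha k)$ is exactly the paper's), bound the sensitivity gap $\zeta$ by $O(\tau)$ using Assumption~\ref{ass:vkmc}, and plug both into Theorem~\ref{thm:meta_alg}. The problem is in how you establish the sensitivity bound, which is the entire technical content of the theorem.

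You propose to lift the \emph{local} sensitivity bound to a \emph{global} one by taking an arbitrary global center set $\mC$, projecting it onto party $t$'s coordinates to get $\mC^{(t)}$, and claiming $d(\vx_i,\mC)^2\leq \tau\, d(\vx_i^{(t)},\mC^{(t)})^2$ in the numerator. This step is not licensed by Assumption~\ref{ass:vkmc} and is false in general: the assumption only controls pairwise distances \emph{between data points}, whereas a center $\vc\in\mC$ is an arbitrary point of $\R^d$. Concretely, take $\vx_i=(0,0)$ with $T=2$ and one coordinate per party, and $\mC=\{(0,100)\}$; then $d(\vx_i^{(1)},\mC^{(1)})=0$ while $d(\vx_i,\mC)=100$, so no finite $\tau$ works, even though the data may satisfy Assumption~\ref{ass:vkmc} with small $\tau$. (Your denominator inequality $\sum_{i'}d(\vx_{i'},\mC)^2\geq\sum_{i'}d(\vx_{i'}^{(t)},\mC^{(t)})^2$ is fine, but it holds trivially because coordinate projection shrinks Euclidean distances --- it does not need the assumption.) The paper's route avoids this entirely: it first lifts the \emph{locally computed clustering} to a global center set $\Tilde{\mC}$ consisting of the full-dimensional means of the local clusters $\mB_l^{(t)}$, proves via the identity $\sum_{i\in B}d(\vx_i,\mu_B)^2=\frac{1}{2|B|}\sum_{i,j\in B}d(\vx_i,\vx_j)^2$ (so that Assumption~\ref{ass:vkmc} is only ever applied to pairs of data points) that $\Tilde{\mC}$ is a global $\alpha\tau$-approximation, and then runs the Varadarajan--Xiao projection argument with the approximate triangle inequality for $d^2$ entirely in the global space, comparing $d(\vx_i,\mC)$ with $d(\vx_i,\Tilde{\vc}_{\pi(i)})$ and $d(\Tilde{\vc}_{\pi(i)},\mC)$ rather than with any coordinate projection of $\mC$. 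You gesture at both ingredients (the cluster-mean identity and Varadarajan--Xiao), but the way you assemble them --- projecting the query center set --- would not survive scrutiny.

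A secondary, non-fatal issue: you report $\zeta=O(\tau/\alpha)$ and hence $\zeta\gG=O(\tau kT)$, yet state the final size as $O(\eps^{-2}\alpha\tau kT(\cdots))$; these are inconsistent by a factor of $\alpha$. The paper gets $s_i\leq 4\tau\sum_j g_i^{(j)}$, i.e.\ $\zeta=O(\tau)$ (the $\alpha$ is already inside $g_i^{(j)}$), giving $\zeta\gG=O(\alpha\tau kT)$. Since $\alpha=O(1)$ this does not affect the asymptotic claim, but it signals that the bookkeeping around where $\alpha$ lives has not been carried through carefully.
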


\noindent
Again, note that both the coreset size and communication complexity are independent of $n$.
Thus, using Algorithm \ref{alg:coreset-vkmc} together with other baseline algorithms can drastically reduce the communication complexity.
Similar to \textrm{VRLR}, we have the following remark when the data assumption (Assumption \ref{ass:vkmc}) is not satisfied. More details can be found in the Theorem \ref{thm:robust-coreset-vkmc}.
\begin{remark}[Robust coreset for \textrm{VKMC}]
Given a dataset $\mX\subset \R^d$, an integer $k\geq 1$, $\eps\in (0,1)$ and $\beta\in [0,1)$, a subset $S\subseteq [n]$ together with a weight function $w: S\rightarrow \R_{\geq 0}$ is called a $(\beta,\eps)$-robust coreset for offline \kMeans\ clustering if for any $\mC\subset \R^d$, there exists a subset $O_{\mC} \subseteq [n]$ such that $|O_{\mC}|/n \leq \beta$, $|S \cap O_{\mC}|/|S| \leq \beta$ and
    \[
    \costC(S \backslash O_{\mC},\mC) \in \costC(\mX \backslash O_{\mC},\mC) \pm \eps \cdot \costC(\mX,\mC).
    \]
If Assumption \ref{ass:vkmc} is not satisfied, for $m=O((\eps\beta)^{-2}k^5d)$ Algorithm \ref{alg:coreset-vkmc} will return a $(\beta, \eps)$-robust coreset for \textrm{VKMC} with communication complexity $O(mT)$.
\end{remark}

\section{Numerical Experiments}\label{sec:experiments}

In this section, we present the numerical experiments, which corroborate our theoretical results.
We conduct experiments on a single system that simulates the distributed settings.\footnote{The codes are available at \url{https://github.com/haoyuzhao123/coreset-vfl-codes}.}

\paragraph{Empirical setup.}
We conduct experiments on the \dataset{YearPredictionMSD} dataset~\citep{bertin2011million} for both \textrm{VRLR} and \textrm{VKMC}.
\dataset{YearPredictionMSD} dataset has 515345 data, and each data contains 90 features and a corresponding label. 
We assume there are $T=3$ parties and each party stories 30 distinct features. 
For \textrm{VRLR}, we split the data into a training set with size 463715 and a testing set with size 51630. 
We consider ridge regression in \textrm{VRLR} by letting $R(\vtheta) = \lambda \norm{\vtheta}^2$ for $\lambda = 0.1n$ where $n$ is the dataset size. 
For \textrm{VKMC}, there is only one training set with size 515345 and without labels. 
We choose $k=10$ (10 centers) and we normalize each feature with mean 0 and standard deviation 1 for \textrm{VKMC}.

For \textrm{VRLR}, we consider two baselines: 1) \textsc{Central} as the procedure that transfers all data to the central server and solves the problem using scikit-learn package~\cite{scikit-learn}; 2) \textsc{SAGA} as using \citep{defazio2014saga}'s algorithm to optimize in a VFL fashion.
For \textrm{VKMC}, we also consider two baselines: 1) \textsc{Kmeans++} as the procedure that transfers all data to the central server and clusters using \textsc{Kmeans++}~\citep{vassilvitskii2006k}; 2) \textsc{DistDim} by \citep{ding2016kmeans}.

For each baseline, we compare our coreset algorithm with uniform sampling. 
We use \textsc{C-X} to denote coreset sampling followed by algorithm \textsc{X} and \textsc{U-X} for uniform sampling followed by algorithm \textsc{X}, e.g. \textsc{C-DistDim} means that we apply coreset construction and then use \textsc{DistDim} algorithm. 
We compare \textsc{C-X} and \textsc{U-X} with different sizes, and each experiment is repeated 20 times.

\paragraph{Empirical results.} Figure \ref{fig:ridge-res} shows our results for \textrm{VRLR} and Figure \ref{fig:kmeans-res} shows our results for \textrm{VKMC}. Table \ref{tab:res-yearprediction} summarize the results. For \textrm{VRLR}, since it is a supervised learning problem, we report the testing loss; for \textrm{VKMC}, it is an unsupervised learning task and the cost refers to the training loss on the full training data.

\begin{figure}
    \centering
    \begin{subfigure}[b]{0.525\textwidth}
        \centering
        \includegraphics[width=\textwidth]{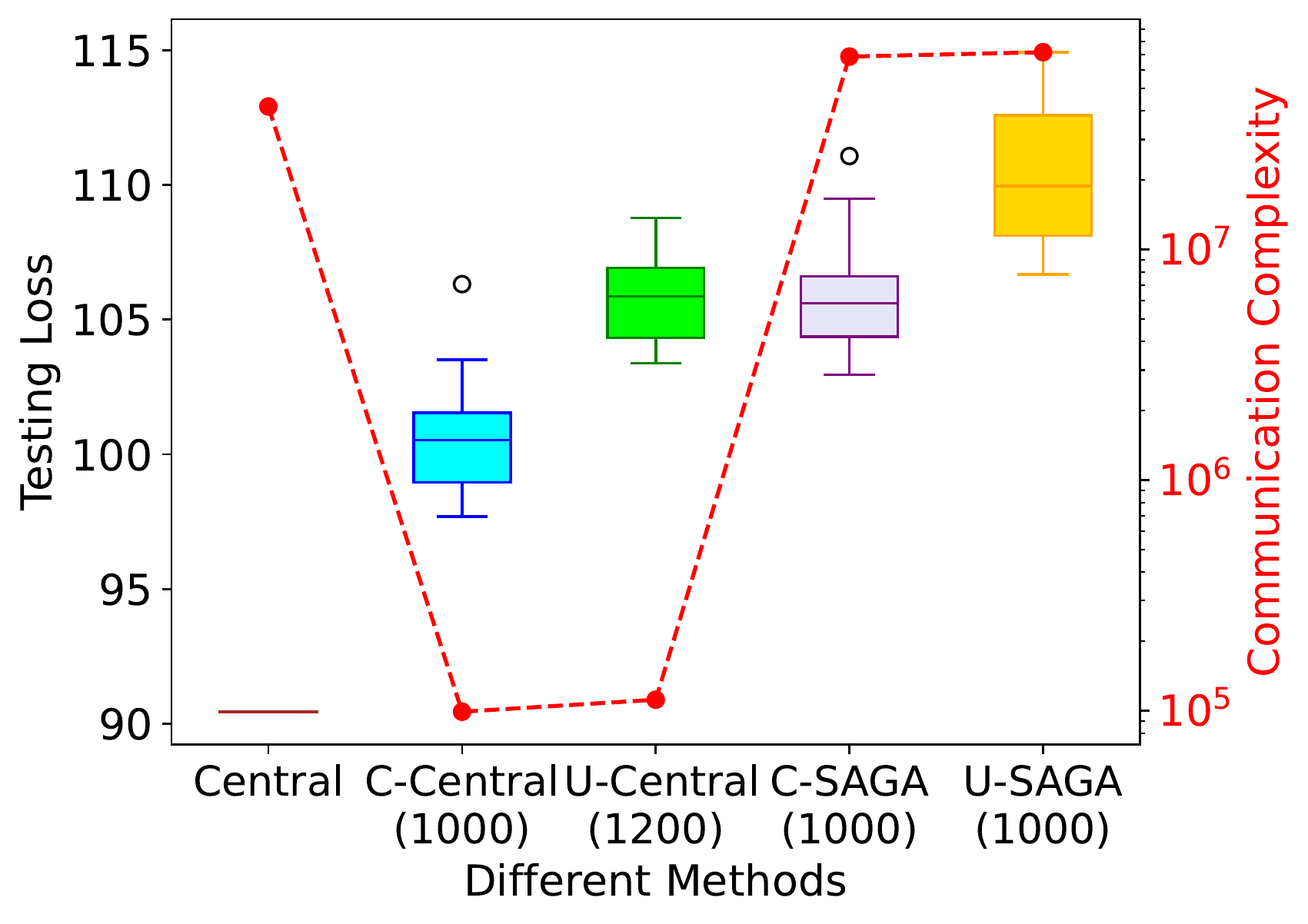}.
    \end{subfigure}
    \hfill
    \begin{subfigure}[b]{0.465\textwidth}
        \centering
        \includegraphics[width=\textwidth]{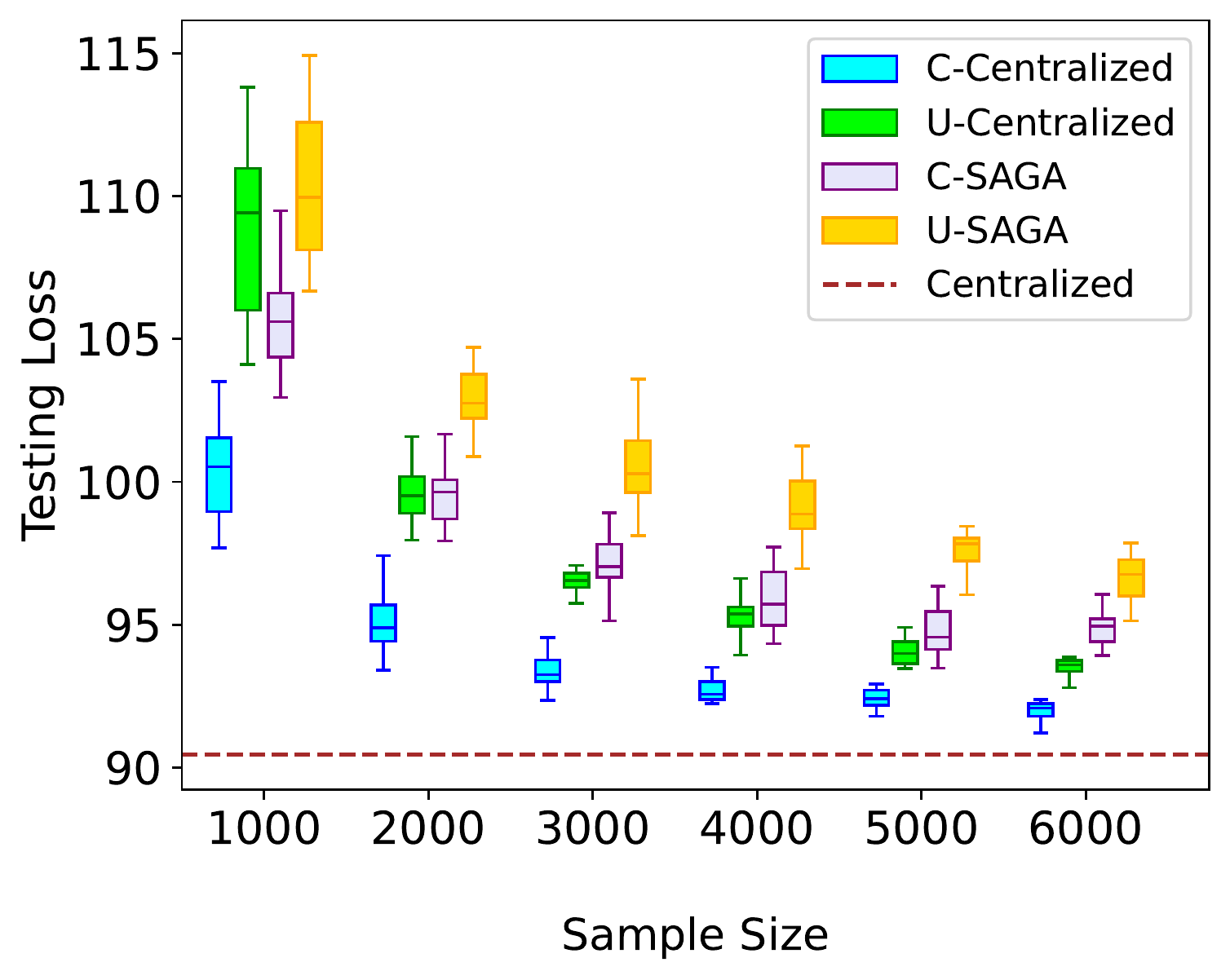}
    \end{subfigure}
    \caption{Left: Testing loss and communication complexity of \textrm{VRLR} for different methods. C and U means using coreset or uniform sampling. The number in the parentheses denotes the sample size. Right: Testing loss of \textrm{VRLR} for different methods under multiple sample sizes.}
    \label{fig:ridge-res}
\end{figure}

\begin{figure}
    \centering
    \begin{subfigure}[b]{0.525\textwidth}
        \centering
        \includegraphics[width=\textwidth]{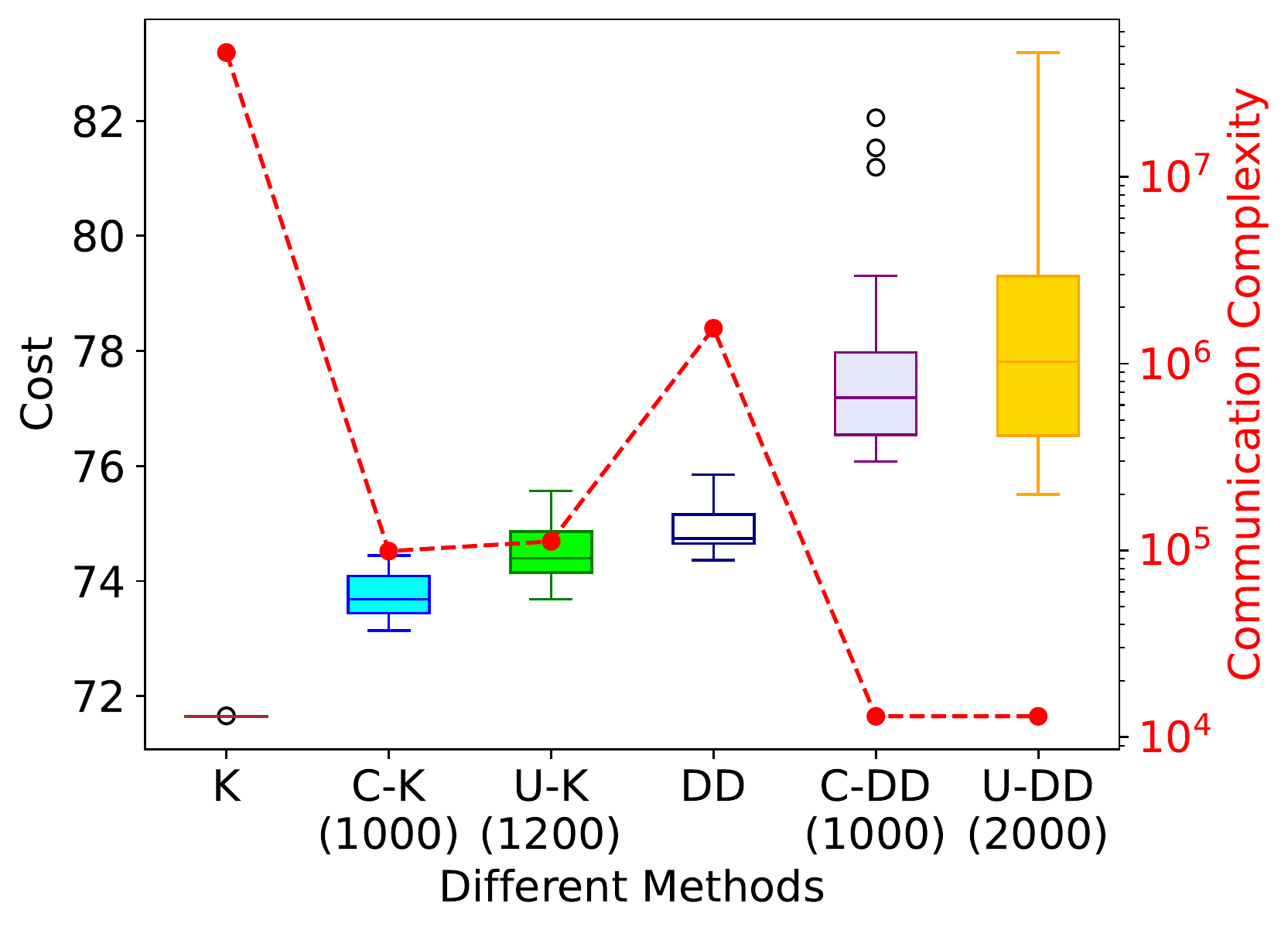}
    \end{subfigure}
    \hfill
    \begin{subfigure}[b]{0.465\textwidth}
        \centering
        \includegraphics[width=\textwidth]{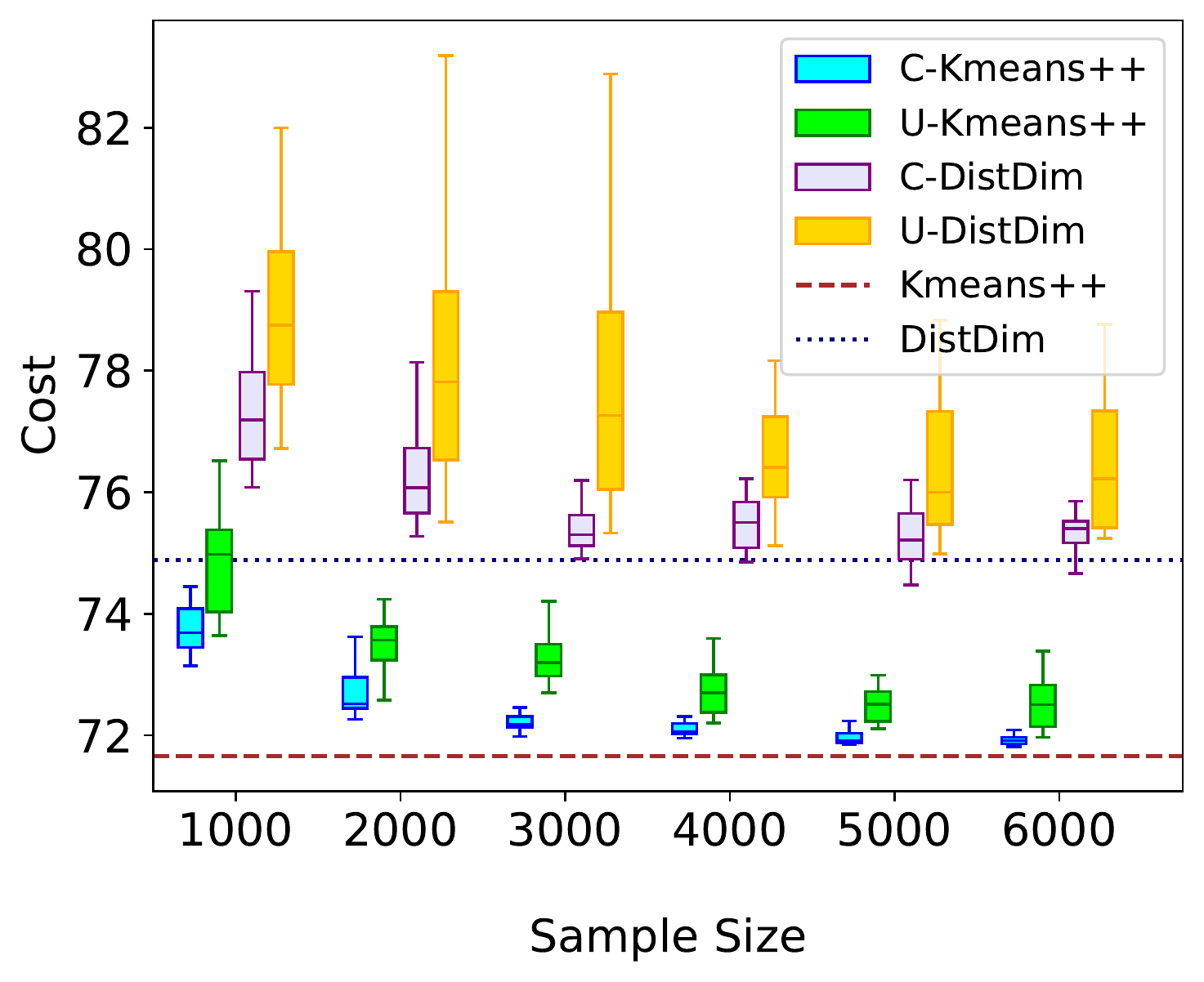}
    \end{subfigure}
    \caption{Left: Cost and communication complexity of \textrm{VKMC} for different methods. C and U means using coreset sampling or uniform sampling. The number in the parentheses denote the sample size. Right: Cost of \textrm{VKMC} for different methods under multiple sample sizes.}
    \label{fig:kmeans-res}
\end{figure}

\begin{table}[h]
\caption{Results of \textrm{VRLR} and \textrm{VKMC} on \dataset{YearPredictionMSD} dataset. Left: results for \textrm{VRLR}. Right: results for \textrm{VKMC}. The average and std. are computed using the 20 repeated experiments. The communication complexity denotes the average communication complexity, and the number in the parenthesis denotes the fraction of coreset construction (or uniform sampling respectively).}
\label{tab:res-yearprediction}
\vspace{1mm}
\begin{subtable}{.51\linewidth}
\footnotesize
 \centering
 \resizebox{\linewidth}{!}{%
 \begin{tabular}{ccccccc}
 \hline
 \multicolumn{2}{c}{\makecell{Alg \\ (size)}} & \multicolumn{1}{c}{\makecell{Cost \\ avg/std}} & \multicolumn{1}{c}{\makecell{Com. \\ compl.}} & Alg & \makecell{Cost \\ avg/std} & \makecell{Com. \\ compl.}\\
 \hline
 \multicolumn{2}{c}{\textsc{Central}} & 90.45/0.00 & 4.2e7  & & &\\
 \cline{1-7}
 1000 & \parbox[t]{2mm}{\multirow{6}{*}{\rotatebox[origin=c]{90}{\textsc{C-Central}}}} & 100.50/2.11 & 9.9e4(0.09) & \parbox[t]{2mm}{\multirow{6}{*}{\rotatebox[origin=c]{90}{\textsc{U-Central}}}} & 108.79/2.79 & 9.3e4(0.03) \\
 2000 & & 95.01/0.95 & 2.0e5(0.09) & & 99.65/1.01 & 1.9e5(0.03) \\
 3000 & & 93.39/0.63 & 3.0e5(0.09) & & 96.68/0.73 & 2.8e5(0.03) \\
 4000 & & 92.73/0.41 & 4.0e5(0.09) & & 95.32/0.65 & 3.7e5(0.03) \\
 5000 & & 92.42/0.33 & 5.0e5(0.09) & & 94.08/0.48 & 4.7e5(0.03) \\
 6000 & & 91.97/0.32 & 5.9e5(0.09) & & 93.54/0.44 & 5.6e5(0.03) \\
 \cline{1-7}
 \multicolumn{2}{c}{\textsc{SAGA}} & N/A & N/A & & &\\
 \cline{1-7}
 1000 & \parbox[t]{2mm}{\multirow{6}{*}{\rotatebox[origin=c]{90}{\textsc{C-SAGA}}}} & 105.93/2.17 & 6.9e7(<0.01) & \parbox[t]{2mm}{\multirow{6}{*}{\rotatebox[origin=c]{90}{\textsc{U-SAGA}}}} & 110.43/2.43 & 7.2e7(<0.01) \\
 2000 & & 99.55/0.96 & 1.4e8(<0.01) & & 102.96/1.09 & 1.6e8(<0.01) \\
 3000 & & 97.13/0.98 & 1.9e8(<0.01) & & 100.64/1.44 & 2.4e8(<0.01) \\
 4000 & & 95.90/1.04 & 2.5e8(<0.01) & & 99.10/1.20 & 3.3e8(<0.01) \\
 5000 & & 94.75/0.85 & 3.0e8(<0.01) & & 97.64/0.59 & 3.9e8(<0.01) \\
 6000 & & 94.83/0.65 & 3.6e8(<0.01) & & 96.88/1.12 & 4.6e8(<0.01) \\
 \hline
 \end{tabular}
 }
\end{subtable}
\begin{subtable}{.49\linewidth}
\footnotesize
 \centering
 \resizebox{\linewidth}{!}{%
 \begin{tabular}{ccccccc}
 \hline
 \multicolumn{2}{c}{\makecell{Alg \\ (size)}} & \multicolumn{1}{c}{\makecell{Cost \\ avg/std}} & \multicolumn{1}{c}{\makecell{Com. \\ compl.}} & Alg & \makecell{Cost \\ avg/std} & \makecell{Com. \\ compl.}\\
 \hline
 \multicolumn{2}{c}{\textsc{Kmeans++}} & 71.65/0.00 & 4.6e7  & & &\\
 \cline{1-7}
 1000 & \parbox[t]{2mm}{\multirow{6}{*}{\rotatebox[origin=c]{90}{\textsc{C-Kmeans++}}}} & 73.76/0.38 & 9.9e4(0.09) & \parbox[t]{2mm}{\multirow{6}{*}{\rotatebox[origin=c]{90}{\textsc{U-Kmeans++}}}} & 74.91/0.81 & 9.3e4(0.03) \\
 2000 & & 72.68/0.35 & 2.0e5(0.09) & & 73.52/0.44 & 1.9e5(0.03) \\
 3000 & & 72.23/0.17 & 3.0e5(0.09) & & 73.25/0.39 & 2.8e5(0.03) \\
 4000 & & 72.14/0.19 & 4.0e5(0.09) & & 72.74/0.39 & 3.7e5(0.03) \\
 5000 & & 71.97/0.13 & 5.0e5(0.09) & & 72.54/0.37 & 4.7e5(0.03) \\
 6000 & & 71.92/0.09 & 5.9e5(0.09) & & 72.57/0.44 & 5.6e5(0.03) \\
 \cline{1-7}
 \multicolumn{2}{c}{\textsc{DistDim}} & 74.89/0.00 & 1.5e6 & & &\\
 \cline{1-7}
 1000 & \parbox[t]{2mm}{\multirow{6}{*}{\rotatebox[origin=c]{90}{\textsc{C-DistDim}}}} & 77.75/1.78 & 1.3e4(0.70) & \parbox[t]{2mm}{\multirow{6}{*}{\rotatebox[origin=c]{90}{\textsc{U-DistDim}}}} & 78.87/1.44 & 7.0e3(0.43) \\
 2000 & & 76.82/0.85 & 2.5e4(0.72) & & 78.13/2.10 & 1.3e4(0.47) \\
 3000 & & 75.52/0.64 & 3.7e4(0.73) & & 77.85/2.23 & 1.9e4(0.48) \\
 4000 & & 75.49/0.45 & 4.9e4(0.74) & & 76.70/1.27 & 2.5e4(0.48) \\
 5000 & & 75.27/0.50 & 6.1e4(0.74) & & 76.38/1.17 & 3.1e4(0.49) \\
 6000 & & 75.32/0.33 & 7.3e4(0.74) & & 76.44/1.03 & 3.7e4(0.49) \\
 \hline
 \end{tabular}
 }
\end{subtable}
\end{table}

\paragraph{Coreset sampling performs close to the baseline with less communication.} From the results, we find that using our coreset can achieve a similar loss compared to the baseline, while the communication complexity is reduced drastically. 
Specifically by Table~\ref{tab:res-yearprediction}, our coreset algorithm \textsc{C-Central} can use less than 0.4\% of training data (2000/463715) and achieve a $95.01/90.45\approx 1.05$-approximate solution for \textrm{VRLR} compared to the baseline \textsc{Central}.
Observe that a larger coreset size leads to a smaller cost and a larger communication complexity. 
From Figures \ref{fig:ridge-res} and \ref{fig:kmeans-res} (left), using coresets can reduce 50-100x communication complexity compared with the original baselines.

\paragraph{Coreset performs better than uniform sampling under the same communication.} 
From Figures~\ref{fig:ridge-res} and \ref{fig:kmeans-res} (right), we observe that our coresets always achieve a better solution than uniform sampling under the same sample size. 
Table \ref{tab:res-yearprediction} also reflects this trend.
Under the same sample size, the communication complexity by uniform sampling is slightly lower than that of coreset, since there is no need to transfer weights in uniform sampling. 
Thus, we also compare the performance of our coresets and uniform sampling under the same communication complexity.
%
%
%
From Figures~\ref{fig:ridge-res} and \ref{fig:kmeans-res} (left), we find that for different baselines, our coreset algorithms still achieve better testing loss/training cost while using fewer or the same communication, compared to uniform sampling. 
%
%


\paragraph{Coreset and uniform sampling may also make the problem feasible.} It is also interesting to observe that \textsc{SAGA} will not converge (or very slowly) on the original \textrm{VRLR} problem (Table~\ref{tab:res-yearprediction}), possibly because of the large dataset and the ill-conditioned optimization problem. However, by applying the coreset/uniform sampling, \textsc{SAGA} works for \textrm{VRLR}. This also indicates the effectiveness of our framework and the importance to reduce the dependency on $n$ (the dataset size).

\section{Conclusion and Future Directions}
\label{sec:conclusion}

In this paper, we first consider coreset construction in the vertical federated learning setting.
We propose a unified coreset framework for communication-efficient VFL, and apply the framework to two important learning tasks: regularized linear regression and $k$-means clustering.
We verify the efficiency of our coreset algorithms both theoretically and empirically, which can drastically alleviate the communication complexity while still maintaining the solution quality.

Our work initializes the topic of introducing coresets to VFL, which leaves several future directions.
Firstly, our VFL coreset size is still larger than that of offline coresets for both \textrm{VRLR} and \textrm{VKMC}, even under certain data assumptions.
One direction is to further improve the coreset size.
Another interesting direction is to extend coreset construction to other learning tasks in the VFL setting, e.g., logistic regression or gradient boosting trees.

\bibliography{ref}
\bibliographystyle{abbrvnat}

\newpage
\section*{Checklist}

\begin{enumerate}

\item For all authors...
\begin{enumerate}
  \item Do the main claims made in the abstract and introduction accurately reflect the paper's contributions and scope?
    \answerYes{}
  \item Did you describe the limitations of your work?
    \answerYes{See Section~\ref{sec:conclusion}.}
  \item Did you discuss any potential negative societal impacts of your work?
    \answerNo{}
  \item Have you read the ethics review guidelines and ensured that your paper conforms to them?
    \answerYes{}
\end{enumerate}

\item If you are including theoretical results...
\begin{enumerate}
  \item Did you state the full set of assumptions of all theoretical results?
    \answerYes{See Assumption~\ref{ass:vrlr} and Assumption~\ref{ass:vkmc}, and we also provide justification of these data assumtions in Section~\ref{sec:justify}.}
        \item Did you include complete proofs of all theoretical results?
    \answerYes{All missing proofs can be found in the appendix.}
\end{enumerate}

\item If you ran experiments...
\begin{enumerate}
  \item Did you include the code, data, and instructions needed to reproduce the main experimental results (either in the supplemental material or as a URL)?
    \answerYes{See section \ref{sec:experiments} footnote.}
  \item Did you specify all the training details (e.g., data splits, hyperparameters, how they were chosen)?
    \answerYes{}
        \item Did you report error bars (e.g., with respect to the random seed after running experiments multiple times)?
    \answerYes{}
        \item Did you include the total amount of compute and the type of resources used (e.g., type of GPUs, internal cluster, or cloud provider)?
    \answerNo{All experiments are done using a single computer}
\end{enumerate}

\item If you are using existing assets (e.g., code, data, models) or curating/releasing new assets...
\begin{enumerate}
  \item If your work uses existing assets, did you cite the creators?
    \answerYes{See the baseline algorithms mentioned in Section~\ref{sec:experiments}.}
  \item Did you mention the license of the assets?
    \answerNA{}
  \item Did you include any new assets either in the supplemental material or as a URL?
    \answerYes{We provided the code for our experiments.}
  \item Did you discuss whether and how consent was obtained from people whose data you're using/curating?
    \answerNA{}
  \item Did you discuss whether the data you are using/curating contains personally identifiable information or offensive content?
    \answerNA{}
\end{enumerate}

\item If you used crowdsourcing or conducted research with human subjects...
\begin{enumerate}
  \item Did you include the full text of instructions given to participants and screenshots, if applicable?
    \answerNA{}
  \item Did you describe any potential participant risks, with links to Institutional Review Board (IRB) approvals, if applicable?
    \answerNA{}
  \item Did you include the estimated hourly wage paid to participants and the total amount spent on participant compensation?
    \answerNA{}
\end{enumerate}

\end{enumerate}


\newpage
\appendix
\section*{Appendix}

\section{Additional Experiments}

In this section, we present some additional experiments. This section is organized as follow: in Section \ref{sec:exp-parties}, we conduct experiments using different number of parties (as opposed to three parties in Section \ref{sec:experiments}); in Section \ref{sec:exp-reg}, we test our methods using other regularizer for \textrm{VRLR}, e.g., Lasso; in Section \ref{sec:exp-centers}, we test our methods in \textrm{VKMC} with different number of centers; and finally in Section \ref{sec:exp-dataset}, we conduct experiments on another dataset (\dataset{KC House} Dataset~\citep{kagglekc}).

\subsection{Different number of parties}\label{sec:exp-parties}

In this section, we test our algorithms using different number of parties. We choose to use five parties ($T=5$) in this section instead of three parties in Section \ref{sec:experiments}.

\paragraph{Empirical setup} Most of the experimental setups are the same as those in Section \ref{sec:experiments}, except that now we use 5 parties instead of 3 parties. There are 90 dimensions for a single data in \dataset{YearPredictionMSD} dataset, and we let each party hold 18 dimensions. Besides, changing the number of parties does not affect the performance of \algname{U-Central} and \algname{U-SAGA} (but the number of communication will change due to different number of parties), and we reuse the results from Section \ref{sec:experiments} and recalculate the number of communications.

\paragraph{Empirical results} Figure \ref{fig:ridge-res-5p} and \ref{fig:kmeans-res-5p} summarize our results for \textrm{VRLR} and \textrm{VKMC} respectively. Note that all the observations in Section \ref{sec:experiments} hold for 5 parties.

\begin{figure}[!h]
    \centering
    \begin{subfigure}[b]{0.525\textwidth}
        \centering
        \includegraphics[width=\textwidth]{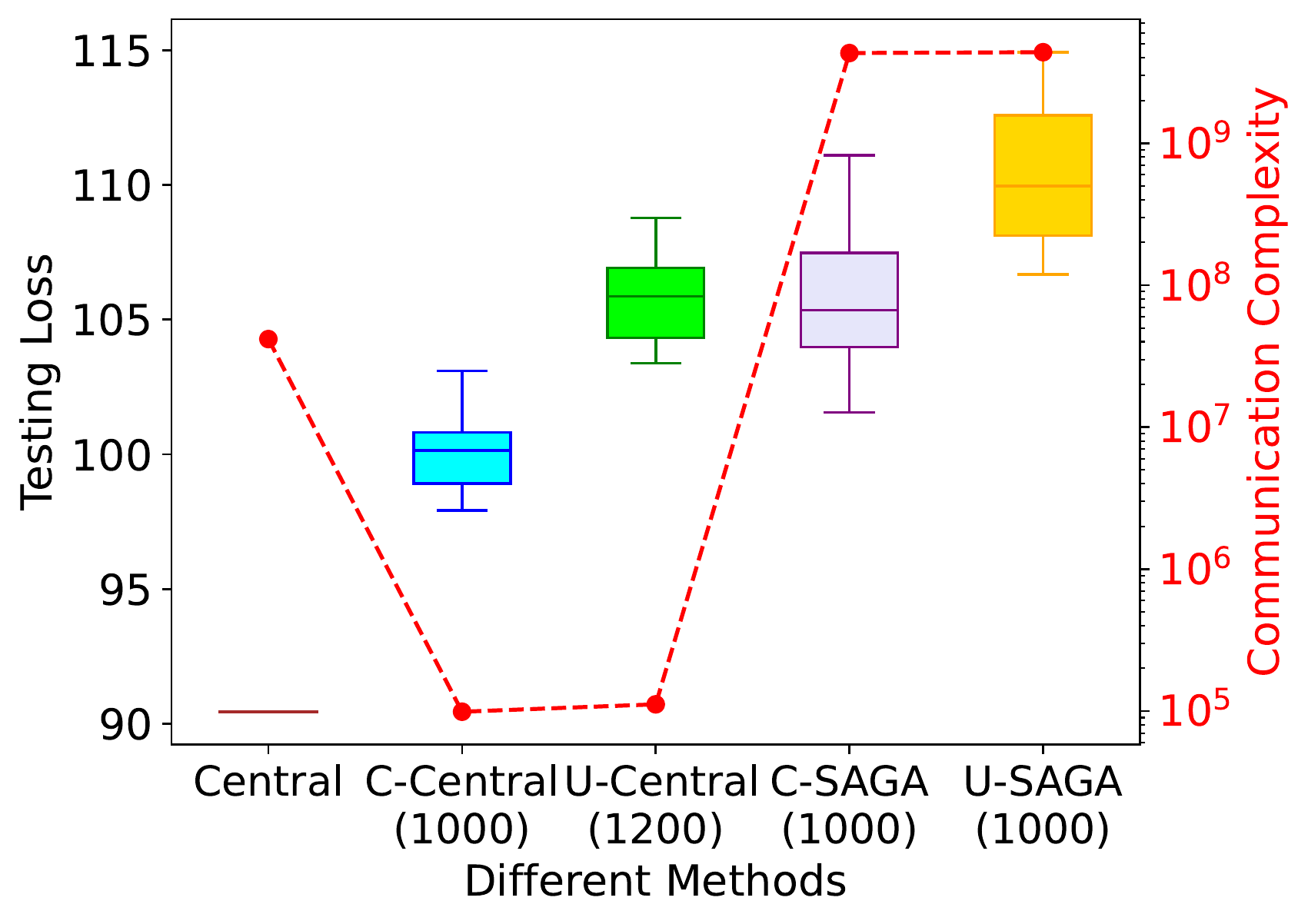}.
    \end{subfigure}
    \hfill
    \begin{subfigure}[b]{0.465\textwidth}
        \centering
        \includegraphics[width=\textwidth]{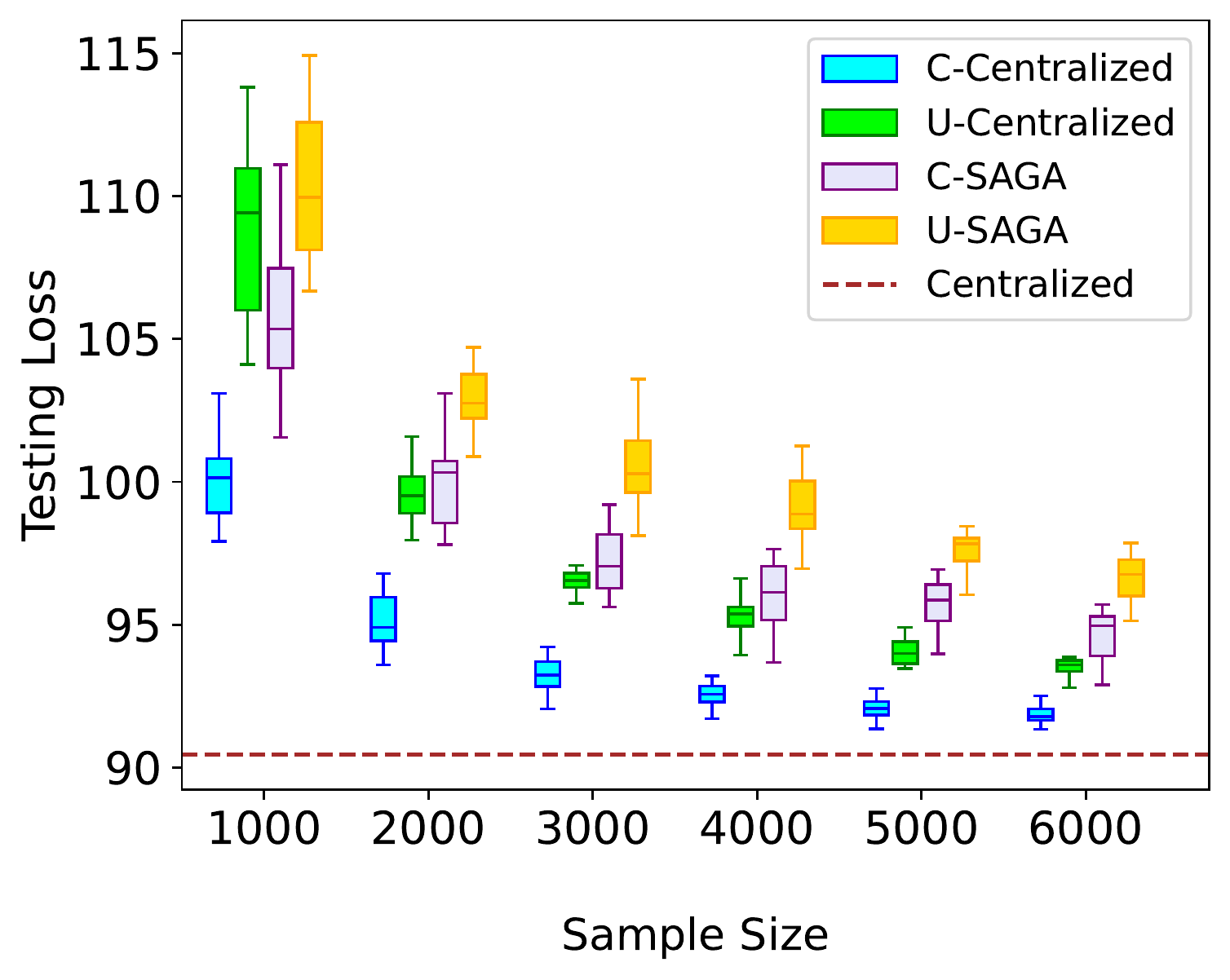}
    \end{subfigure}
    \caption{\emph{Results for 5 parties (Section \ref{sec:exp-parties})} Left: Testing loss and communication complexity of \textrm{VRLR} for different methods. C and U means using coreset or uniform sampling. The number in the parentheses denote the sample size. Right: Testing loss of \textrm{VRLR} for different methods under multiple sample sizes.}
    \label{fig:ridge-res-5p}
\end{figure}

\begin{figure}
    \centering
    \begin{subfigure}[b]{0.525\textwidth}
        \centering
        \includegraphics[width=\textwidth]{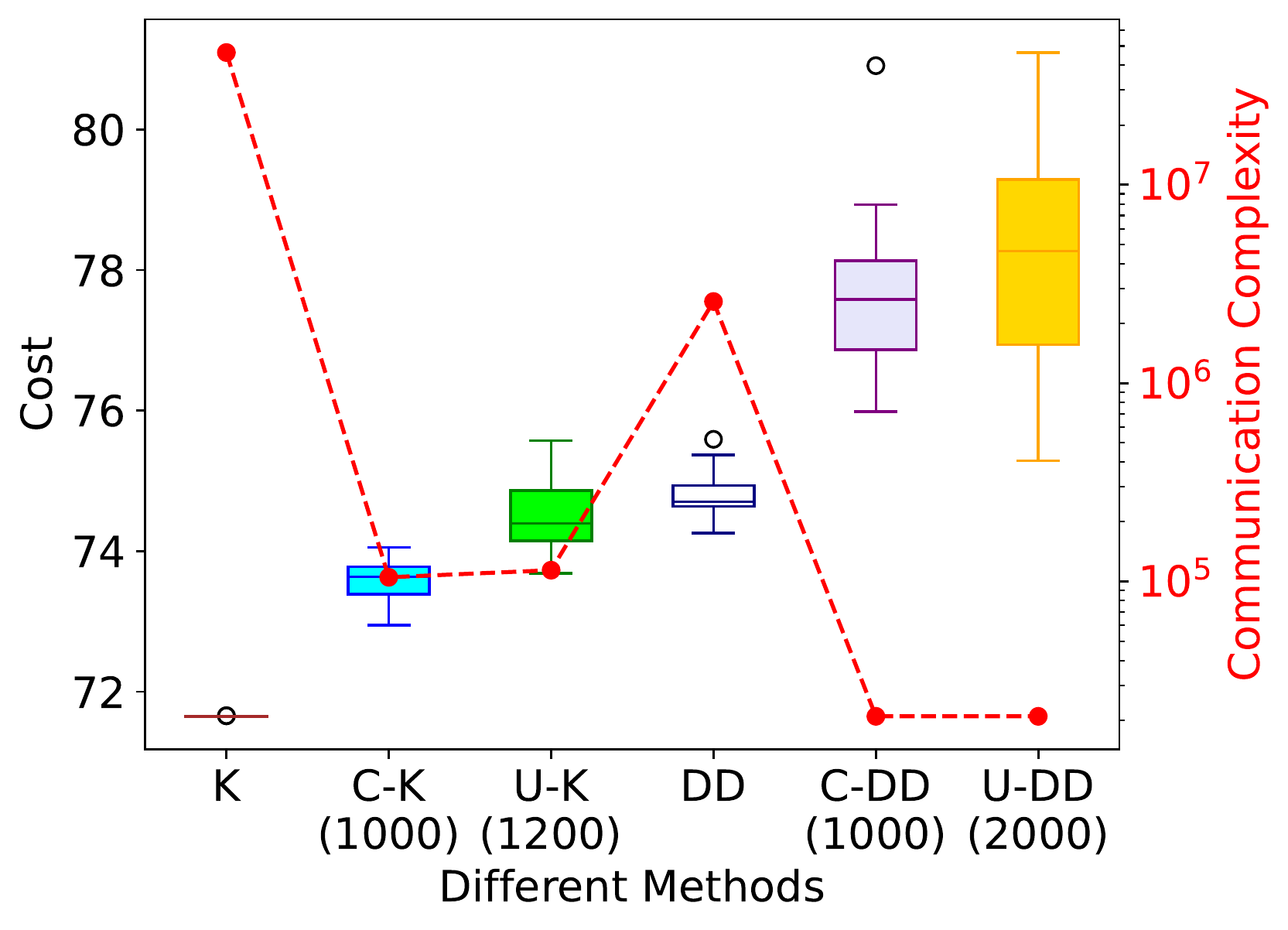}
    \end{subfigure}
    \hfill
    \begin{subfigure}[b]{0.465\textwidth}
        \centering
        \includegraphics[width=\textwidth]{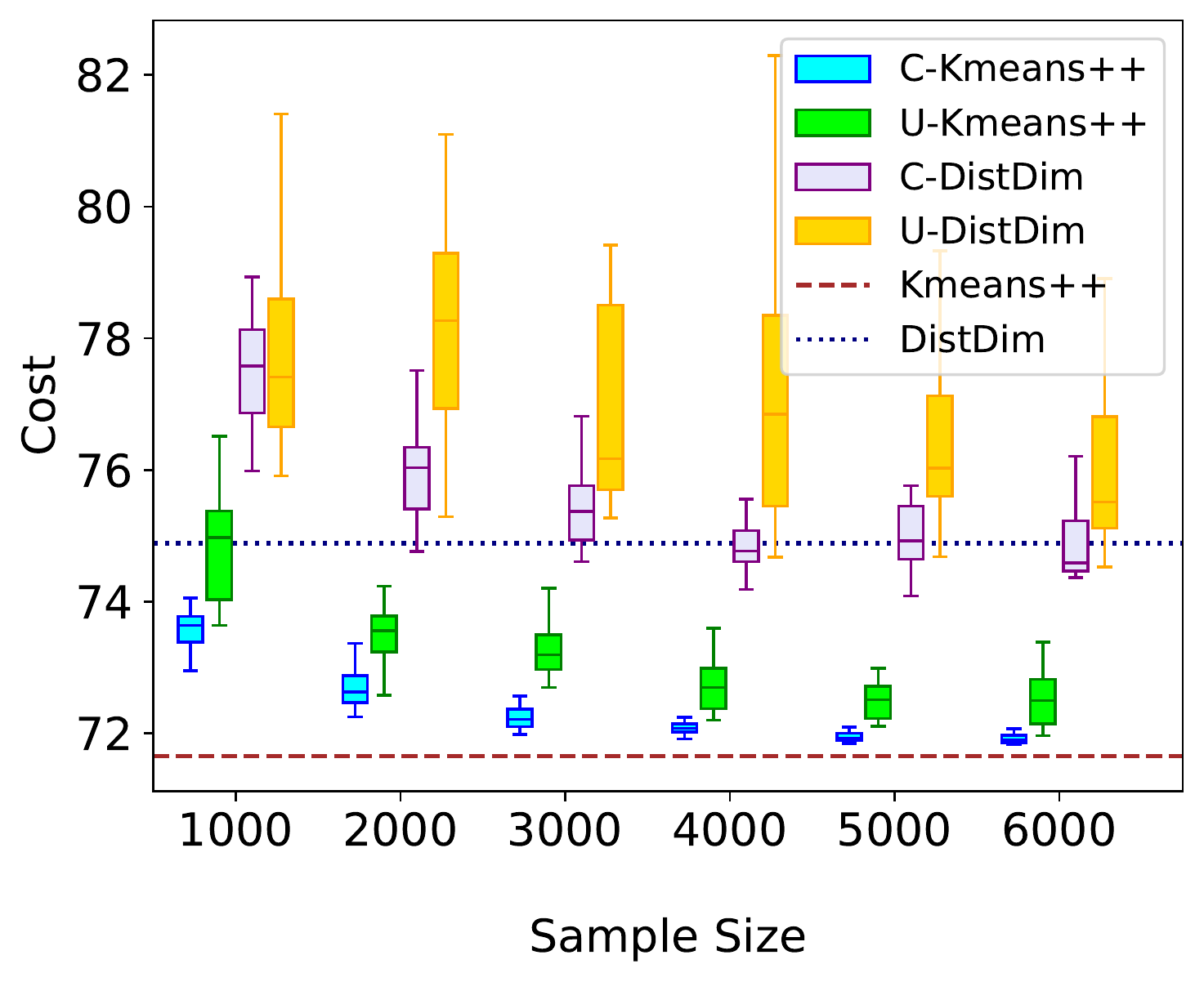}
    \end{subfigure}
    \caption{\emph{Results for 5 parties (Section \ref{sec:exp-parties})} Left: Cost and communication complexity of \textrm{VKMC} for different methods. C and U means using coreset sampling or uniform sampling. The number in the parentheses denote the sample size. Right: Cost of \textrm{VKMC} for different methods under multiple sample sizes.}
    \label{fig:kmeans-res-5p}
\end{figure}

\subsection{Different regularizer for \textrm{VRLR}}\label{sec:exp-reg}
In this part, we consider using different regularizers in \textrm{VRLR}.

\paragraph{Empirical setup} We consider three different regression problems: plain linear regression, Lasso regression, and elastic nets. In Section \ref{sec:experiments}, we consider the Ridge regression ($R(\vtheta) = 0.1n \norm{\vtheta}_2^2$ where $n$ is the dataset size), and in this part, linear regression denotes the optimization problem where $R(\vtheta) = 0$, Lasso regression denotes the problem where $R(\vtheta) = 2n \norm{\vtheta}_1$, and elastic net denotes the problem where $R(\vtheta) = 2n \norm{\vtheta}_1 + n\norm{\vtheta}_2^2$. All the experiments setup remains the same as Section \ref{sec:experiments}, except the for Lasso regression and elastic nets, there is no \algname{SAGA} solver and we only compare \algname{C-Central} and \algname{U-Central} with \algname{Central}.

\paragraph{Empirical results} We plot the training loss instead of the testing loss since we are comparing different objective functions. Figure \ref{fig:lr-res}, \ref{fig:lasso-res}, and \ref{fig:en-res} show the empirical results in this part. Note that all the observations in Section \ref{sec:experiments} also hold: (1) coreset sampling and uniform sampling can drastically reduce the communication complexity where nearly maintain the solution performance, and (2) coreset performs better than uniform sampling under the same number of communication.

\begin{figure}
    \centering
    \begin{subfigure}[b]{0.525\textwidth}
        \centering
        \includegraphics[width=\textwidth]{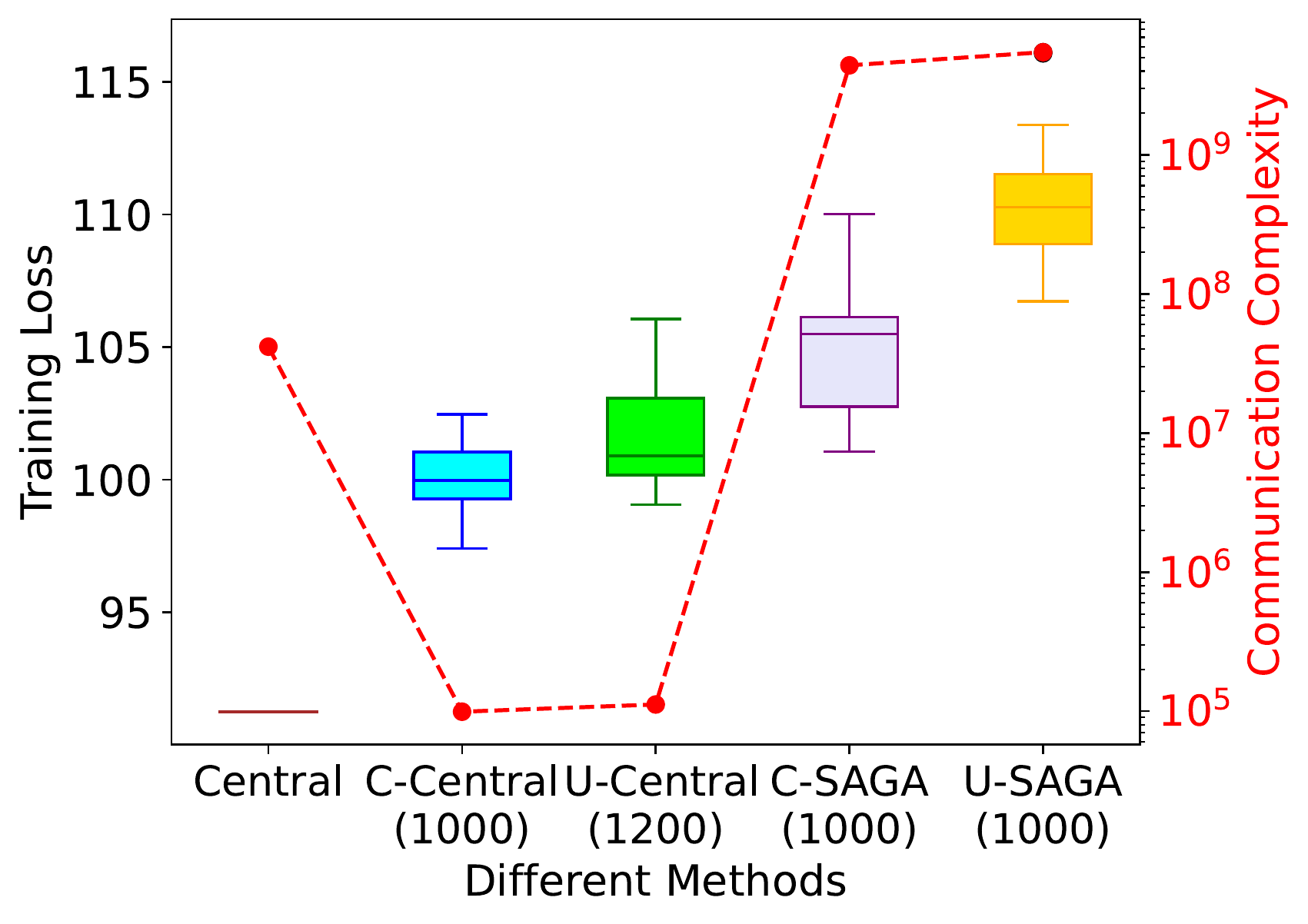}.
    \end{subfigure}
    \hfill
    \begin{subfigure}[b]{0.465\textwidth}
        \centering
        \includegraphics[width=\textwidth]{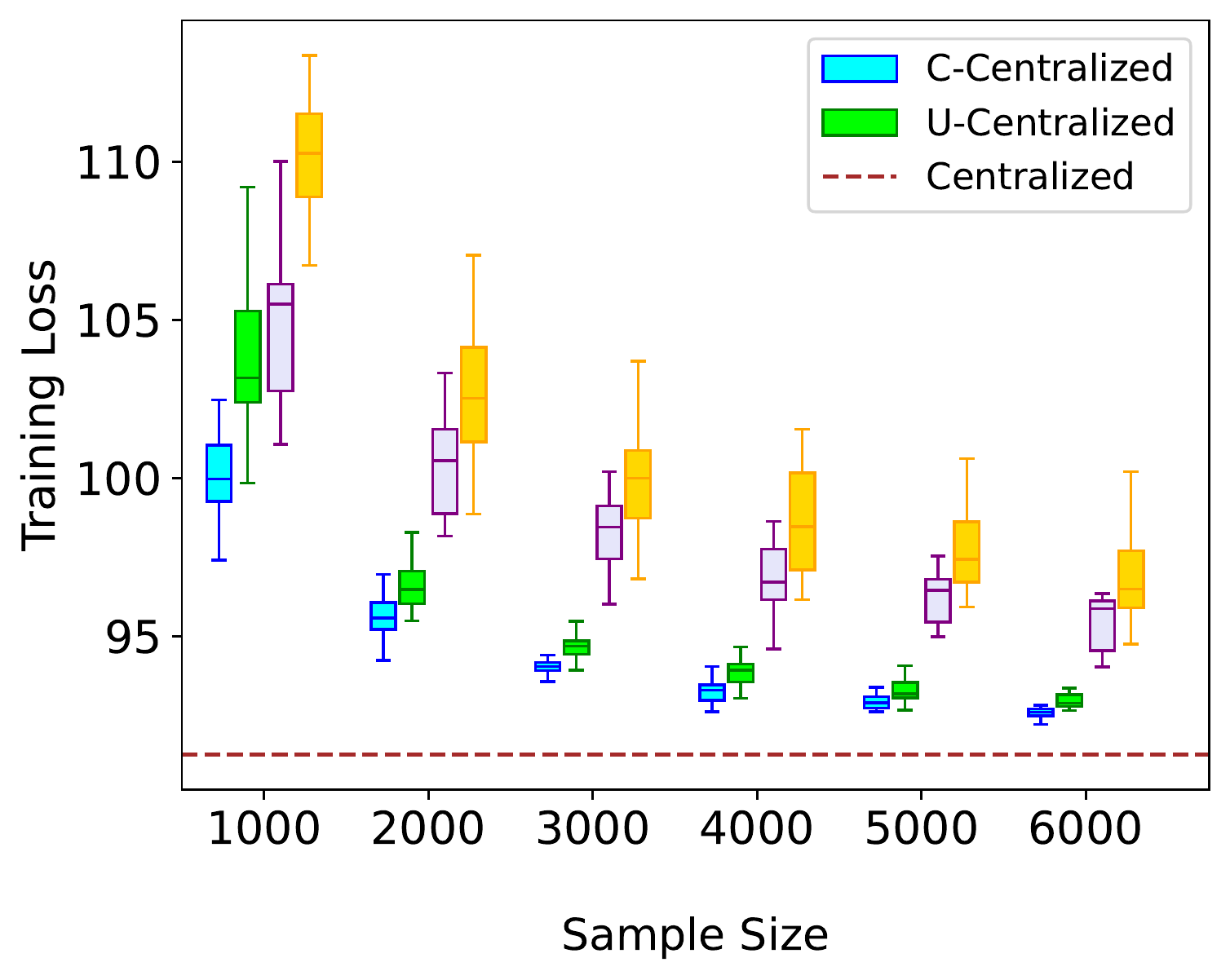}
    \end{subfigure}
    \caption{\emph{Results for linear regression (Section \ref{sec:exp-reg})} Left: Training loss and communication complexity of \textrm{VRLR} for different methods. C and U means using coreset or uniform sampling. The number in the parentheses denote the sample size. Right: Testing loss of \textrm{VRLR} for different methods under multiple sample sizes.}
    \label{fig:lr-res}
\end{figure}

\begin{figure}
    \centering
    \begin{subfigure}[b]{0.525\textwidth}
        \centering
        \includegraphics[width=\textwidth]{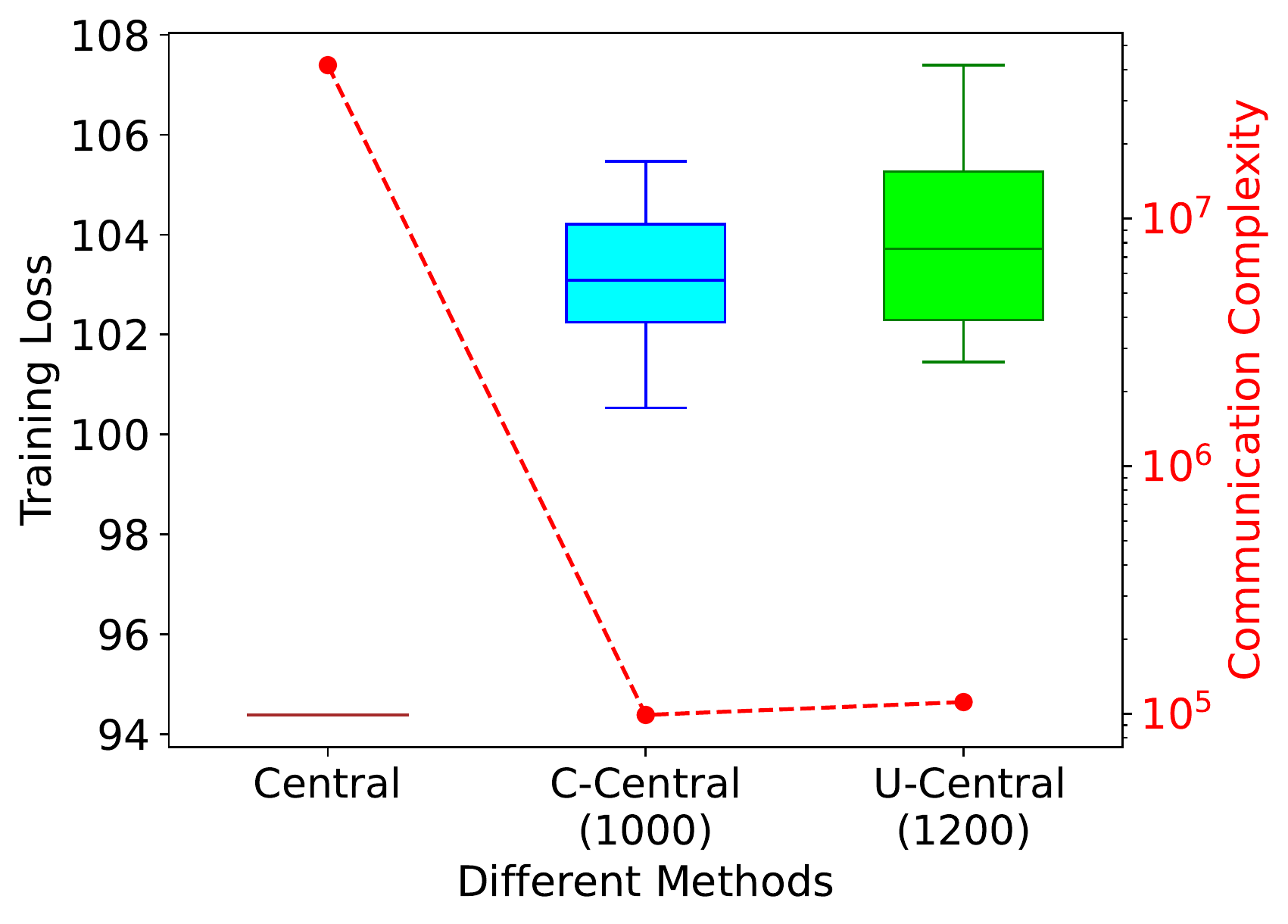}.
    \end{subfigure}
    \hfill
    \begin{subfigure}[b]{0.465\textwidth}
        \centering
        \includegraphics[width=\textwidth]{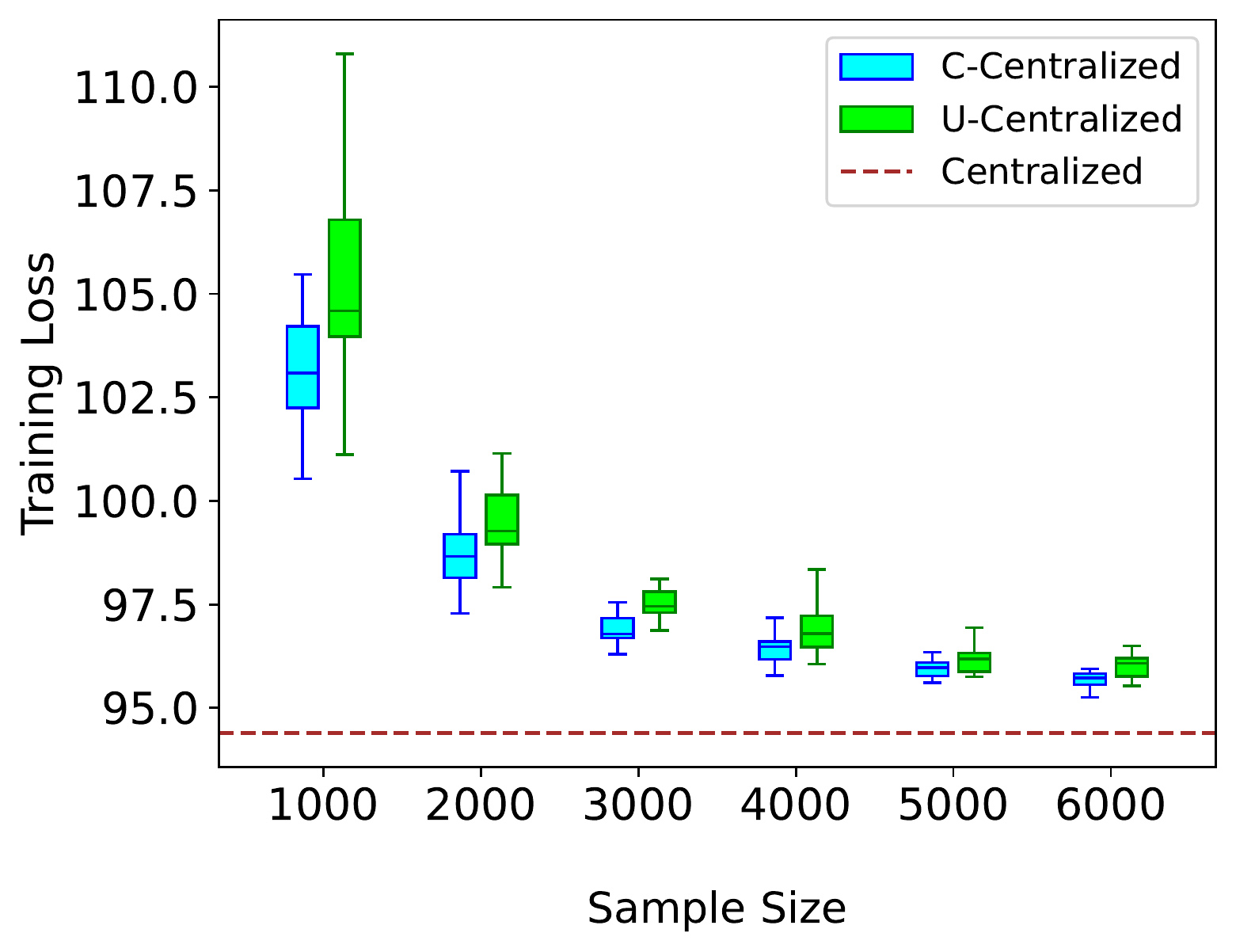}
    \end{subfigure}
    \caption{\emph{Results for Lasso regression (Section \ref{sec:exp-reg})} Left: Training loss and communication complexity of \textrm{VRLR} for different methods. C and U means using coreset or uniform sampling. The number in the parentheses denote the sample size. Right: Testing loss of \textrm{VRLR} for different methods under multiple sample sizes.}
    \label{fig:lasso-res}
\end{figure}

\begin{figure}
    \centering
    \begin{subfigure}[b]{0.525\textwidth}
        \centering
        \includegraphics[width=\textwidth]{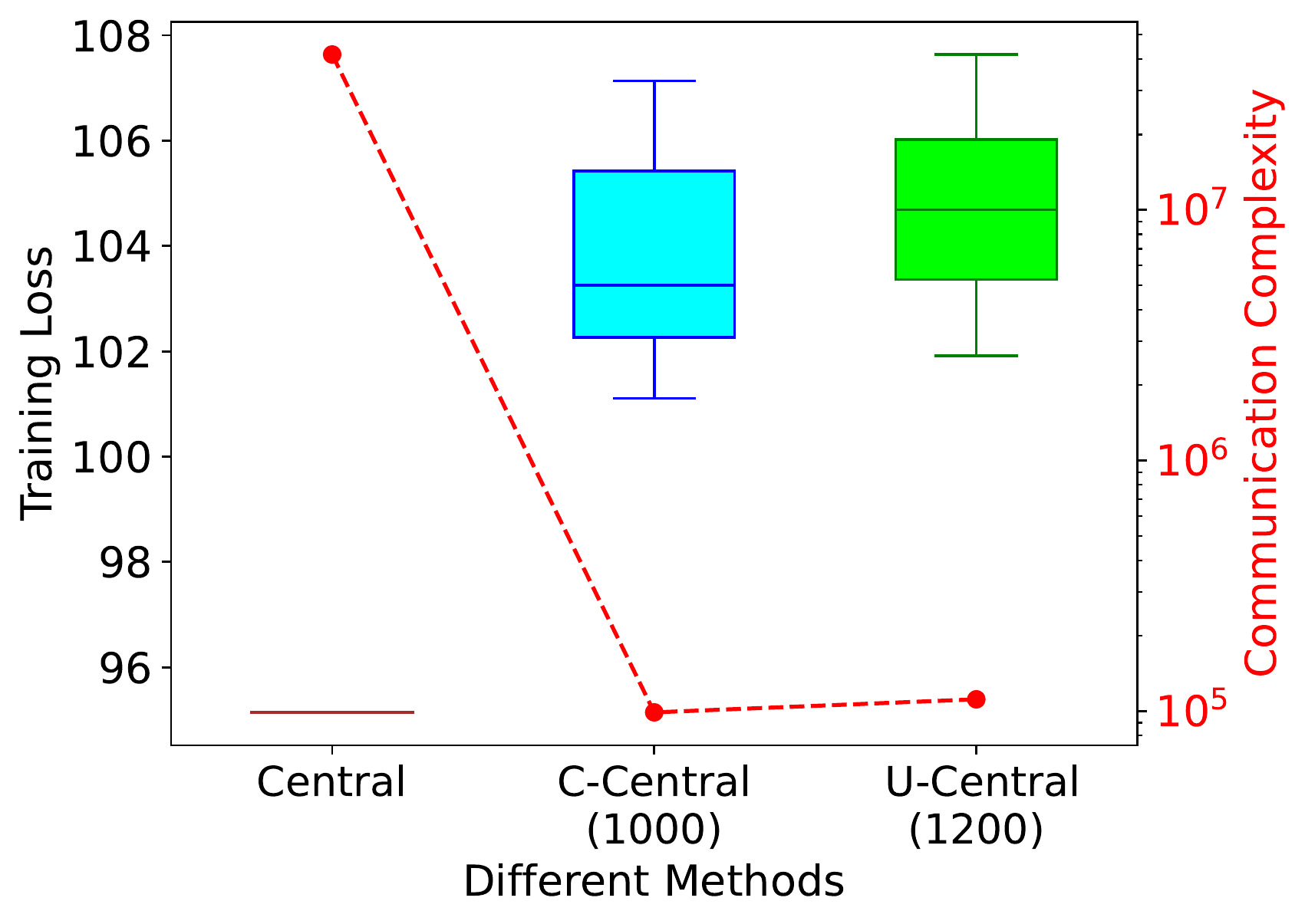}.
    \end{subfigure}
    \hfill
    \begin{subfigure}[b]{0.465\textwidth}
        \centering
        \includegraphics[width=\textwidth]{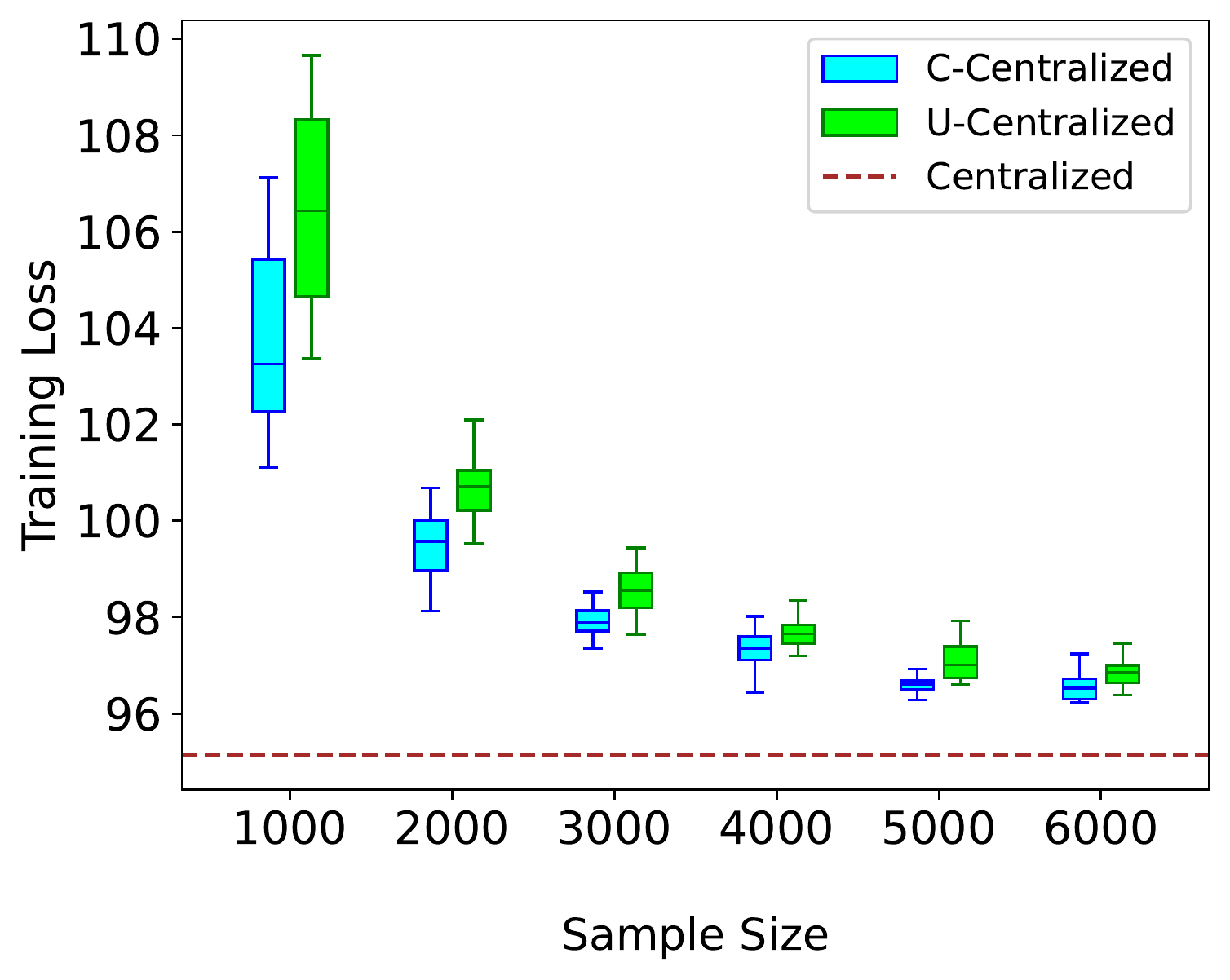}
    \end{subfigure}
    \caption{\emph{Results for elastic net (Section \ref{sec:exp-reg})} Left: Training loss and communication complexity of \textrm{VRLR} for different methods. C and U means using coreset or uniform sampling. The number in the parentheses denote the sample size. Right: Testing loss of \textrm{VRLR} for different methods under multiple sample sizes.}
    \label{fig:en-res}
\end{figure}

\subsection{Different number of centers for \textrm{VKMC}}\label{sec:exp-centers}
In this section we test our methods on \textrm{VKMC} using different number of centers.

\paragraph{Empirical setup} The experimental setup in this part is the same as the setup in Section \ref{sec:experiments} for \textrm{VKMC}, except that we are using 5 centers instead of 10 centers.

\paragraph{Empirical results} Figure \ref{fig:kmeans-res-5c} summarizes the result. All the observations in Section \ref{sec:experiments} also hold.

\begin{figure}
    \centering
    \begin{subfigure}[b]{0.525\textwidth}
        \centering
        \includegraphics[width=\textwidth]{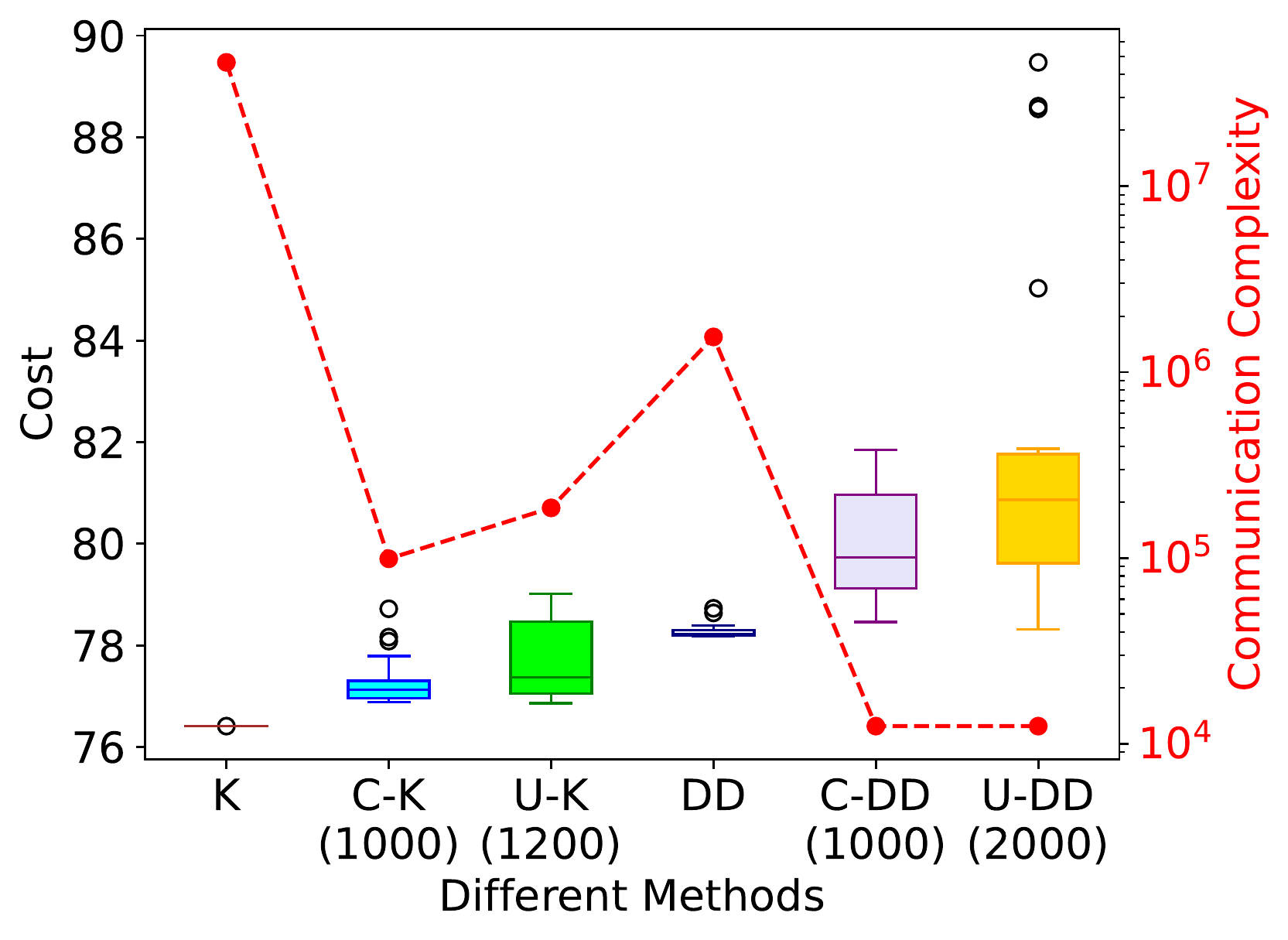}
    \end{subfigure}
    \hfill
    \begin{subfigure}[b]{0.465\textwidth}
        \centering
        \includegraphics[width=\textwidth]{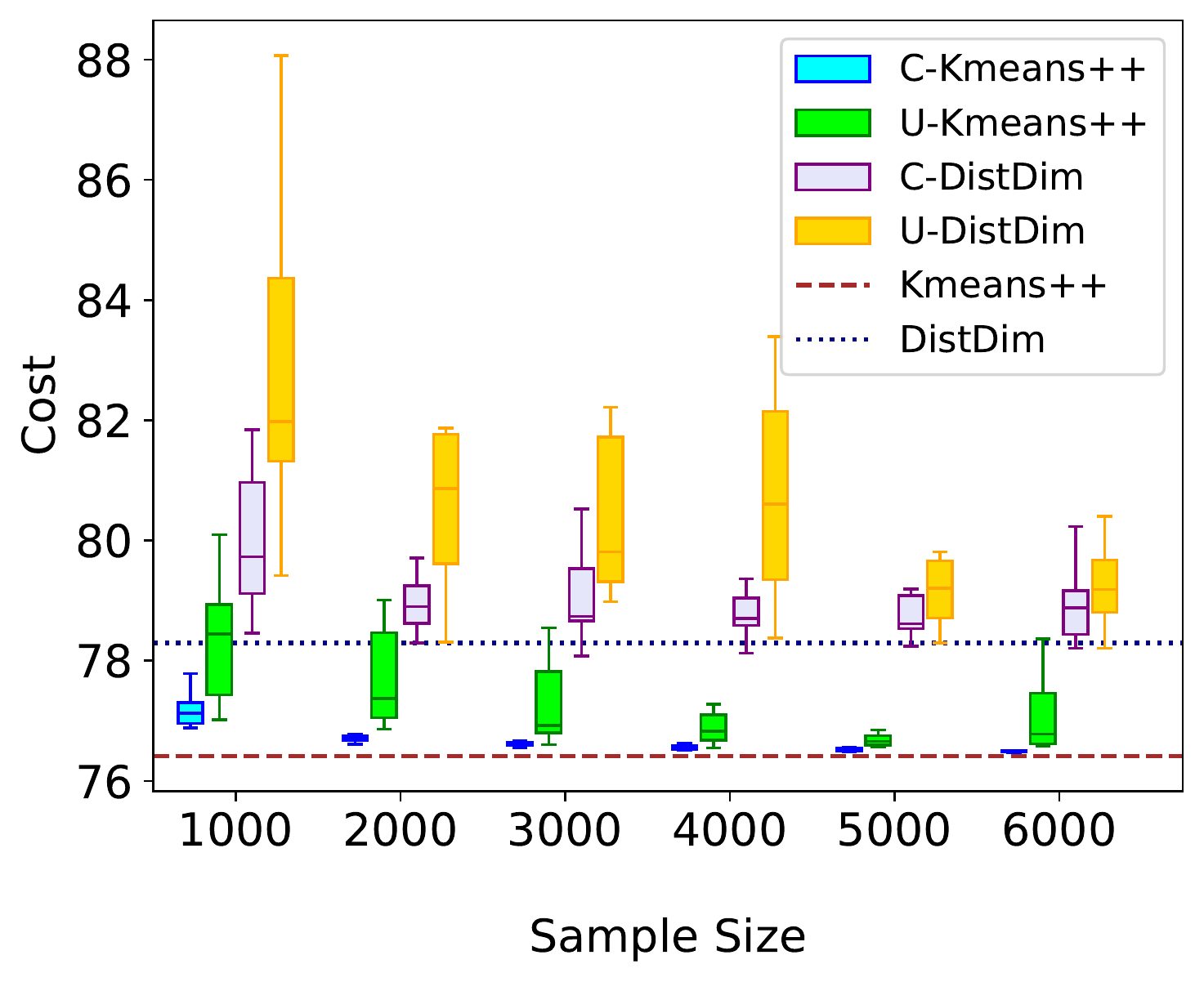}
    \end{subfigure}
    \caption{\emph{Results for \textrm{VKMC} with 5 centers (Section \ref{sec:exp-centers})} Left: Cost and communication complexity of \textrm{VKMC} for different methods. C and U means using coreset sampling or uniform sampling. The number in the parentheses denote the sample size. Right: Cost of \textrm{VKMC} for different methods under multiple sample sizes.}
    \label{fig:kmeans-res-5c}
\end{figure}

\subsection{Experiments on other datasets}\label{sec:exp-dataset}

In this section, we present the experiment results on another dataset. We choose the \dataset{KC House} Dataset~\citep{kagglekc} for both \textrm{VRLR} and \textrm{VKMC}.

\paragraph{Empirical setup} Our experiment setup is nearly the same as the setup in Section \ref{sec:experiments}. However, there are a few differences: (1) the dataset we use is \dataset{KC House} Dataset~\citep{kagglekc}, which contains 21613 data points and each datapoint constains 18 features and a label; (2) we conduct the experiment using only two parties because the limited number of features, we put the first nine features on the first party and the remaining on the second; and (3), we do not consider regularizer for \textrm{VRLR} (plain linear regression). Also note that similar to Section \ref{sec:experiments}, we normalize each feature to have standard deviation 1 during the clustering task.

\paragraph{Empirical results} For \textrm{VRLR}, we plot the training loss instead of the testing loss, since the dataset is not so large and coreset does not have theoretical guarantee for generalization error. Figure \ref{fig:ridge-res-kc} and \ref{fig:kmeans-res-kc} summarize our results for \textrm{VRLR} and \textrm{VKMC} respectively. From the results, we still find that our coreset construction method can outperform uniform sampling, and both of them can drastically reduce the communication complexity compared with the original baselines.

Note that in Figure \ref{fig:ridge-res-kc}, \algname{C-SAGA} and \algname{U-SAGA} performs much worse than the baseline \algname{Central}. However, \algname{C-Central} can perform much better and has similar performance as \algname{Central}, and this phenomenon may attribute to the fact that this problem is hard to solve by \algname{SAGA} algorithm, and using other second-order methods~\citep{yang2019quasi} may help. Also note that when the size is small (100 and 200), \algname{U-Central} may produce ``ridiculous'' solutions and the cost blows up.

\begin{figure}
    \centering
    \begin{subfigure}[b]{0.525\textwidth}
        \centering
        \includegraphics[width=\textwidth]{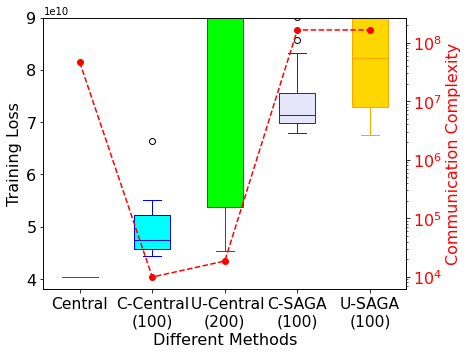}.
    \end{subfigure}
    \hfill
    \begin{subfigure}[b]{0.465\textwidth}
        \centering
        \includegraphics[width=\textwidth]{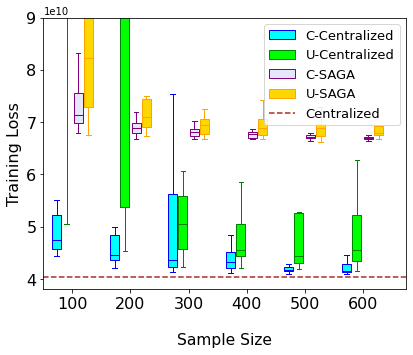}
    \end{subfigure}
    \caption{\emph{Results for \dataset{KC House} dataset (Section \ref{sec:exp-dataset})} Left: Training loss and communication complexity of \textrm{VRLR} for different methods. C and U means using coreset or uniform sampling. The number in the parentheses denote the sample size. Right: Testing loss of \textrm{VRLR} for different methods under multiple sample sizes.}
    \label{fig:ridge-res-kc}
\end{figure}

\begin{figure}
    \centering
    \begin{subfigure}[b]{0.51\textwidth}
        \centering
        \includegraphics[width=\textwidth]{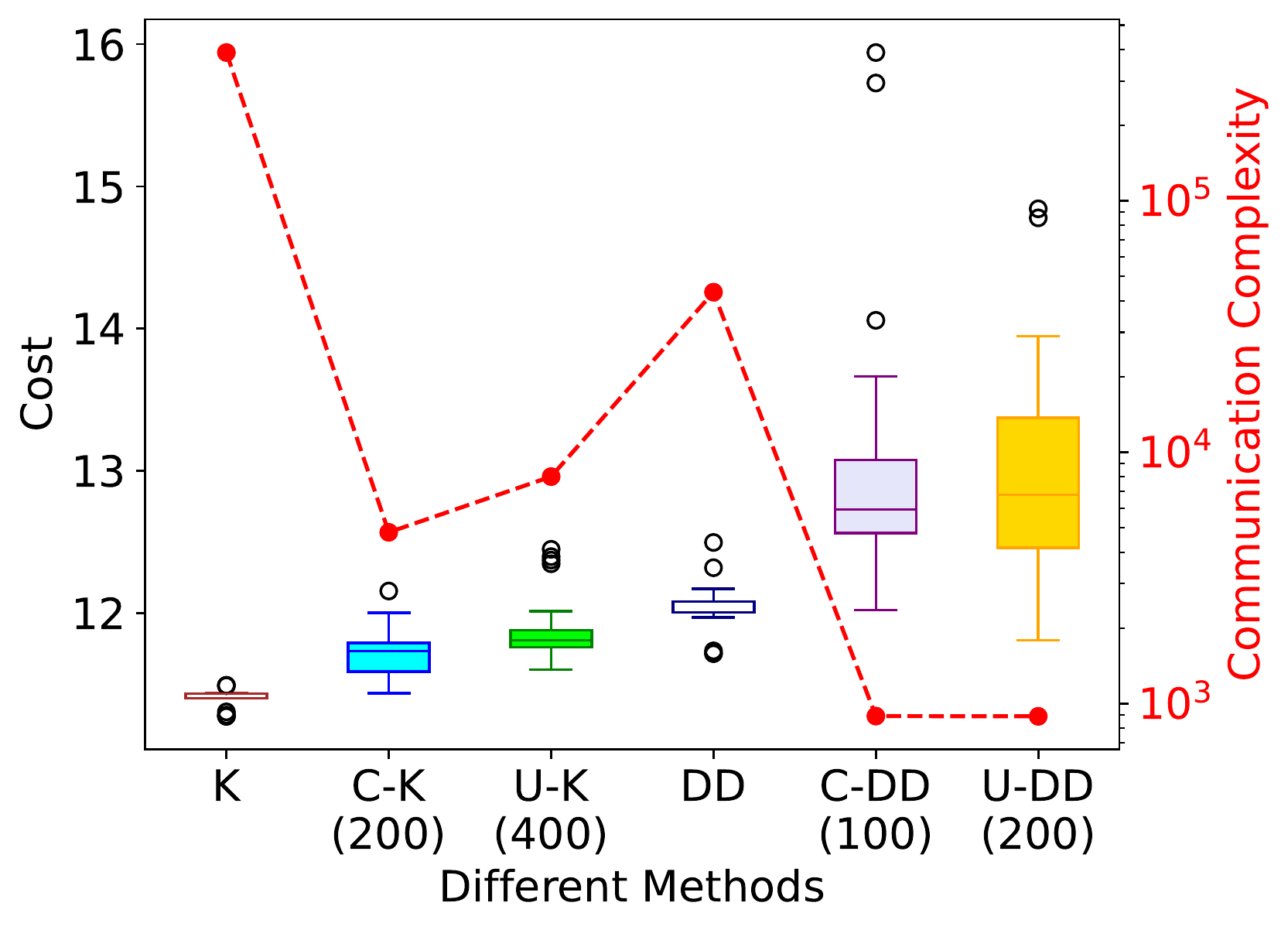}
    \end{subfigure}
    \hfill
    \begin{subfigure}[b]{0.47\textwidth}
        \centering
        \includegraphics[width=\textwidth]{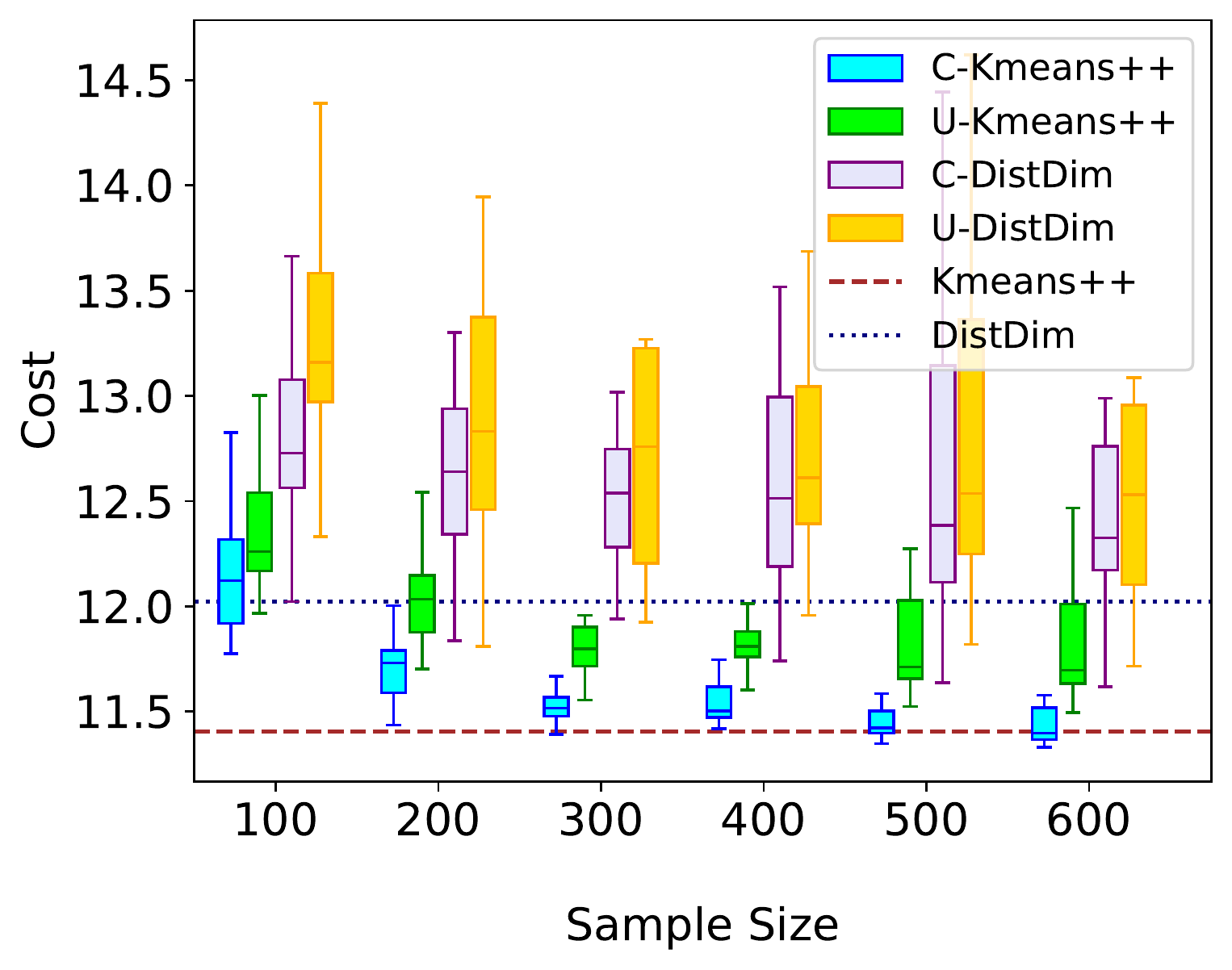}
    \end{subfigure}
    \caption{\emph{Results for \dataset{KC House} dataset (Section \ref{sec:exp-dataset})} Left: Cost and communication complexity of \textrm{VKMC} for different methods. C and U means using coreset sampling or uniform sampling. The number in the parentheses denote the sample size. Right: Cost of \textrm{VKMC} for different methods under multiple sample sizes.}
    \label{fig:kmeans-res-kc}
\end{figure}

\section{Justification of Data Assumptions}\label{sec:justify}
In this section, we justify our data assumptions in Section \ref{sec:vrlr} (Assumption \ref{ass:vrlr}) and Section \ref{sec:vkmc} (Assumption \ref{ass:vkmc}). We show that in the \textit{smoothed analysis} regime, Assumption \ref{ass:vrlr} and \ref{ass:vkmc} are easy to satisfy with some standard assumptions. In Section \ref{sec:justify-vrlr}, we show the results related to Assumption \ref{ass:vrlr}, and in Section \ref{sec:justify-vkmc}, we justify Assumption \ref{ass:vkmc}.

\subsection{Justification of Assumption \ref{ass:vrlr}}\label{sec:justify-vrlr}
In this section, we interpret and justify Assumption \ref{ass:vrlr}. First, we recall Assumption \ref{ass:vrlr}.

\assvrlr*

Assumption \ref{ass:vrlr} requires that the subspace generated by any party cannot be included in the subspace generated by all other parties. However, it it not sure what standard assumptions can lead to Assumption \ref{ass:vrlr}. The following lemma shows that, $\sigma_{\min}(\mU)$ can be lower bounded by th smallest and largest singular value of matrix $\mX' = [\mX, \vy]$.

\begin{lemma}\label{lem:ass-vrlr-help}
    If matrix $\mX' = [\mX, \vy]$ has smallest singular value $\sigma_{\min}(\mX') > 0$ and largest singular value $\sigma_{\max}(\mX')$, we have
    \[\sigma_{\min}(\mU) \ge \frac{\sigma_{\min}(\mX')}{\sigma_{\max}(\mX')}.\]
\end{lemma}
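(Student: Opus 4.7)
The plan is to factor $\mX'$ through $\mU$ and reduce the claim to standard singular-value bounds for matrix products. Since each $\mU^{(j)}$ has orthonormal columns spanning the column space of $\mX^{(j)}$ (respectively $[\mX^{(T)}, \vy]$ for $j = T$), I can write $\mX^{(j)} = \mU^{(j)} \mR^{(j)}$ with $\mR^{(j)} = (\mU^{(j)})^\top \mX^{(j)}$, and concatenating horizontally yields $\mX' = \mU \mR$ where $\mR$ is block-diagonal with blocks $\mR^{(1)}, \dots, \mR^{(T)}$. The hypothesis $\sigma_{\min}(\mX') > 0$ means $\mX'$ has full column rank, which forces each $\mR^{(j)}$ (and hence $\mR$) to be square and invertible.

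Next I would lower-bound $\sigma_{\min}(\mU)$ via this factorization. For an arbitrary unit vector $\vv$, set $\vu = \mR^{-1} \vv$ so that $\mU \vv = \mX' \vu$; then
\[
\norm{\mU \vv} = \norm{\mX' \vu} \,\ge\, \sigma_{\min}(\mX') \cdot \norm{\vu} \,\ge\, \frac{\sigma_{\min}(\mX')}{\sigma_{\max}(\mR)},
\]
where the last step uses $\norm{\mR^{-1} \vv} \ge 1/\sigma_{\max}(\mR)$. Taking the infimum over unit $\vv$ gives $\sigma_{\min}(\mU) \ge \sigma_{\min}(\mX')/\sigma_{\max}(\mR)$.

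It then remains to argue $\sigma_{\max}(\mR) \le \sigma_{\max}(\mX')$. Because $\mR$ is block-diagonal, its singular values are the union of the singular values of its blocks, so $\sigma_{\max}(\mR) = \max_{j \in [T]} \sigma_{\max}(\mR^{(j)})$. Using $\mR^{(j)} = (\mU^{(j)})^\top \mX^{(j)}$ together with the fact that $\mU^{(j)}$ has orthonormal columns (hence $\|(\mU^{(j)})^\top\|_{\mathrm{op}} \le 1$), I get $\sigma_{\max}(\mR^{(j)}) \le \sigma_{\max}(\mX^{(j)}) \le \sigma_{\max}(\mX')$, where the last inequality uses that $\mX^{(j)}$ is a column-submatrix of $\mX'$; the case $j = T$ is identical with $\mX^{(T)}$ replaced by $[\mX^{(T)}, \vy]$. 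Combining the two bounds yields $\sigma_{\min}(\mU) \ge \sigma_{\min}(\mX')/\sigma_{\max}(\mX')$.

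The proof is essentially a sequence of standard singular-value manipulations; the one place to be careful is exploiting the block-diagonal structure of $\mR$, which is exactly what lets me upper-bound $\sigma_{\max}(\mR)$ by $\sigma_{\max}(\mX')$ without paying a factor like $\sqrt{T}$. Note that the blocks $\mU^{(j)}$ are not mutually orthogonal in general, which is why $\sigma_{\min}(\mU)$ can be strictly smaller than $1$; Assumption~\ref{ass:vrlr} is precisely the requirement that this quantity stays bounded away from zero, and the lemma shows that a well-conditioned $\mX'$ suffices to guarantee it.
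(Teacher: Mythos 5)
Your proof is correct and follows essentially the same route as the paper's: factor $\mX' = \mU\mR$ with $\mR$ block-diagonal and invertible, lower-bound $\norm{\mU\vv} = \norm{\mX'\mR^{-1}\vv}$ by $\sigma_{\min}(\mX')/\sigma_{\max}(\mR)$, and then bound $\sigma_{\max}(\mR)$ by $\sigma_{\max}(\mX')$ via the block structure and the fact that each $\mX^{(j)}$ is a column-submatrix of $\mX'$. The paper's argument is the same up to notation (it calls your $\mR$ by the name $\mA$), so no further comment is needed.
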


\begin{proof}
    Because we assume $\mX'$ has smallest singular value, we can represent $\mX' = \mU \mA$, where $\mA$ is a $d+1$ by $d+1$ matrix with rank $d+1$.
    
    Now for any $\vw$, we have
    \[\norm{\mU \vw} = \norm{\mX'\mA^{-1}\vw} \ge \sigma_{\min}(\mX')\norm{\mA^{-1}\vw}.\]
    Note that $\mA$ has rank $d+1$, and thus $\sigma_{\min}(\mA^{-1}) = \nicefrac{1}{\sigma_{\max}(\mA)}$. Besides, $\mA = \text{diag}(\mA^{(1)},\dots,\mA^{(T)})$ is a block diagonal matrix, where $\mX^{(j)} = \mA^{(j)}\mU^{(j)}$ for $j\in [T-1]$ and $[\mX^{(T)},\vy] = \mA^{(T)}\mU^{(T)}$, and thus $\sigma_{\max}(\mA) = \max_{j\in [T]}\{\sigma_{\max}(\mA^{(j)})\}$. Because $\mU^{(j)}$ is the orthonormal basis of $\mX^{(j)}$ or $[\mX^{(T)},\vy]$, we have
    \[\sigma_{\max}(\mA^{(j)}) = \sigma_{\max}(\mX^{(j)}), \quad \sigma_{\max}(\mA^{(T)}) = \sigma_{\max}([\mX^{(T)},\vy]).\]
    We also have $\sigma_{\max}(\mX') \ge \sigma_{\max}(\mX^{(j)})$ and $\sigma_{\max}(\mX') \ge \sigma_{\max}([\mX^{(T)},\vy])$. Combining all the properties together, we get $\sigma_{\max}(\mA) \le \sigma_{\max}(\mX')$, and thus conclude the proof.
\end{proof}

Using the preivous lemma, it is easy to analyze the smallest singular value $\sigma(\mU)$ in the smoothed analysis regime. Specifically, we prove that for any dataset $[\mX,\vy]$ satisfying certain conditions, we add a random perturbation on the dataset, resulting $[\mX_p,\vy_p]$, and we show that with high probability, $\mU_p$ (which is constructed from dataset $[\mX_p, \vy_p]$ has smallest singular value. The result is formalized in the following theorem.

\begin{theorem}\label{thm:smoothed-analysis-vrlr-ass}
    There exists constant $n_0$ such that for any dataset $[\mX,\vy]\in \R^{n\times (d+1)}$ where each data point $\norm{[\vx_i; y_i]}_2^2 \le B$ and $n \ge 2d, n \ge n_0$. If we perturb the dataset by a small random Gaussian noise $[\mX_p.\vy_p]$ where $\mX_p = \mX+\mZ$, $\vy_p = \vy+\vw$, and each coordinate of $\mZ$ and $\vw$ comes from $\gN(0,r^2B^2)$, then with high probability, the basis $\mU_p$ computed from $[\mX_p.\vy_p]$ has smallest singular value at least $\Omega(r)$.
\end{theorem}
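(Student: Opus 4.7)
The plan is to leverage Lemma~\ref{lem:ass-vrlr-help} to reduce the desired singular-value bound on $\mU_p$ to a bound on the condition number of the perturbed augmented matrix $\mX_p' := [\mX_p,\vy_p]$. Writing $\mX_p' = \mX' + \mG$ where $\mX' = [\mX,\vy]$ and $\mG = [\mZ,\vw] \in \R^{n \times (d+1)}$ has iid $\gN(0, r^2 B^2)$ entries, the lemma gives $\sigma_{\min}(\mU_p) \ge \sigma_{\min}(\mX_p')/\sigma_{\max}(\mX_p')$. It therefore suffices to prove, with high probability, $\sigma_{\max}(\mX_p') = O(\sqrt{nB}\,(1+r\sqrt{B}))$ and $\sigma_{\min}(\mX_p') = \Omega(rB\sqrt{n})$, which together yield a ratio of $\Omega(r)$ after absorbing $B$ into constants.

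The upper bound on $\sigma_{\max}(\mX_p')$ is routine: by the triangle inequality, $\sigma_{\max}(\mX_p') \le \sigma_{\max}(\mX') + \sigma_{\max}(\mG)$; the first term is controlled by the Frobenius bound $\|\mX'\|_F \le \sqrt{nB}$ using the row-norm hypothesis, while the second uses the classical Gaussian operator-norm inequality $\sigma_{\max}(\mG) \le rB(\sqrt{n}+\sqrt{d+1}+t)$ with probability at least $1-2e^{-t^2/2}$. Taking $t=\sqrt{n}$ and using $n\ge 2d$ gives $\sigma_{\max}(\mG) = O(rB\sqrt{n})$ with exponentially small failure probability.

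The main obstacle is the lower bound on $\sigma_{\min}(\mX_p')$, since $\mX'$ can be arbitrary and even rank-deficient. The key observation is that the Gaussian perturbation $\mG$ \emph{smooths} the spectrum: for any fixed unit vector $\vu \in \R^{d+1}$, the random vector $\mG\vu \sim \gN(0, r^2 B^2 I_n)$ is isotropic, so $\|\mG\vu\|^2$ concentrates around $r^2 B^2 n$, while the cross term $2\langle \mX'\vu, \mG\vu\rangle$ is a centered Gaussian with standard deviation $2rB\|\mX'\vu\|$. Standard Gaussian concentration then gives $\|\mX_p'\vu\|^2 \ge \tfrac{1}{2} r^2 B^2 n$ with probability $1-e^{-\Omega(n)}$ for each fixed $\vu$. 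To upgrade this to a uniform bound over the unit sphere in $\R^{d+1}$, I would pass through an $\eps$-net argument with $\eps = \Theta(rB)$, controlling the discretization error via the just-established operator-norm bound on $\mX_p'$; the net has size $(3/\eps)^{d+1}$, so a union bound succeeds provided $n \ge n_0 (d + \log(1/r))$, which is precisely where the $n \ge n_0$ hypothesis enters. Equivalently, one can cite off-the-shelf smoothed-analysis results for tall Gaussian-perturbed matrices (e.g.\ of Sankar--Spielman--Teng or Rudelson--Vershynin type) to obtain $\sigma_{\min}(\mX'+\mG) = \Omega(rB\sqrt{n})$ directly when $n \ge 2(d+1)$.

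Combining the two bounds yields $\sigma_{\min}(\mU_p) \ge c\,rB\sqrt{n}\big/\bigl(\sqrt{nB}+CrB\sqrt{n}\bigr) = \Omega\!\left(r\sqrt{B}/(1+r\sqrt{B})\right) = \Omega(r)$ once $B$ is treated as an absolute constant (and the perturbation size $r\sqrt{B}$ is at most a constant), which completes the proof. The net argument in the third paragraph is the only delicate step; the rest is an assembly of standard random-matrix tools together with Lemma~\ref{lem:ass-vrlr-help}.
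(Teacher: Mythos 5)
Your proposal follows the paper's proof in its first and last steps --- both reduce the claim to a condition-number bound on the perturbed augmented matrix via Lemma~\ref{lem:ass-vrlr-help} --- but differs in the random-matrix ingredient. The paper normalizes $[\mX_p,\vy_p]$ by $B\sqrt{n}$ and invokes the smoothed condition-number bound of B\"urgisser--Cucker (Proposition~\ref{prop:smoothed-analysis-cond}) as a single black box, which directly yields $\kappa = O(1/r)$ with failure probability exponentially small in $n-d$. You instead split the task into an upper bound on $\sigma_{\max}$ (Frobenius bound plus the Gaussian operator-norm inequality; this part is fine) and a lower bound $\sigma_{\min}([\mX,\vy]+\mG) = \Omega(rB\sqrt{n})$. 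Your per-vector estimate for the latter is correct, but the naive $\eps$-net upgrade has a genuine quantitative gap relative to the theorem as stated: because the target lower bound $\Theta(rB\sqrt{n})$ is a factor $\Theta(r)$ below the operator norm $\Theta(\sqrt{n})$ of the unperturbed part, the net resolution must be $\eps = \Theta(r)$, the net has size $e^{\Theta(d\log(1/r))}$, and the union bound then requires $n \gtrsim d\log(1/r)$ --- which is \emph{not} implied by the hypotheses $n \ge 2d$ and $n \ge n_0$ with $n_0$ an absolute constant (your own statement of the requirement, $n \ge n_0(d+\log(1/r))$, already concedes this). To prove the theorem with its stated hypotheses you must take the other branch you mention, i.e.\ cite a smoothed-analysis least-singular-value bound for tall Gaussian-perturbed matrices (Sankar--Spielman--Teng / Rudelson--Vershynin type, which handle the small-$\eps$ regime via a compressible/incompressible decomposition rather than a bare net); at that point your argument is essentially the paper's argument with the black box factored into two pieces and a different reference. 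The trade-off: your decomposition is more transparent and makes the roles of $\sigma_{\max}$ and $\sigma_{\min}$ explicit, while the paper's one-shot condition-number citation avoids the $\log(1/r)$ pitfall entirely and keeps the dependence on $B$ scale-invariant.
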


In order to prove Theorem \ref{thm:smoothed-analysis-vrlr-ass}, we use the following theorem (Theorem 1.1 in \citep{burgisser2010smoothed}).

\begin{proposition}[Smoothed analysis of condition number, Theorem 1.1 in \citep{burgisser2010smoothed}]\label{prop:smoothed-analysis-cond}
    Suppose that $\bar\mA \in \R^{n\times d}$ satisfies $\norm{\bar\mA} \le 1$, and let $0 < r_p\le 1$. Then,
    \[\Pr_{\mA\sim \gN(\bar\mA,r^2\mI)}\left\{\kappa(\mA) \ge C_1 t\right\} \le \left(\nicefrac{C_2}{t} + \nicefrac{C_2}{r_p\sqrt{n} t}\right)^{n-d+1},\]
    for some constants $C_1, C_2, C_3$ and all $t \ge C_3$.
\end{proposition}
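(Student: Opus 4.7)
The plan is to combine Lemma \ref{lem:ass-vrlr-help} with Proposition \ref{prop:smoothed-analysis-cond} in a straightforward way: the lemma already reduces the singular value lower bound for $\mU_p$ to the condition number $\kappa(\mX_p') = \sigma_{\max}(\mX_p')/\sigma_{\min}(\mX_p')$ of $\mX_p' = [\mX_p, \vy_p]$, and the proposition provides exactly such a smoothed bound. So the task is really a matter of massaging the setup into the form of the proposition.

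First I would rescale so the hypothesis $\|\bar\mA\| \le 1$ holds. Since every row of $[\mX,\vy]$ has squared norm at most $B$, the Frobenius (and therefore spectral) norm of $[\mX,\vy]$ is at most $\sqrt{nB}$, so setting $\bar\mA := [\mX,\vy]/\sqrt{nB}$ gives $\|\bar\mA\| \le 1$. The perturbed matrix $\mX_p'$ divided by $\sqrt{nB}$ then has the form $\bar\mA + \mZ''$, where the entries of $\mZ''$ are i.i.d.\ $\gN(0, r^2/n)$; that is, I am in the setting of Proposition \ref{prop:smoothed-analysis-cond} with $r_p = r/\sqrt{n}$. Note that $\kappa$ is scale-invariant, so $\kappa(\mA) = \kappa(\mX_p')$ after this rescaling.

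Next I would instantiate the proposition. With $r_p = r/\sqrt n$, the quantity $r_p\sqrt n = r$, so the bound becomes
\begin{equation*}
\Pr\!\left\{\kappa(\mX_p') \ge C_1 t\right\} \;\le\; \left(\frac{C_2}{t} + \frac{C_2}{r\, t}\right)^{n - d + 1}.
\end{equation*}
Choosing $t := 2C_2(1 + 1/r)$ (and assuming $r \le 1$ so that $t \ge C_3$ for small enough $r$; a larger $r$ only makes things easier and can be handled separately) forces each summand inside the parenthesis to be at most $1/2$, so the probability is at most $2^{-(n-d+1)}$. Taking $n_0$ large enough relative to $d$ (the hypothesis $n \ge 2d$ already gives exponential decay in $d$) makes this probability vanishingly small. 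Hence with high probability $\kappa(\mX_p') \le C_1 t = O(1/r)$.

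Finally, on this high-probability event, $\sigma_{\min}(\mX_p') > 0$, so Lemma \ref{lem:ass-vrlr-help} applies and yields
\begin{equation*}
\sigma_{\min}(\mU_p) \;\ge\; \frac{\sigma_{\min}(\mX_p')}{\sigma_{\max}(\mX_p')} \;=\; \frac{1}{\kappa(\mX_p')} \;=\; \Omega(r),
\end{equation*}
which is exactly the claim. The one bit of care needed is the bookkeeping around the parameter $r$: I must ensure that the choice of $t$ simultaneously lies above the threshold $C_3$ required by the proposition and makes the probability bound useful, and that the regime $r$ is small enough so that the term $1/r$ dominates; this is the only place where I expect any mild friction, and it is handled by the absorbed constants in the $\Omega(r)$ and the choice of $n_0$.
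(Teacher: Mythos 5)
Your write-up does not prove the statement it was asked to prove. The statement is Proposition~\ref{prop:smoothed-analysis-cond} itself --- the tail bound on the condition number of a Gaussian perturbation of a fixed matrix, i.e.\ Theorem~1.1 of \citep{burgisser2010smoothed} --- and your argument invokes that very proposition as a black box. What you have actually written is a proof of the downstream result, Theorem~\ref{thm:smoothed-analysis-vrlr-ass}: rescale $[\mX,\vy]$ by $B\sqrt{n}$ using the row-norm bound, use scale-invariance of $\kappa$ to land in the hypotheses of the proposition with $r_p=r/\sqrt{n}$, choose $t\asymp 1+1/r$, and finish with Lemma~\ref{lem:ass-vrlr-help}. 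That part is correct and tracks the paper's own proof of Theorem~\ref{thm:smoothed-analysis-vrlr-ass} essentially step for step; your choice $t=2C_2(1+1/r)$, which makes the bound $2^{-(n-d+1)}$ explicit, is if anything tidier than the paper's bookkeeping. But as a proof of the proposition it is circular: the one nontrivial probabilistic fact is assumed, not established.

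A genuine proof of Proposition~\ref{prop:smoothed-analysis-cond} would have to supply two random-matrix ingredients for $\mA=\bar\mA+r_p G$ with $G$ standard Gaussian. The first is an upper tail for $\sigma_{\max}(\mA)$, which is easy: $\norm{\mA}\le\norm{\bar\mA}+r_p\norm{G}\lesssim 1+r_p\sqrt{n}$ with overwhelming probability by standard Gaussian concentration. The second, and substantive, ingredient is a lower tail for the smallest singular value of a \emph{shifted} rectangular Gaussian matrix, of the form $\Pr\{\sigma_{\min}(\mA)\le\eps\}\le(C\eps/r_p)^{\,n-d+1}$; the exponent $n-d+1$ in the statement comes precisely from this codimension count, and it is proved in \citep{burgisser2010smoothed} by a Sankar--Spielman--Teng-style small-ball argument controlling the distance from a perturbed column to the span of the remaining ones. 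Nothing in your proposal addresses this step. Note that the paper itself imports the proposition by citation without proof, so the two honest options are to do likewise or to supply the two ingredients above; deriving Theorem~\ref{thm:smoothed-analysis-vrlr-ass} from the proposition is neither.
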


Roughly speaking, Proposition \ref{prop:smoothed-analysis-cond} claims that with high probability, the condition number under the smoothed analysis regime should be bounded above. Then with the help of Lemma \ref{lem:ass-vrlr-help} and Proposition \ref{prop:smoothed-analysis-cond}, we can now prove Theorem \ref{thm:smoothed-analysis-vrlr-ass}.

\begin{proof}[Proof of Theorem \ref{thm:smoothed-analysis-vrlr-ass}]
    For simplicity, we treat denote $\mD = [\mX,\vy]$ and $\mD_p = [\mX_p,\vy_p]$, and $\mD_p = \mD+\mA$, where each coordinate of $\mA$ comes form $\gN(0,r^2B^2)$.
    
    Note that the condition number of a matrix is `scale invariant', which means that
    \[\kappa(\mA) = \kappa(c\mA),\]
    for constants $c\neq 0$.
    
    Now, since the row of $\mD$ has bounded norm $B$, thus $\norm{\mD} \le B\sqrt{n}$. By the scale invariance of condition number, we have
    \[\kappa(\mD_p) = \kappa(\mD_p/(B\sqrt{n})).\]
    Now, the perturbation factor $r_p$ in Proposition \ref{prop:smoothed-analysis-cond} is $\nicefrac{rB}{B\sqrt{n}} = \nicefrac{r}{\sqrt{n}}$, and we know that
    \[\Pr\left\{\kappa(\mD_p) \ge \nicefrac{C_1}{r}\right\} \le \left(C_2r + C_2\right)^{n-d+1},\]
    for some constants $C_1, C_3 > 0$, constant $C_2$ s.t. $0 < C_2 < 1$ and all $r \le C_3$. Directly applying Lemma \ref{lem:ass-vrlr-help} concludes the proof.
\end{proof}

\subsection{Justification of Assumption \ref{ass:vkmc}}\label{sec:justify-vkmc}

In this section, we justify Assumption \ref{ass:vkmc}. We first recall the assumption.

\assvkmc*

Roughly speaking, this assumption requires there is a party that is ``important'', and any two data points which can be differentiated can also be differentiated on that party to some extent. In reality, this assumption should be approximately satisfied since different features should be ``correlated''.

Next, similar to the justification of Assumption \ref{ass:vrlr}, we use smoothed analysis framework to show that for dataset $\mX$ under certain conditions, by perturbing the dataset for a little bit, Assumption \ref{ass:vkmc} will be satisfied with high probability. Formally, we have the following theorem.

\begin{theorem}\label{thm:smoothed-analysis-vkmc-ass}
    For any dataset where each data point $\norm{\vx_i}_2^2 \le B$ for all $\vx_i\in\mX$ and $\max_{j\in [T]} d_j \ge \Omega(\log^2 n)$. If we perturb the dataset by a small random Gaussian noise $\mX_p$ where $\mX_p = \mX+\mZ$, and each coordinate of $\mZ$ and $\vw$ comes from $\gN(0,r^2B^2)$. Then with high probability, $\mX_p$ satisfies Assumption \ref{ass:vkmc} with
    \[\tau = O\left(\frac{1}{r^2} + \frac{d}{\log^2 n}\right)\]
\end{theorem}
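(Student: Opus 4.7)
The plan is to show that with high probability over the Gaussian noise, the party $t^{\star} = \arg\max_{j\in[T]} d_j$ (so that $d_{t^{\star}} \ge \Omega(\log^2 n)$) witnesses Assumption~\ref{ass:vkmc} simultaneously for every pair $i, j \in [n]$. Write $\Delta_{ij} = \vx_i - \vx_j$ and $\Delta_{ij}^{(t^{\star})} = \vx_i^{(t^{\star})} - \vx_j^{(t^{\star})}$ for the original pairwise differences, and let $\vxi_{ij} = \vz_i - \vz_j$, $\vxi_{ij}^{(t^{\star})}$ denote the noise differences, whose coordinates are i.i.d.\ $\gN(0, 2r^2B^2)$. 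Then the perturbed pairwise differences are $\Delta_{ij} + \vxi_{ij}$ and $\Delta_{ij}^{(t^{\star})} + \vxi_{ij}^{(t^{\star})}$, and the ratio $\tau_{ij}$ we must control is $\|\Delta_{ij}+\vxi_{ij}\|^2 / \|\Delta_{ij}^{(t^{\star})}+\vxi_{ij}^{(t^{\star})}\|^2$.

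For the numerator I would first use $\|\vx_i\|^2 \le B$ to get $\|\Delta_{ij}\|^2 \le 4B$ deterministically, and then apply the Laurent--Massart concentration inequality for chi-squared variables to $\|\vxi_{ij}\|^2$, giving $\|\vxi_{ij}\|^2 \le O(dr^2B^2 + r^2B^2\log n)$ with probability $1 - n^{-\Omega(1)}$. Combined via $\|a+b\|^2 \le 2\|a\|^2 + 2\|b\|^2$ this yields the upper bound $\|\Delta_{ij}+\vxi_{ij}\|^2 = O(B + dr^2B^2)$.

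The technical core is the lower bound on the denominator. Expand
\begin{equation*}
\|\Delta_{ij}^{(t^{\star})}+\vxi_{ij}^{(t^{\star})}\|^2 = \|\Delta_{ij}^{(t^{\star})}\|^2 + 2\langle \Delta_{ij}^{(t^{\star})}, \vxi_{ij}^{(t^{\star})}\rangle + \|\vxi_{ij}^{(t^{\star})}\|^2.
\end{equation*}
The cross term is a centred Gaussian of variance $2r^2B^2\|\Delta_{ij}^{(t^{\star})}\|^2$, so with probability $1 - n^{-\Omega(1)}$ we have $|\langle \Delta_{ij}^{(t^{\star})}, \vxi_{ij}^{(t^{\star})}\rangle| \le O(rB\|\Delta_{ij}^{(t^{\star})}\|\sqrt{\log n})$. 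Splitting via AM--GM absorbs this into $\tfrac{1}{2}\|\Delta_{ij}^{(t^{\star})}\|^2 + O(r^2B^2\log n)$. Meanwhile Laurent--Massart applied to $\|\vxi_{ij}^{(t^{\star})}\|^2$ gives $\|\vxi_{ij}^{(t^{\star})}\|^2 \ge \tfrac{1}{2} \cdot 2d_{t^{\star}}r^2B^2 = d_{t^{\star}}r^2B^2$ with probability $1 - \exp(-\Omega(d_{t^{\star}}))$. Because $d_{t^{\star}} \ge \Omega(\log^2 n) \gg \log n$, the $\Omega(d_{t^{\star}}r^2B^2)$ term dominates the $O(r^2B^2\log n)$ correction, and we obtain $\|\Delta_{ij}^{(t^{\star})}+\vxi_{ij}^{(t^{\star})}\|^2 \ge \Omega(d_{t^{\star}}r^2B^2)$.

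Dividing gives $\tau_{ij} \le O\!\left(\frac{1}{d_{t^{\star}}r^2} + \frac{d}{d_{t^{\star}}}\right) = O\!\left(\frac{1}{r^2} + \frac{d}{\log^2 n}\right)$ after plugging in $d_{t^{\star}} \ge \Omega(\log^2 n)$ (absorbing $B$-dependence into constants). Finally I would take a union bound over the $\binom{n}{2}$ pairs; the per-pair failure probabilities are $\exp(-\Omega(\log^2 n))$ and $n^{-\Omega(1)}$ (tunable), both of which easily absorb a factor of $n^2$, yielding the claim with high probability. The main obstacle is precisely this denominator argument: one must keep the cross term small \emph{uniformly} over all pairs despite the adversarial original differences $\Delta_{ij}^{(t^{\star})}$, which is exactly why the $d_{t^{\star}} \gtrsim \log^2 n$ hypothesis is needed to drown out the $\sqrt{\log n}$ Gaussian tail.
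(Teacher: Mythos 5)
Your proof is correct and follows the same overall skeleton as the paper's (upper-bound the perturbed global pairwise distance via $\|a+b\|^2\le 2\|a\|^2+2\|b\|^2$ plus chi-squared concentration, lower-bound the perturbed local distance on the highest-dimensional party, take a union bound over all $\binom{n}{2}$ pairs). The one place where you genuinely diverge is the technical core you identify: the lower bound on $\|\Delta_{ij}^{(t^\star)}+\vxi_{ij}^{(t^\star)}\|^2$. The paper dispatches this in one line by observing that this quantity is a noncentral chi-squared and therefore stochastically dominates the central chi-squared $\|\vxi_{ij}^{(t^\star)}\|^2$, so the adversarial offsets $\Delta_{ij}^{(t^\star)}$ can only help and no cross-term analysis is needed. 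You instead expand the square, bound the cross term $2\langle \Delta_{ij}^{(t^\star)},\vxi_{ij}^{(t^\star)}\rangle$ by a Gaussian tail of order $rB\|\Delta_{ij}^{(t^\star)}\|\sqrt{\log n}$, and absorb it via AM--GM into $\tfrac12\|\Delta_{ij}^{(t^\star)}\|^2+O(r^2B^2\log n)$, using $d_{t^\star}\gtrsim\log^2 n$ to ensure $\|\vxi_{ij}^{(t^\star)}\|^2$ dominates the correction. Your route is more laborious but also more robust (it would survive sub-Gaussian rather than exactly Gaussian noise, where stochastic dominance is unavailable) and it retains the $\tfrac12\|\Delta_{ij}^{(t^\star)}\|^2$ term, which can only sharpen the bound; the paper's route is the shorter one when the noise is exactly Gaussian. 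A minor bookkeeping note: with $\|\vx_i\|_2^2\le B$ your numerator bound $O(B+dr^2B^2)$ is the literally correct one (the paper writes $8B^2$), and your final ratio $O\bigl(\tfrac{1}{Br^2\log^2 n}+\tfrac{d}{\log^2 n}\bigr)$ matches the claimed $O\bigl(\tfrac{1}{r^2}+\tfrac{d}{\log^2 n}\bigr)$ up to the same treatment of $B$ as a constant that the paper itself uses.
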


The intuition of the proof is that, the norm of a high-dimensional (sub-)gaussian random vector should concentrate around $\Theta(\sqrt{d})$, where $d$ is the dimension of the (sub-)gaussian random vector. Thus, as long as we add some perturbation to the original dataset, the norm of the difference between any two perturbed data points on party $j$ should be at least $\sqrt{d_j}$. Formally, we have the following proposition for the concentration of norm.

\begin{proposition}[Concentration of the norm]\label{prop:norm-concentration}
    Let $\vxi = (\vxi_1, \dots,\vxi_d)\in\R^d$ be a random Gaussian vector, where each coordinate is sampled from $\gN(0,r^2)$ independently. Then there exists constants $c$ such that for any $t \ge 0$,
    \[\Pr\left\{\left|\norm{\vxi}_2 - r\sqrt{d}\right| \ge rt\right\} \le 2\exp\left(-ct^2\right)\]
\end{proposition}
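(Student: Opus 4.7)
The plan is to reduce to the isotropic case and then invoke Gaussian concentration for Lipschitz functions. First, observe that $\vxi/r$ has i.i.d.\ $\gN(0,1)$ coordinates, and the event $\{|\norm{\vxi}_2 - r\sqrt{d}| \ge rt\}$ equals the event $\{|\norm{\vxi/r}_2 - \sqrt{d}| \ge t\}$. So it suffices to prove the inequality for the standard Gaussian, after which the general statement follows by rescaling.

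Next, consider the function $f:\R^d\to\R$, $f(\vx) = \norm{\vx}_2$. By the reverse triangle inequality, $|f(\vx) - f(\vy)| \le \norm{\vx-\vy}_2$, so $f$ is $1$-Lipschitz. The Borell--Tsirelson--Ibragimov--Sudakov Gaussian concentration inequality for Lipschitz functions then yields
\[
\Pr\left\{\,\bigl|\norm{\vxi}_2 - \E\norm{\vxi}_2\bigr| \ge u\,\right\} \;\le\; 2\exp(-u^2/2), \qquad u\ge 0.
\]
This is the main analytic input.

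The remaining step is to replace $\E\norm{\vxi}_2$ by $\sqrt{d}$. By Jensen, $(\E\norm{\vxi}_2)^2 \le \E\norm{\vxi}_2^2 = d$, so $\E\norm{\vxi}_2 \le \sqrt{d}$. On the other hand, the variance identity $\Var(\norm{\vxi}_2) = d - (\E\norm{\vxi}_2)^2$ combined with the fact that the Lipschitz concentration forces $\Var(\norm{\vxi}_2) \le 1$ gives $\E\norm{\vxi}_2 \ge \sqrt{d-1}$. Hence $|\E\norm{\vxi}_2 - \sqrt{d}| \le 1$, and a triangle-inequality argument converts the deviation bound around the mean into one around $\sqrt{d}$: for $t \ge 2$, the event $\{|\norm{\vxi}_2 - \sqrt{d}| \ge t\}$ implies $\{|\norm{\vxi}_2 - \E\norm{\vxi}_2| \ge t/2\}$, giving $\Pr\le 2\exp(-t^2/8)$; for $t < 2$, the desired bound is trivial once $c$ is chosen small enough. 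Choosing a single universal constant $c$ (e.g.\ $c = 1/8$ paired with a suitable small-$t$ adjustment) completes the proof.

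The main obstacle is not analytical depth but bookkeeping: justifying the variance bound $\Var(\norm{\vxi}_2)\le 1$ cleanly (which itself comes from integrating the Gaussian Lipschitz tail bound) and then handling the small-$t$ regime so that a single exponent $ct^2$ works for all $t \ge 0$. An alternative route that avoids estimating $\E\norm{\vxi}_2$ is to apply Bernstein's inequality to the sub-exponential sum $\norm{\vxi}_2^2 - d = \sum_i(\xi_i^2 - 1)$ and then use $(\norm{\vxi}_2 - \sqrt{d})(\norm{\vxi}_2+\sqrt{d}) = \norm{\vxi}_2^2 - d$ to transfer the tail bound; this gives the same conclusion but requires a case split on whether $\norm{\vxi}_2$ is comparable to $\sqrt{d}$.
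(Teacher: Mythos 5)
Your proof is correct. Note that the paper itself offers no proof of Proposition~\ref{prop:norm-concentration}; it is invoked as a standard concentration fact for the norm of a Gaussian vector, so there is nothing to compare against — your argument via the Borell--TIS inequality for the $1$-Lipschitz function $\vx\mapsto\norm{\vx}_2$, followed by the estimate $\sqrt{d-O(1)}\le\E\norm{\vxi}_2\le\sqrt{d}$ and a case split on $t$, is exactly the standard textbook derivation and fills the gap cleanly. One small remark: integrating the Lipschitz tail bound gives $\Var(\norm{\vxi}_2)\le 4$ rather than $\le 1$ (the sharp constant $1$ comes from the Gaussian Poincar\'e inequality), but this only changes $|\E\norm{\vxi}_2-\sqrt{d}|\le 1$ to $\le 2$ and is absorbed into the universal constant $c$, so the conclusion is unaffected.
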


Now with the help of this proposition, we can now prove Theorem \ref{thm:smoothed-analysis-vkmc-ass}.

\begin{proof}[Proof of Theorem \ref{thm:smoothed-analysis-vkmc-ass}]
    First, we upper bound $\norm{\tilde \vx_i - \tilde \vx_j}^2$ where $\tilde \vx_i$ denote the $i$-th perturbed data and we use $\vxi_i = \tilde \vx_i - \vx_i$ to denote the random perturbation. We have
    \[\norm{\tilde \vx_i - \tilde \vx_j}^2 \le 2\norm{\vx_i - \vx_j}^2 + 2\norm{\vxi_i-\vxi_j}^2.\]
    From the assumption, we know that $\norm{\vx_i - \vx_j} \le 2B$, and thus we only need to bound the second term.
    From Proposition \ref{prop:norm-concentration}, we know that for fixed $i \neq j$, we have
    \[\Pr\left\{\left|\norm{\vxi_i - \vxi_j} -  \sqrt{2}rB\sqrt{d}\right| \ge c rB\log n\right\} \le 2\exp\left(4 \log n\right),\]
    for some constants $c$ since $\vxi_i -\vxi_j$ is a Gaussian random vector whose entries are drawn from $\gN(0,2r^2B^2)$. Thus, with probability at least $1-\nicefrac{2}{n^4}$, we have
    \[\norm{\tilde \vx_i - \tilde \vx_j}^2 \le 8B^2 + 4r^2B^2 d + c r^2B^2\log^2 n,\]
    for some constant $c$. Then applying the union bound, we know that with probability at least $1-\frac{1}{n^2}$, 
    \[\norm{\tilde \vx_i - \tilde \vx_j}^2 \le 8B^2 + c r^2B^2\log^2 n, \forall i\neq j,\]
    for some constant $c$. Without loss of generality, suppose that $d_1 = \max_{j\in [T]} d_j$, and then we lower bound $\norm{\tilde\vx_i^{(1)} - \tilde\vx_j^{(1)}}^2$. First since $\norm{\tilde\vx_i^{(1)} - \tilde\vx_j^{(1)}}^2$ is the noncentralized $\chi^2$ distribution, we have
    \[\Pr\left\{\norm{\tilde\vx_i^{(1)} - \tilde\vx_j^{(1)}}^2 \ge t\right\} \ge \Pr\left\{\norm{\vxi_i^{(1)} - \vxi_j^{(1)}}^2 \ge t\right\}.\]
    Then from Proposition \ref{prop:norm-concentration}, we have
    \[\Pr\left\{\left|\norm{\vxi_i^{(1)} - \vxi_j^{(1)}} -  \sqrt{2}rB\sqrt{d_1}\right| \ge c rB\log n\right\} \le 2\exp\left(4 \log n\right),\]
    for some constant $c$. Thus, if $d_1 \ge C\log^2n$ for some large enough constant $c$, we know that with probability at least $1-\nicefrac{2}{n^4}$, 
    \[\norm{\tilde \vx_i^{(1)} - \tilde \vx_j^{(1)}}^2 \ge c r^2B^2 \log^2 n,\]
    for some constant $c$. Then with a union bound, we know that with probability at least $1-\nicefrac{1}{n^2}$,
    \[\norm{\tilde \vx_i^{(1)} - \tilde \vx_j^{(1)}}^2 \ge c r^2B^2 \log^2 n, \forall i\neq j,\]
    for some constant $c$. Combining with the previous part, we know that if $\max_{j\in [T]}d_j \ge C \log^2 n$ for some large constant $C$, then with probability at least $1-\frac{1}{n}$, we have
    \[\frac{\norm{\tilde \vx_i - \tilde \vx_j}^2}{\norm{\tilde \vx_i^{(1)} - \tilde \vx_j^{(1)}}^2} \le O\left(\frac{B^2 + r^2B^2d + r^2 B^2 \log^2 n }{r^2 B^2 \log^2 n}\right) = O\left(\frac{1}{r^2} + \frac{d}{\log^2 n}\right).\]
\end{proof}

\section{Proof of Theorem~\ref{thm:coreset_reduce}}
\label{sec:coreset_reduce}

\begin{proof}[Proof of Theorem~\ref{thm:coreset_reduce}]
    We only take \textrm{VRLR} as an example.
    We consider the following communication scheme: First apply the communication scheme $A'$ to construct an $\eps$-coreset $(S,w)$ for \textrm{VRLR} in the server; then the server broadcasts $(S,w)$ to all parties; and finally apply the communication scheme $A$ to $(S,w)$ and obtain a solution $\vtheta\in \R^d$ in the server.
    
    Let $\vtheta^\star$ be the optimal solution for the offline regularized linear regression problem.

    By the coreset definition, we have that 
    \begin{align*}
    \costR(X,\vtheta) &\leq && (1+\eps) \costR(S, \vtheta) & (\text{by coreset definition}) \\
    & \leq && (1+\eps)\alpha\cdot \costR(S, \vtheta^\star) & (\text{by $A$}) \\
    & \leq && (1+\eps)^2 \alpha\cdot \costR(X, \vtheta^\star) & (\text{by coreset definition}) \\
    & \leq && (1+3\eps) \alpha\cdot \costR(X, \vtheta^\star), & (\eps\in (0,1))
    \end{align*}
    which proves the approximation ratio.

    For the total communication complexity, note that the broadcasting step costs $2Tm$.
    This completes the proof.
\end{proof}

\section{Proof of Theorem~\ref{thm:meta_alg}}
\label{sec:proof_meta}

For preparation, we first introduce a well-known importance sampling framework for offline coreset construction by~\cite{feldman2011unified,braverman2016new}.

\begin{theorem}[\bf{\FL\ framework~\cite{feldman2011unified,braverman2016new}}]
    \label{thm:fl}
    Let $\eps, \delta\in (0,1/2)$ and let $k\geq 1$ be an integer.
    Let $\mX\subset \R^d$ be a dataset of $n$ points together with a label vector $\vy\in \R^n$, and $\vg\in \R_{\geq 0}^n$ be a vector.
    Let $\gG:= \sum_{i\in [n]} g_i$.
    Let $S\subseteq [n]$ be constructed by taking $m\geq 1$ samples, where each sample $i \in [n]$ is selected with probability $\frac{g_i}{\gG}$ and has weight $w(i):=\frac{\gG}{|S| g_i}$.
    Then we have
    \begin{itemize}
        \item If $g_i \geq \sup_{\vtheta\in \R^d}\frac{\costR_i(\mX,\vtheta)}{\costR(\mX,\vtheta)}$ holds for any $i\in [n]$ and $m = O\left(\eps^{-2}\gG(d^2\log \gG + \log (1/\delta)) \right)$, with probability at least $1-\delta$, $(S,w)$ is an $\eps$-coreset  for offline regularized linear regression.
        \item If $g_i \geq \sup_{\mC\in \gC}\frac{\costC_i(\mX,\mC)}{\costC(\mX,\mC)}$ holds for any $i\in [n]$ and $m = O\left(\eps^{-2}\gG(dk\log \gG + \log (1/\delta)) \right)$, with probability at least $1-\delta$, $(S,w)$ is an $\eps$-coreset for offline \kMeans\ clustering.
    \end{itemize}
\end{theorem}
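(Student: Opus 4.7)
The plan is to follow the classical Feldman--Langberg importance-sampling argument: show that $\costR(S,\vtheta)$ is an unbiased estimator of $\costR(\mX,\vtheta)$ whose per-sample range is controlled by the sensitivity upper bound $g_i$, concentrate the estimator at each fixed query via Hoeffding / multiplicative Chernoff, and then lift to a uniform bound over all queries using a covering-number argument whose complexity is governed by the pseudo-dimension of the associated cost-function class.

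First, I would fix a query $\vtheta\in\R^d$ and, for the $m$ i.i.d.\ samples $s_1,\dots,s_m$ drawn with $\Pr[s_i=j]=g_j/\gG$, define
\[
Y_i(\vtheta) \;:=\; \frac{w(s_i)\cdot(\vx_{s_i}^\top\vtheta-y_{s_i})^2}{\costR(\mX,\vtheta)} \;=\; \frac{\gG\,(\vx_{s_i}^\top\vtheta-y_{s_i})^2}{m\,g_{s_i}\,\costR(\mX,\vtheta)}.
\]
A direct calculation gives $\E\bigl[\sum_i Y_i(\vtheta)\bigr] = \bigl(\sum_j (\vx_j^\top\vtheta-y_j)^2\bigr)/\costR(\mX,\vtheta)$, so $\costR(S,\vtheta)$ is unbiased modulo the deterministic additive term $R(\vtheta)$ (which appears identically on both sides of the coreset inequality). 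Using the sensitivity hypothesis $(\vx_j^\top\vtheta-y_j)^2\leq\costR_j(\mX,\vtheta)\leq g_j\,\costR(\mX,\vtheta)$, each $Y_i(\vtheta)\in[0,\gG/m]$ almost surely. Hoeffding's inequality then yields
\[
\Pr\!\Bigl[\bigl|\textstyle\sum_i Y_i(\vtheta)-\mu(\vtheta)\bigr|>\eps\Bigr]\;\leq\;2\exp\!\bigl(-\Omega(m\eps^2/\gG)\bigr),
\]
giving an $\eps$-approximation at a single query with probability $1-\delta$ whenever $m=\Omega(\gG\eps^{-2}\log(1/\delta))$. The clustering case is identical after replacing $(\vx_j^\top\vtheta-y_j)^2$ with $d(\vx_j,\mC)^2$.

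To extend the single-query concentration to a uniform bound over all $\vtheta\in\R^d$ (resp.\ over all $k$-center sets $\mC\in\gC$), I would invoke the standard FL uniform-convergence machinery. The key input is the \emph{pseudo-dimension} of the normalized cost-function class: for regression, $\{\vx\mapsto(\vx^\top\vtheta-y)^2:\vtheta\in\R^d\}$ has pseudo-dimension $O(d^2)$ (a squared linear form in $\vtheta$ lives in the $O(d^2)$-dimensional space of quadratics), while for $k$-means, $\{\vx\mapsto\min_{\vc\in\mC}\|\vx-\vc\|^2:|\mC|=k\}$ has pseudo-dimension $O(dk)$ by a standard VC-type argument. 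Combining the per-query Hoeffding bound with this pseudo-dimension through a weighted $\eps$-net / chaining argument (as in Theorem 4.1 of \cite{braverman2016new} or Theorem 4.8 of \cite{feldman2011unified}) delivers the claimed sample size $m=O(\eps^{-2}\gG(D\log\gG+\log(1/\delta)))$ with $D=d^2$ for VRLR and $D=dk$ for VKMC.

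The main obstacle is this uniform-convergence step. A naïve $\eps$-net over the parameter space fails because $\costR(\mX,\vtheta)$ has unbounded range while we need a relative rather than additive error bound. The resolution---and the reason the logarithmic factor in the sample complexity is $\log\gG$ rather than $\log n$---is to measure distance between queries in the pseudo-metric induced by the sensitivity-normalized costs and apply chaining at geometrically decreasing scales, where the pseudo-dimension bound controls the growth function at each scale. A small amount of bookkeeping handles the additive regularizer $R(\vtheta)$: because it is transported exactly rather than sampled and satisfies $R\geq 0$, it contributes identically on both sides and only shrinks the relative error, so the entire analysis reduces to concentration of the data-dependent sum.
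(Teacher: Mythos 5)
The paper states this theorem as an imported result of Feldman--Langberg and Braverman et al.\ and supplies no proof of its own, so there is nothing to diverge from; your sketch is a faithful reconstruction of exactly the standard argument behind the citation (unbiasedness of the importance-sampling estimator, range control of each term via the sensitivity upper bounds $g_i$, per-query Hoeffding, then uniform convergence over the normalized cost-function class governed by its pseudo-dimension). The steps you defer to the cited machinery --- the weighted $\eps$-net/chaining argument and the pseudo-dimension bounds $O(d^2)$ and $O(dk)$ --- are precisely the components the paper also takes on faith, and your handling of the additive regularizer $R(\vtheta)$ (transported exactly, nonnegative, hence only shrinking the relative error) is correct.
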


\noindent
We call $g_i$ the sensitivity of point $\vx_i$ that represents the maximum contribution of $\vx_i$ over all possible parameters, and call $\gG$ the total sensitivity.
By~\cite{varadarajan2012sensitivity}, we note that the total sensitivity can be upper bounded by $O(d)$ for offline regularized linear regression and by $O(k)$ for offline \kMeans\ clustering. 
By the \FL\ framework, it suffices to compute a sensitivity vector $\vg\in \R^n$ for offline coreset construction.

\begin{proof}[Proof of Theorem~\ref{thm:meta_alg}]
    We first discuss the communication complexity of Algorithm~\ref{alg:dis}.
    At the first round, the communication complexity in Line 2 is $T$ and in Line 4 is $T$.
    At the second round, the communication complexity in Line 5 is at most $\sum_{j\in [T]} a_j = m$ and in Line 6 is at most $m T$.
    At the third round, the communication complexity in Line 7 is at most $m T$.
    Overall, the total communication complexity is $O(mT)$.

    Next, we prove the correctness.
    We only take \textrm{VRLR} as an example and the proof for \textrm{VKMC} is similar.
    Note that each sample in $S$ is equivalent to be drawn by the following procedure: Sample $i\in [n]$ with probability $\nicefrac{\sum_{j\in [T]}g_i^{(j)}}{\gG}$.
    This is because by Lines 3 and 5, the sampling probability of $i\in [n]$ is exactly
    \[
    \sum_{j\in [T]} \frac{\gG^{(j)}}{\gG} \cdot \frac{g_i^{(j)}}{\gG^{(j)}} = \frac{\sum_{j\in [T]}g_i^{(j)}}{\gG}.
    \]
    Then letting $g_i' = \zeta \cdot \sum_{j\in [T]}g_i^{(j)}$ for each $i\in [n]$, we have 
    \[
    g_i'\geq \sup_{\vtheta\in\R^d}\frac{\costR_i(\mX,\vtheta)}{\costR(\mX,\vtheta)}
    \]
    by assumption.
    This completes the proof by plugging $g_i'$ to Theorem~\ref{thm:fl}.
\end{proof}

\section{Omitted Proof in Section \ref{sec:vrlr}}
\label{sec:proof_vrlr}

\subsection{Communication lower bound for \textrm{VRLR} coreset construction}
\label{sec:lowerbound_vrlr}
The proof is via a reduction from an EQUALITY problem to the problem of coreset construction for \textrm{VRLR}.
For preparation, we first introduce some concepts in the field of communication complexity.
\paragraph{Communication complexity.} Here it suffices to consider the two-party case ($T=2$).
Assume we have two players Alice and Bob, whose inputs are $x \in \gX$ and $y \in \gY$ respectively.
They exchange messages with a coordinator according to a protocol $\Pi$ (deterministic/randomized) to compute some function $f: \gX\times\gY\rightarrow\gZ$.
For the input $(x,y)$, the coordinator outputs $\Pi(x,y)$ when Alice and Bob run $\Pi$ on it.
We also use $\Pi(x,y)$ to denote the transcript (concatenation of messages).
Let $|\Pi_{x,y}|$ be the length of the transcript.
The communication complexity of $\Pi$ is defined as $\max_{x,y}|\Pi_{x,y}|$.
If $\Pi$ is a randomized protocol, we define the \textit{error} of $\Pi$ by $\max_{x,y} \sP(\Pi(x,y) \neq f(x,y))$, where the max is over all inputs $(x,y)$ and the probability is over the randomness used in $\Pi$.
The \textit{$\delta$-error randomized communication complexity} of $f$, denoted by $R_\delta(f)$, is the minimum communication complexity of any protocol with error at most $\delta$.
\paragraph{EQUALITY problem.} In the EQUALITY problem, Alice holds $a=\{a_1,\ldots,a_n\} \in \{0,1\}^n$ and Bob holds $b=\{b_1,\ldots,b_n\}\in\{0,1\}^n$.
The goal is to compute $\textrm{EQUALITY}(a,b)$ which equals 1 if $a_i=b_i$ for all $i\in[n]$ otherwise 0.
The following lemma gives a well-known lower bound for deterministic communication protocols that correctly compute \textrm{EQUALITY} function.

\begin{lemma}[\bf{Communication complexity of \textrm{EQUALITY}~\citep{kushilevitz1997communication}}]
    \label{lemma:comm_equality}
    The deterministic communication complexity of \textrm{EQUALITY} is $\Omega(n)$.
\end{lemma}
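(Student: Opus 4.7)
The plan is to establish the lower bound via the classical fooling set technique on the communication matrix of \textrm{EQUALITY}. A deterministic two-party protocol with communication cost $c$ is known to induce a partition of the input space $\{0,1\}^n \times \{0,1\}^n$ into at most $2^c$ combinatorial rectangles (one per leaf of the protocol tree), and the protocol's output is constant on each such rectangle. So the strategy is to exhibit $2^n$ pairwise ``non-coexistent'' 1-inputs, which forces at least $2^n$ distinct rectangles.

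Concretely, I would take the diagonal fooling set $F = \{(a,a) : a \in \{0,1\}^n\}$. Clearly $\textrm{EQUALITY}(a,a)=1$ for every $a$. Now suppose for contradiction that two distinct elements $(a,a)$ and $(a',a')$ of $F$ lie in the same monochromatic rectangle $U \times V$. Then by the rectangle property the cross inputs $(a,a')$ and $(a',a)$ also lie in $U \times V$, but $\textrm{EQUALITY}(a,a') = 0$ since $a \neq a'$, contradicting the fact that the rectangle is labelled $1$. Hence the $2^n$ diagonal elements must land in $2^n$ distinct leaves, which yields $2^c \geq 2^n$ and therefore $c \geq n$.

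The only substantive piece of content is the rectangle property itself, which I would include (or invoke) as follows. For any fixed transcript $\pi$, let $R_\pi \subseteq \{0,1\}^n \times \{0,1\}^n$ be the set of inputs producing $\pi$. A short induction on the length of $\pi$ shows $R_\pi = U_\pi \times V_\pi$ for some $U_\pi, V_\pi \subseteq \{0,1\}^n$: each bit sent by Alice depends only on $a$ and the prior transcript, so refines the Alice-side set by a predicate on $a$, and symmetrically for Bob. Moreover the output at leaf $\pi$ depends only on $\pi$ and is therefore constant on $R_\pi$. This is the standard step and poses no real obstacle; the whole argument is purely combinatorial and independent of $\mX$, $\vy$, or anything else in the paper. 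The main ``obstacle'' is simply to state the rectangle lemma precisely enough to apply the fooling-set contradiction cleanly, after which the $\Omega(n)$ bound is immediate.
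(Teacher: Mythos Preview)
Your proposal is correct and is precisely the classical fooling-set argument for \textrm{EQUALITY}. Note that the paper does not actually prove this lemma; it is stated as a known result with a citation to \citep{kushilevitz1997communication}, where the proof you outline is essentially the one given.
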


\paragraph{Reduction from \textrm{EQUALITY}.} Now we are ready to prove Theorem~\ref{thm:hardness-vrlr}.
\begin{proof}[Proof of Theorem~\ref{thm:hardness-vrlr}]
    We prove this by a reduction from \textrm{EQUALITY}.
    For simplicity, it suffices to assume $d=1$ and $T=2$ in the \textrm{VRLR} problem.
    Given an \textrm{EQUALITY} instance of size $n$, let $a\in\{0,1\}^n$ be Alice's input and $b\in\{0,1\}^n$ be Bob's input.
    They construct inputs $\mX \in \R^{n}$ and $\vy \in \R^n$ for \textrm{VRLR}, where $\mX=a$ and $\vy=b$.
    We denote $S \subseteq [n]$ with a weight function $w:S\rightarrow \R_{\geq 0}$ to be an $\eps$-coreset such that for any $\vtheta \in \R$, we have
    \[
    \costR(S,\vtheta):=\sum_{i\in S} w(i)\cdot (\vx_i^\top \vtheta - y_i)^2 + R(\vtheta) \in (1\pm \eps)\cdot\costR(\mX, \vtheta).
    \]
    Based on the above guarantee, w.l.o.g, if we set $\theta=1$ and $R=0$, then there exist two cases with positive cost: $(a_i,b_i)=(0,1)$ or $(1,0)$.
    In other words, $\textrm{EQUALITY}(a,b)=0$ if and only if the set $\{(\vx_i, y_i): i\in S\}$ includes $(0,1)$ or $(1,0)$.
    Thus, any deterministic protocol for \textrm{VRLR} coreset construction can be used as a deterministic protocol for \textrm{EQUALITY}.
    The lower bound follows from Lemma~\ref{lemma:comm_equality}.
\end{proof}

\subsection{Proof of Theorem~\ref{thm:coreset-vrlr}}
In this section, we show the detailed proof of Theoem~\ref{thm:coreset-vrlr}. The proof idea is to bound the sensitivity of each data point and then apply Theorem \ref{thm:meta_alg}. Recall that in Theorem \ref{thm:meta_alg}, we define
\[\zeta = \max_{i\in [n]} \nicefrac{\sup_{\vtheta\in \R^d}\frac{\costR_i(\mX,\vtheta)}{\costR(\mX,\vtheta)}}{\sum_{j\in [T]} g_i^{(j)}}.\]

We first show the following main lemma.

\begin{lemma}\label{lem:vrlr-sensitivity}
    Under Assumption \ref{ass:vrlr}, the sensitivity of a data point can be bounded by
    \[\sup_{\vtheta\in \R^d}\frac{\costR_i(\mX,\vtheta)}{\costR(\mX,\vtheta)} \le \frac{g_i}{\gamma^2},\]
    which means that $\zeta \le \nicefrac{1}{\gamma^2}$.
\end{lemma}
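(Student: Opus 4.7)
The plan is to split the sensitivity ratio into a residual contribution and a regularizer contribution using the elementary fact $\frac{a+b}{c+d}\le\frac{a}{c}+\frac{b}{d}$ (valid when $c,d>0$), and then bound the residual term by a leverage-score argument with the concatenated orthonormal basis $\mU$. First I would write $\costR_i(\mX,\vtheta)=(\vx_i^{\top}\vtheta-y_i)^2+R(\vtheta)/n$ and $\costR(\mX,\vtheta)=\sum_{i'\in[n]}(\vx_{i'}^{\top}\vtheta-y_{i'})^2+R(\vtheta)$, so that applying the elementary inequality to the pairs $\bigl((\vx_i^{\top}\vtheta-y_i)^2,\,\sum_{i'}(\vx_{i'}^{\top}\vtheta-y_{i'})^2\bigr)$ and $\bigl(R(\vtheta)/n,\,R(\vtheta)\bigr)$ yields
\[
\frac{\costR_i(\mX,\vtheta)}{\costR(\mX,\vtheta)}\;\le\;\frac{(\vx_i^{\top}\vtheta-y_i)^2}{\sum_{i'\in[n]}(\vx_{i'}^{\top}\vtheta-y_{i'})^2}\;+\;\frac{1}{n}.
\]
The edge cases where a denominator vanishes are handled separately: if $R(\vtheta)=0$ only the first fraction remains; if all residuals vanish the ratio equals $1/n$.

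Next I would show that for every $\vtheta\in\R^d$ there exists $\vphi\in\R^{d'}$ with $\mX\vtheta-\vy=\mU\vphi$. This relies crucially on the fact that $\mU^{(T)}$ was built as an orthonormal basis of $[\mX^{(T)},\vy]$ (not merely $\mX^{(T)}$), so $\vy\in\mathrm{col}(\mU)$; and for every $j\in[T]$ each column of $\mX^{(j)}$ lies in $\mathrm{col}(\mU^{(j)})\subseteq\mathrm{col}(\mU)$, so $\mX\vtheta-\vy$ lies in $\mathrm{col}(\mU)$ as claimed. Writing $\vu_i\in\R^{d'}$ for the $i$-th row of $\mU$, which is the concatenation $(\vu_i^{(1)},\dots,\vu_i^{(T)})$, this gives $\vx_i^{\top}\vtheta-y_i=\vu_i^{\top}\vphi$ and $\sum_{i'}(\vx_{i'}^{\top}\vtheta-y_{i'})^2=\|\mU\vphi\|_2^2$.

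Then Cauchy--Schwarz $(\vu_i^{\top}\vphi)^2\le\|\vu_i\|_2^2\,\|\vphi\|_2^2$ combined with Assumption~\ref{ass:vrlr} $\|\mU\vphi\|_2^2\ge\gamma^2\|\vphi\|_2^2$ gives
\[
\frac{(\vx_i^{\top}\vtheta-y_i)^2}{\sum_{i'}(\vx_{i'}^{\top}\vtheta-y_{i'})^2}\;\le\;\frac{\|\vu_i\|_2^2}{\gamma^2}\;=\;\frac{\sum_{j\in[T]}\|\vu_i^{(j)}\|_2^2}{\gamma^2},
\]
since $\|\vu_i\|_2^2=\sum_{j}\|\vu_i^{(j)}\|_2^2$ by concatenation. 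To finish, note $\gamma\le 1$ (because $\mU^{\top}\mU$ has unit diagonal so its smallest eigenvalue is at most $1$), which together with $T\ge 1$ implies $\tfrac{1}{n}\le\tfrac{T}{n\gamma^2}$. Adding this to the residual bound absorbs the $1/n$ slack and gives
\[
\sup_{\vtheta\in\R^d}\frac{\costR_i(\mX,\vtheta)}{\costR(\mX,\vtheta)}\;\le\;\frac{\sum_{j\in[T]}\|\vu_i^{(j)}\|_2^2+T/n}{\gamma^2}\;=\;\frac{g_i}{\gamma^2},
\]
and the bound $\zeta\le 1/\gamma^2$ follows directly from the definition of $\zeta$ in Theorem~\ref{thm:meta_alg}.

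The main obstacle I expect is the step that expresses $\mX\vtheta-\vy$ as $\mU\vphi$ in a way that is uniform over $\vtheta$: one must be careful to use that $\mU^{(T)}$ spans $[\mX^{(T)},\vy]$ (otherwise the label $\vy$ need not be in $\mathrm{col}(\mU)$ and the argument breaks). The other subtlety is the clean handling of the regularizer through the additive $1/n$ term built into each $g_i^{(j)}$ — this is exactly what allows the regularization slack $R(\vtheta)/n$ to be absorbed without losing the $1/\gamma^2$ factor.
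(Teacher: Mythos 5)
Your proof is correct and follows essentially the same route as the paper's: split off the regularizer to get the residual ratio plus $1/n$, lift $\mX\vtheta-\vy$ into the column space of $\mU$, apply Cauchy--Schwarz together with $\sigma_{\min}(\mU)\ge\gamma$, and absorb the $1/n$ slack into $T/(n\gamma^2)$ using $\gamma\le 1$. Your treatment is in fact slightly more careful than the paper's on the column-space step (why $\vy\in\mathrm{col}(\mU)$) and on justifying $1/n\le T/(n\gamma^2)$, but the argument is the same.
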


\begin{proof}
The sensitivity function for each data point $(\vx_i,y_i)$ is defined as
\[\sup_{\vtheta\in \R^d}\frac{\costR_i(\mX,\vtheta)}{\costR(\mX,\vtheta)} = \sup_{\vtheta}\frac{(\vx_i^\top \vtheta - y_i)^2 + \frac{\lambda R(\vtheta)}{n}}{\norm{\mX \vtheta - \vy}^2 + \lambda R(\vtheta)}.\]

First, we have
\begin{align*}
    \sup_{\vtheta\in \R^d}\frac{(\vx_i^\top \vtheta - y_i)^2 + \frac{\lambda R(\vtheta)}{n}}{\norm{\mX \vtheta - \vy}^2 + \lambda R(\vtheta)} & = \sup_{\vtheta\in \R^d}\left(\frac{(\vx_i^\top \vtheta - y_i)^2}{\norm{\mX \vtheta - \vy}^2 + \lambda R(\vtheta)} + \frac{\frac{\lambda R(\vtheta)}{n}}{\norm{\mX \vtheta - \vy}^2+\lambda R(\vtheta)}\right) \\
    & \le \sup_{\vtheta\in \R^d}\left(\frac{(\vx_i^\top \vtheta - y_i)^2}{\norm{\mX \vtheta - \vy}^2} + \frac{1}{n}\right),
\end{align*}
where we separate the regression loss and the regularized loss.

Then for the regression loss, define $\mX' = [\mX, \vy]$ and $d' = \sum_{j\in [T]} d'_j$, we have
\begin{align*}
    \sup_{\vtheta\in \R^d}\frac{(\vx_i^\top \vtheta - y_i)^2}{\norm{\mX \vtheta - \vy}^2}  \le \sup_{\vtheta\in \R^{d+1}}\frac{((\vx'_i)^\top \vtheta)^2}{\norm{\mX' \vtheta}^2} = \sup_{\vtheta\in \R^{d'}}\frac{((\vu_i)^\top \vtheta)^2}{\norm{\mU \vtheta}^2}
\end{align*}

Note that under Assumption \ref{ass:vrlr}, matrix $\mU$ has smallest singular value $\sigma_{\min} \ge \gamma > 0$, and we can get
\begin{align*}
    \sup_{\vtheta\in\R^d}\frac{(\vx_i^\top \vtheta - y_i)^2}{\norm{\mX \vtheta - \vy}^2} & \le \sup_{\vtheta\in\R^{d'}}\frac{((\vu_i)^\top \vtheta)^2}{\norm{\mU \vtheta}^2} \\
    & \le \sup_{\vtheta\in\R^{d'}}\frac{((\vu_i)^\top \vtheta)^2}{\sigma_{\min}^2 \norm{\vtheta}^2} \\
    & \le \frac{\norm{\vu_i}^2}{\gamma^2} \\
    & = \frac{\sum_{j\in [T]}\norm{\vu_i^{(j)}}^2}{\gamma^2}.
\end{align*}

Recall that $g_i=\sum_{j\in[T]}g_i^{(j)}=\sum_{j\in[T]} \norm{\vu_i^{(j)}}^2 + \frac{T}{n}$.
Hence,
\[\sup_{\vtheta\in \R^d}\frac{\costR_i(\mX,\vtheta)}{\costR(\mX,\vtheta)} \leq \frac{\sum_{j\in [T]}\norm{\vu_i^{(j)}}^2}{\gamma^2} + \frac{1}{n} \leq \frac{g_i}{\gamma^2}.\]
\end{proof}

Now with the help of Lemma \ref{lem:vrlr-sensitivity}, we can prove Theorem \ref{thm:coreset-vrlr}.

\begin{proof}[Proof of Theorem \ref{thm:coreset-vrlr}]
Note that from Lemma \ref{lem:vrlr-sensitivity}, we know that $\zeta\le\nicefrac{1}{\gamma^2}$. Also note that from Algorithm \ref{alg:coreset-vrlr}, we have
\begin{align*}
    \gG = \sum_{j\in [T]}\sum_{i\in [n]}\left(\norm{\vu_i^{(j)}}^2 + \frac{1}{n}\right) = \sum_{j\in [T]}\normF{\mU^{(j)}}^2 + T= \sum_{j\in [T]} d'_j + T \le d+T+1 \le 2d+1.
\end{align*}

Then we apply Theorem \ref{thm:meta_alg}, the $\eps$-coreset size for \textrm{VRLR} can be bounded by
\[m = O(\eps^{-2}\gamma^{-2}d(d^2\log{(\gamma^{-2}d)}+\log{1/\delta})),\]
and the communication complexity is $O(mT)$.
\end{proof}

\section{Omitted Proof in Section \ref{sec:vkmc}}
\label{sec:proof_vkmc}

\subsection{Communication lower bound for \textrm{VKMC} coreset construction}
\label{sec:lowerbound_vkmc}
The proof is via a reduction from a set-disjointness (DISJ) problem to the problem of coreset construction for \textrm{VKMC}.
\paragraph{DISJ problem.} In the DISJ problem, Alice holds $a=\{a_1,\ldots,a_n\} \in \{0,1\}^n$ and Bob holds $b=\{b_1,\ldots,b_n\} \in \{0,1\}^n$.
The goal is to compute $\textrm{DISJ}(a,b)=\bigvee_{i \in [n]} (a_{i} \bigwedge b_{i})$.
\noindent
The following lemma gives a well-known communication lower bound for \textrm{DISJ}.
\begin{lemma}[\bf{Communication complexity of \textrm{DISJ}~\citep{kalyanasundaram1992probabilistic,razborov1992distributional,bar2004information}}]
    \label{lemma:comm_disj}
    The randomized communication complexity of \text{DISJ} is $\Omega(n)$,
    i.e., for $\delta \in [0, 1/2)$ and $n \geq 1$, $R_\delta(\textrm{DISJ})=\Omega(n)$.
\end{lemma}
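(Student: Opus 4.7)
The statement is a classical lower bound from communication complexity, so my plan is to reconstruct one of the standard proofs rather than invent something new. I would follow the information-complexity route of Bar-Yossef, Jayram, Kumar, and Sivakumar, which gives the tightest and most modular proof.

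First, I would apply Yao's minimax principle to reduce the randomized lower bound to a distributional one: it suffices to exhibit an input distribution $\mu$ on $\{0,1\}^n \times \{0,1\}^n$ such that any deterministic protocol computing $\textrm{DISJ}$ with error at most $\delta$ under $\mu$ requires $\Omega(n)$ bits. The standard hard distribution puts most mass on disjoint pairs but includes enough intersecting pairs to force the protocol to ``check'' many coordinates. Concretely, one uses a product-like distribution on each coordinate, conditioned on an auxiliary variable $D_i \in \{a,b\}$ telling which side might hold a $1$, so that $(a_i, b_i)$ are independent given $D_i$ but $\textrm{AND}(a_i,b_i)=0$ almost surely under $\mu$.

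Next I would define the conditional information cost of a protocol $\Pi$ as $\mathrm{IC}_{\mu,\nu}(\Pi) := I(A,B \,;\, \Pi(A,B) \mid D)$, where $\nu$ is the distribution of the partition variable $D=(D_1,\ldots,D_n)$. Two facts drive the argument: (i) $|\Pi| \ge H(\Pi) \ge \mathrm{IC}_{\mu,\nu}(\Pi)$, so information cost lower-bounds communication; and (ii) a direct sum theorem: since the coordinates of $(A,B)$ are independent given $D$, and $\textrm{DISJ}(a,b) = \bigvee_i \textrm{AND}(a_i,b_i)$, one can extract from $\Pi$ a single-coordinate protocol $\Pi_i$ for $\textrm{AND}$ with the same error and with $\mathrm{IC}_{\mu,\nu}(\Pi) \ge \sum_{i=1}^n \mathrm{IC}_{\mu_1,\nu_1}(\Pi_i)$, where $\mu_1,\nu_1$ are the single-coordinate marginals. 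This reduces the problem to lower-bounding the information cost of two-bit $\textrm{AND}$ by a universal constant.

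The third and hardest step is to prove $\mathrm{IC}_{\mu_1,\nu_1}(\Pi_1) = \Omega(1)$ for any protocol $\Pi_1$ that computes $\textrm{AND}$ on a single pair of bits with constant error. The key tool is the Hellinger (or statistical) distance between the transcript distributions on the inputs $(0,1)$, $(1,0)$, $(0,0)$, $(1,1)$, together with the ``cut-and-paste'' / ``rectangle'' property of deterministic protocols: the transcript distributions on diagonal inputs and anti-diagonal inputs must be close in Hellinger. Combined with the fact that correctness on $(1,1)$ forces the transcript to be far (in Hellinger) from those on $(0,1)$ and $(1,0)$, a Pinsker-type inequality converts this distance lower bound into an $\Omega(1)$ bound on mutual information. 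Multiplying by $n$ via the direct sum gives the theorem. The main obstacle will be the clean execution of this last step, since the cut-and-paste identity and the Hellinger-to-mutual-information conversion are where the constants are delicate; the rest is essentially bookkeeping after fixing the hard distribution.
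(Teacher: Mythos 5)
The paper does not prove this lemma at all: it is invoked as a classical black-box result and attributed to Kalyanasundaram--Schnitger, Razborov, and Bar-Yossef et al., so there is no internal proof to compare against. Your reconstruction of the Bar-Yossef--Jayram--Kumar--Sivakumar information-complexity argument is the right reference proof, and steps (ii) and (iii) --- the direct-sum decomposition of conditional information cost and the $\Omega(1)$ bound for single-bit $\textrm{AND}$ via Hellinger distance, cut-and-paste, and a Pinsker-type inequality --- are described accurately.

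There is, however, a genuine inconsistency in your first step. You propose to apply Yao's minimax principle and lower-bound the \emph{distributional} complexity under $\mu$, while simultaneously choosing $\mu$ so that $\textrm{AND}(a_i,b_i)=0$ almost surely in every coordinate. Under such a collapsing distribution $\textrm{DISJ}(a,b)=0$ almost surely, so the constant protocol that outputs $0$ has negligible distributional error and the Yao reduction yields nothing. The BJKS proof avoids this by \emph{not} passing to distributional complexity: the protocol remains randomized and is required to be correct on \emph{all} inputs (in particular on inputs whose $i$-th coordinate is $(1,1)$, which lie outside the support of $\mu$), and $\mu$ is used only to define the conditional information cost $I(A,B;\Pi(A,B)\mid D)$. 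Worst-case correctness is exactly what lets the single-coordinate step compare the transcript distribution on $(1,1)$ against those on $(0,1)$ and $(1,0)$. Alternatively, if you insist on a distributional argument, you must switch to Razborov's hard distribution, which places constant mass on uniquely-intersecting pairs; but then the hard-coordinate analysis changes. As written, your step one and step three are incompatible, and you should commit to one of the two routes.
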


\paragraph{Reduction from DISJ.} Now we are ready to prove Theorem~\ref{thm:hardness-vkmc}.
\begin{proof}[Proof of Theorem~\ref{thm:hardness-vkmc}]
    We prove this by a reduction from \textrm{DISJ}.
    For simplicity, it suffices to assume $d=2$ and $T=2$ in the \textrm{VKMC} problem.
    Given a \textrm{DISJ} instance of size $n$, let $a \in \{0,1\}^{n}$ be Alice's input and $b \in \{0,1\}^{n}$ be Bob's input.
    They construct an input $\mX \subset \R^2$ for \textrm{VKMC}, where $\mX = \{\vx_{i}: \vx_{i}=(a_{i}, b_{i}), i \in [n]\}$.
    We denote $S \subseteq [n]$ with a weight function $w: S \rightarrow \R_{\geq 0}$ to be an $\eps$-coreset such that for any $\mC \in \gC$ with $|\mC|=k$, we have
    \[
    \costC(S,\mC) := \sum_{i\in S} w(i) \cdot d(\vx_i, \mC)^2 \in (1\pm \eps)\cdot \costC(\mX,\mC).
    \]
    Based on the above guarantee, w.l.o.g., if we set $k=3$ and $\mC=\{(0,0),(0,1),(1,0)\}$, then only point $(1,1)$ can induce positive cost.
    In other words, $\textrm{DISJ}(a,b)=1$ if and only if the set $\{\vx_i: i\in S\}$ includes point $(1,1)$.
    Thus, any $\delta$-error protocol for \textrm{VKMC} coreset construction can be used as a $\delta$-error protocol for \textrm{DISJ}.
    The lower bound follows from Lemma~\ref{lemma:comm_disj}.
\end{proof}

\subsection{Proof of Theorem~\ref{thm:coreset-vkmc}}
Algorithm~\ref{alg:coreset-vkmc} applies the meta Algorithm~\ref{alg:dis} after computing $\{g_i^{(j)}\}$ locally.
The key is to construct local sensitivities $g_i^{(j)}$ so that the sum $\sum_{j\in[T]}g_i^{(j)}$ can approximate global sensitivity $g_i$ well, i.e, with both small $\zeta$ and $\gG$ in Theorem~\ref{thm:meta_alg}.
\paragraph{Constructing local sensitivities.} By the local sensitivities $g_i^{(j)}$ defined in Line 10 of Algorithm~\ref{alg:coreset-vkmc}, we have the following lemma that upper bound both $\zeta$ and $\gG$.
\begin{lemma}[\bf{Upper bounding the global sensitivity of \textrm{VKMC} locally}]
    \label{lemma:sensitivity-vkmc}
    Given a dataset $\mX \subset \R^d$ with Assumption~\ref{ass:vkmc}, an $\alpha$-approximation algorithm for \kMeans\ with $\alpha=O(1)$ and integers $k\geq 1$, $T\geq 1$, the local sensitivities $g_i^{(j)}$ in Algorithm~\ref{alg:coreset-vkmc} satisfies that for any $i \in [n]$, $\sup_{\mC\in \gC}\frac{\costC_i(\mX,\mC)}{\costC(\mX,\mC)} \leq 4\tau \sum_{j\in[T]}g_i^{(j)}$, i.e., $\zeta = O(\tau)$.
    Moreover, $\gG:=\sum_{i\in[n],j\in[T]}g_i^{(j)}=O(\alpha kT)$.
\end{lemma}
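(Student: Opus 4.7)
The plan is to treat the two claims separately. The bound $\gG = O(\alpha k T)$ does not invoke Assumption~\ref{ass:vkmc} and is a direct computation: for each party $j$ I would group $\sum_i g_i^{(j)}$ by the clusters $\mB_l^{(j)}$ and use $\sum_{i \in \mB_l^{(j)}} d(\vx_i^{(j)}, \vc_l^{(j)})^2 = \text{cost}(\mB_l^{(j)})$ together with $\sum_l \text{cost}(\mB_l^{(j)}) = \text{cost}^{(j)}$ to collapse the first two pieces of $g_i^{(j)}$ to $\alpha$ each, while the third piece contributes $\sum_l 2\alpha = 2\alpha k$. This yields $\sum_i g_i^{(j)} = 2\alpha(k+1)$ per party, so $\gG = O(\alpha k T)$ after summing over $j \in [T]$.

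For $\zeta = O(\tau)$ I would fix the special party $t$ from Assumption~\ref{ass:vkmc} and prove the pointwise bound $\sup_{\mC} \costC_i(\mX, \mC) / \costC(\mX, \mC) \leq O(\tau) \cdot g_i^{(t)}$; non-negativity of every $g_i^{(j)}$ then upgrades this to the stated $4\tau \sum_j g_i^{(j)}$. My approach mimics the Varadarajan--Xiao sensitivity framework~\citep{varadarajan2012sensitivity}, but with the reference solution lifted to the full space so that Algorithm~\ref{alg:coreset-vkmc}'s purely local partition can still drive a global analysis. Concretely, define virtual full-dimensional centers $\tilde\vc_l := \frac{1}{|\mB_l^{(t)}|} \sum_{i \in \mB_l^{(t)}} \vx_i \in \R^d$; the projection of $\tilde\vc_l$ onto party $t$'s coordinates is exactly the local centroid $\mu_l^{(t)}$ of $\{\vx_i^{(t)} : i \in \mB_l^{(t)}\}$, which is what lets Assumption~\ref{ass:vkmc} convert global quantities around $\tilde\vc_l$ into party-$t$ quantities around $\vc_l^{(t)}$.

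The proof then has three ingredients. First, the reference-cost inequality $\text{cost}^{(t)} \leq \alpha \costC(\mX, \mC)$ for every $\mC$: projecting any global $k$-center set onto party $t$ only shrinks distances, so $\text{OPT}^{(t)} \leq \text{OPT}^{\text{glob}} \leq \costC(\mX, \mC)$, and the $\alpha$-approximation guarantee on $\mC^{(t)}$ yields the claim; this turns the $1/\costC(\mX, \mC)$ in the sensitivity into the $\alpha/\text{cost}^{(t)}$ inside $g_i^{(t)}$. Second, the VX expansion using $\tilde\vc_l$ with $l = \pi(i)$: picking the center of $\mC$ closest to $\tilde\vc_l$ for the outer triangle inequality and averaging over $i' \in \mB_l^{(t)}$ in the inner one gives $d(\vx_i, \mC)^2 \leq 2 d(\vx_i, \tilde\vc_l)^2 + \frac{4}{|\mB_l^{(t)}|} \sum_{i' \in \mB_l^{(t)}} d(\vx_{i'}, \tilde\vc_l)^2 + \frac{4}{|\mB_l^{(t)}|} \sum_{i' \in \mB_l^{(t)}} d(\vx_{i'}, \mC)^2$, and dividing by $\costC(\mX, \mC)$ handles the third piece of $g_i^{(t)}$ directly. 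Third, the global-to-local translation: the bulk sum $\sum_{i' \in \mB_l^{(t)}} d(\vx_{i'}, \tilde\vc_l)^2$ equals $\frac{1}{2|\mB_l^{(t)}|} \sum_{i', j \in \mB_l^{(t)}} \|\vx_{i'} - \vx_j\|^2$ by the centroid identity, and Assumption~\ref{ass:vkmc} applied pairwise plus the fact that the local centroid minimizes the sum of squares bound it by $\tau \sum_{i' \in \mB_l^{(t)}} \|\vx_{i'}^{(t)} - \vc_l^{(t)}\|^2$; for $d(\vx_i, \tilde\vc_l)^2$ I would use Jensen to rewrite it as an average of $\|\vx_i - \vx_j\|^2$ over $j \in \mB_l^{(t)}$, pass to party $t$ by Assumption~\ref{ass:vkmc}, and split through $\vc_l^{(t)}$ using a squared triangle inequality, producing the first two pieces of $g_i^{(t)}$.

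The main obstacle is this third step: $\tilde\vc_l$ is a global object while $g_i^{(t)}$ only sees $\vc_l^{(t)}$, so the Jensen and triangle-inequality manipulations must be arranged carefully to keep all absorbed constants absolute and to avoid spurious $\tau^2$ or $1/|\mB_l^{(t)}|^2$ factors. Once the local upper bounds are in hand, plugging them into the VX expansion of the second step and invoking the reference-cost inequality of the first step gives $\sup_{\mC} \costC_i(\mX, \mC) / \costC(\mX, \mC) \leq O(\tau) g_i^{(t)}$, as required.
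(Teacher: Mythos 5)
Your proposal is correct and follows essentially the same Varadarajan--Xiao-style route as the paper: lift the local clusters to global centroids $\tilde\vc_l$, use the centroid identity plus Assumption~\ref{ass:vkmc} applied pairwise to pass between global and party-$t$ quantities, and use the chain $\text{cost}^{(t)} \leq \alpha\,\mathrm{OPT}^{(t)} \leq \alpha\,\costC(\mX,\mC)$ to normalize; the $\gG$ computation is identical. The only difference is that you expand $d(\tilde\vc_l,\mC)^2$ by averaging the triangle inequality over the cluster (rather than via the sensitivity of the projected multiset as in the paper), which picks up an extra bulk-sum term and yields a constant of $8\tau$ rather than $4\tau$ on the middle piece of $g_i^{(t)}$ --- harmless for the claim $\zeta = O(\tau)$ and for everything downstream.
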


The proof can be found in Section~\ref{sec:sensitivity-vkmc}, and it is partly modified from the dimension-reduction type argument \citep{varadarajan2012sensitivity}, which upper bounds the total sensitivity of a point set in clustering problem by projecting points onto an optimal solution.
Intuitively, if some party $t$ satisfies Assumption~\ref{ass:vkmc}, the partition over $[n]$ corresponding to an $\alpha$-approximation computed using local data will induce a global $\alpha\tau$-approximate solution.
Hence, combining this with the argument mentioned above, we derive that $g_i^{(t)}$ (scaled by $4\tau$) is an upper bound of the global sensitivity.
Though unaware of which party satisfies Assumption~\ref{ass:vkmc}, it suffices to sum up $g_i^{(j)}$ over $j\in[T]$, costing an addtional $T$ in $\gG$.

\begin{proof}[Proof of Theorem~\ref{thm:coreset-vkmc}]
    By Lemma~\ref{lemma:sensitivity-vkmc}, the sensitivity gap $\zeta$ is $O(\tau)$ and the total sensitivity $\gG$ is $O(\alpha kT)$.
    Plugging them into Theorem~\ref{thm:meta_alg} completes the proof.
\end{proof}

\subsection{Proof of Lemma~\ref{lemma:sensitivity-vkmc}}
\label{sec:sensitivity-vkmc}
Our proof is partly inspired by \cite{varadarajan2012sensitivity}.
For preparation, we first introduce the following useful notations.

Suppose the party $t$ in the dataset $\mX$ satisfies Assumption~\ref{ass:vkmc}, and $\gA$ is an $\alpha$-approximation algorithm for \kMeans\ clustering.
Let $\Tilde{\mC}^{(t)}$ be an $\alpha$-approximate solution computed locally in party $t$ using $\gA$, i.e., $\Tilde{\mC}^{(t)}=\gA(\mX^{(t)})=\{\Tilde{\vc}_l^{(t)}:l\in[k]\}$.
We define a mapping $\pi:[n]\rightarrow[k]$ to find the closest center index for each point in the local solution, i.e., $\pi(i)=\argmin_{l\in[k]} d(\vx_i^{(t)},\Tilde{\vc}_l^{(t)})$.
We also denote $\mB_l^{(t)}:=\{i\in[n]: \pi(i)=l\}$ to be the local cluster corresponding to $\Tilde{\vc}_l^{(t)}$.
Note that $\{\mB_l^{(t)}: l\in[k]\}$ is a partition over data as $\mB_l^{(t)}\cap\mB_{l'}^{(t)}=\varnothing$ ($l,l'\in [k]$, $l\neq l'$) and $\cup_{l\in[k]}\mB_l^{(t)}=[n]$.
Let $\Tilde{\mC}:=\{\Tilde{\vc}_l: \Tilde{\vc}_l=\frac{1}{|\mB_l^{(t)}|}\sum_{i\in\mB_l^{(t)}}\vx_i\}$ be a $k$-center set in $\R^d$ lifted from $\R^{d_t}$ based on $\{\mB_l^{(t)}\}$.
The following lemma shows that $\Tilde{\mC}$ is also a constant approximation to the global \kMeans\ clustering.

\begin{lemma}[\bf{Local partition induces global constant apporximation for \kMeans }]
\label{lemma:approx-vkmc}
If party $t$ of a dataset $\mX \subset \R^d$ satisfies Assumption~\ref{ass:vkmc}, then given a local $\alpha$-approximate solution $\Tilde{\mC}^{(t)}$, for any $k$-center set $\mC \in \gC$, we have
    \[
    \costC(\mX, \Tilde{\mC}) \leq \tau \costC(\mX^{(t)}, \Tilde{\mC}^{(t)}) \leq \alpha \tau \costC(\mX, \mC).
    \]
Thus, $\Tilde{\mC}$ is an $\alpha\tau$-approximate solution to the global \kMeans\ clustering.
\end{lemma}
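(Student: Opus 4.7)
The plan is to prove the two inequalities in the chain separately. For the first inequality $\costC(\mX,\tilde{\mC})\le \tau\costC(\mX^{(t)},\tilde{\mC}^{(t)})$, I would start by upper bounding $\costC(\mX,\tilde{\mC})$ through the assignment induced by the local partition $\{\mB_l^{(t)}\}$: since $\min$ only shrinks the cost, $\costC(\mX,\tilde{\mC})\le\sum_{l}\sum_{i\in\mB_l^{(t)}}\|\vx_i-\tilde{\vc}_l\|^2$. The crucial observation is that $\tilde{\vc}_l$ is defined to be exactly the centroid in $\R^d$ of $\{\vx_i:i\in \mB_l^{(t)}\}$, so I can apply the standard centroid--pairwise identity $\sum_{p\in P}\|p-\mu(P)\|^2=\frac{1}{2|P|}\sum_{p,q\in P}\|p-q\|^2$ to rewrite each cluster's contribution in terms of pairwise distances $\|\vx_i-\vx_j\|^2$.

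At this point Assumption~\ref{ass:vkmc} plugs in cleanly: applying $\|\vx_i-\vx_j\|^2\le \tau\|\vx_i^{(t)}-\vx_j^{(t)}\|^2$ termwise pulls a factor $\tau$ out and converts everything to pairwise distances in party $t$. Then I would run the centroid--pairwise identity in reverse on each cluster in $\R^{d_t}$, obtaining $\tau\sum_l\sum_{i\in\mB_l^{(t)}}\|\vx_i^{(t)}-\bar{\vc}_l^{(t)}\|^2$, where $\bar{\vc}_l^{(t)}$ is the true centroid in party $t$ of the cluster. Finally, since the centroid is the optimal single-center representative, $\sum_{i\in\mB_l^{(t)}}\|\vx_i^{(t)}-\bar{\vc}_l^{(t)}\|^2\le \sum_{i\in\mB_l^{(t)}}\|\vx_i^{(t)}-\tilde{\vc}_l^{(t)}\|^2$, which assembles to $\tau\costC(\mX^{(t)},\tilde{\mC}^{(t)})$ because $\mB_l^{(t)}$ is exactly the set of points assigned to $\tilde{\vc}_l^{(t)}$ in the local solution.

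For the second inequality, given any global $\mC=\{\vc_1,\dots,\vc_k\}$, I would project it onto party $t$'s coordinates, $\mC^{(t)}:=\{\vc_l^{(t)}:l\in[k]\}$. Because projection onto a coordinate subspace cannot increase Euclidean distance, $\|\vx_i^{(t)}-\vc_l^{(t)}\|\le\|\vx_i-\vc_l\|$ holds for every pair, hence $\costC(\mX^{(t)},\mC^{(t)})\le\costC(\mX,\mC)$. Combining with the $\alpha$-approximation guarantee of $\gA$ applied locally ($\costC(\mX^{(t)},\tilde{\mC}^{(t)})\le\alpha\cdot\costC(\mX^{(t)},\mC^{(t)})$ for any feasible $\mC^{(t)}$) and multiplying by $\tau$ yields $\tau\costC(\mX^{(t)},\tilde{\mC}^{(t)})\le\alpha\tau\costC(\mX,\mC)$.

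I expect no genuine obstacle; the only subtle step is the first inequality, where one must remember that $\tilde{\vc}_l$ is defined as the centroid in $\R^d$ of the lifted cluster (not as a projection of $\tilde{\vc}_l^{(t)}$), which is precisely what makes the centroid--pairwise identity available, and that $\mB_l^{(t)}$ is the \emph{closest-center} partition of $\tilde{\mC}^{(t)}$ (so $\sum_l\sum_{i\in\mB_l^{(t)}}\|\vx_i^{(t)}-\tilde{\vc}_l^{(t)}\|^2$ is indeed equal to $\costC(\mX^{(t)},\tilde{\mC}^{(t)})$, not just an upper bound).
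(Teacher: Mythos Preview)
Your proposal is correct and follows essentially the same route as the paper: upper bound by the non-optimal assignment, apply the centroid--pairwise identity in $\R^d$, invoke Assumption~\ref{ass:vkmc} termwise, reverse the identity in $\R^{d_t}$, and then use the $\alpha$-approximation together with the fact that projecting centers to party $t$'s coordinates cannot increase the cost. In fact you are slightly more careful than the paper at one point: the paper writes the step $\sum_l\frac{1}{2|\mB_l^{(t)}|}\sum_{i,j\in\mB_l^{(t)}}d(\vx_i^{(t)},\vx_j^{(t)})^2=\costC(\mX^{(t)},\tilde{\mC}^{(t)})$ as an equality, whereas you correctly insert the intermediate inequality $\sum_{i\in\mB_l^{(t)}}\|\vx_i^{(t)}-\bar{\vc}_l^{(t)}\|^2\le\sum_{i\in\mB_l^{(t)}}\|\vx_i^{(t)}-\tilde{\vc}_l^{(t)}\|^2$ (since $\tilde{\vc}_l^{(t)}$ is an output of $\gA$ and need not be the centroid of its own Voronoi cell).
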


\begin{proof}
    \begin{align*}
        \costC(\mX, \Tilde{\mC}) & = \sum_{i=1}^n d(\vx_i, \Tilde{\mC})^2 & \\
        & \leq \sum_{l=1}^{k} \sum_{i\in\mB_l^{(t)}} d(\vx_i, \Tilde{\vc}_l)^2 & (\text{assignment by $\mB_l^{(t)}$ is not optimal})\\
        & = \sum_{l=1}^{k} \frac{1}{2|\mB_l^{(t)}|} \sum_{i,j\in\mB_l^{(t)}} d(\vx_i, \vx_j)^2 & (\text{a standard property of \kMeans\ }) \\
        & \leq \sum_{l=1}^{k} \frac{\tau}{2|\mB_l^{(t)}|} \sum_{i,j\in\mB_l^{(t)}} d(\vx_i^{(t)}, \vx_j^{(t)})^2 & (\text{by Assumption~\ref{ass:vkmc}})\\
        & = \tau \costC(\mX^{(t)}, \Tilde{\mC}^{(t)}) &\\
        & \leq \alpha \tau \costC(\mX^{(t)}, \mC^{(t)}) & (\text{$\Tilde{\mC}^{(t)}$ is $\alpha$-approximation}) \\
        & = \alpha \tau \sum_{i=1}^{n} d(\vx_i^{(t)}, \mC^{(t)})^2 & \\
        & \leq \alpha \tau \sum_{i=1}^{n} d(\vx_i, \mC)^2 & \\
        & = \alpha \tau \costC(\mX, \mC).
    \end{align*}
    Note that $|\Tilde{\mC}|=k$, and the above inequality holds for any $\mC \in \gC$ with $|\mC|=k$.
    Minimizing the last item over $\mC \in \gC$ completes the proof.
\end{proof}

Next, since we get a global constant approximation $\Tilde{\mC}$, we can upper bound the global sensitivities via projecting $\mX$ onto $\Tilde{\mC}$.
Concretely, the following lemma shows that $g_i^{(t)}$ (scaled by $4\tau$) is an upper bound of the global sensitivity of $\vx_i$ if Assumption~\ref{ass:vkmc} holds for party $t$.
\begin{lemma}[\bf{Upper bounding the global sensitivities for \kMeans\ }]
    \label{lemma:g-vkmc}
    If party $t$ of a dataset $\mX \subset \R^d$ satisfies Assumption~\ref{ass:vkmc}, then given a local $\alpha$-approximate solution $\Tilde{\mC}^{(t)}$, we have
        \begin{align}
            \label{eq:g-vkmc}
            \sup_{\mC\in\gC} \frac{d(\vx_i,\mC)^2}{\costC(\mX,\mC)} \leq \frac{4\alpha\tau d(\vx_i^{(t)},\Tilde{\mC}^{(t)})^2}{\costC(\mX^{(t)},\Tilde{\mC}^{(t)})} + \frac{4\alpha\tau \sum_{j\in\mB_{\pi(i)}^{(t)}}d(\vx_j^{(t)},\Tilde{\mC}^{(t)})^2}{|\mB_{\pi(i)}^{(t)}|\costC(\mX^{(t)}, \Tilde{\mC}^{(t)})} + \frac{8\alpha\tau}{|\mB_{\pi(i)}^{(t)}|}.
        \end{align}
\end{lemma}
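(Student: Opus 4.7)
The plan is to upper bound $d(\vx_i,\mC)^2$ for an arbitrary $k$-center set $\mC$ by quantities that live on party $t$, using the lifted solution $\Tilde{\mC}$ as a bridge. The first move is the triangle inequality via the lifted center $\Tilde{\vc}_{\pi(i)}$, namely
\[
d(\vx_i,\mC)^2 \le 2 d(\vx_i,\Tilde{\vc}_{\pi(i)})^2 + 2 d(\Tilde{\vc}_{\pi(i)},\mC)^2,
\]
so that the global sensitivity splits into a term measuring how close $\vx_i$ is to its lifted center and a term measuring how close the lifted center is to $\mC$.

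For the second term, I would bound $d(\Tilde{\vc}_{\pi(i)},\mC)^2$ by averaging the two-step triangle inequality $d(\Tilde{\vc}_{\pi(i)},\mC)^2 \le 2 d(\Tilde{\vc}_{\pi(i)},\vx_j)^2 + 2 d(\vx_j,\mC)^2$ over all $j\in\mB_{\pi(i)}^{(t)}$. The first average is at most $\frac{2}{|\mB_{\pi(i)}^{(t)}|}\costC(\mX,\Tilde{\mC})$, which by Lemma~\ref{lemma:approx-vkmc} is at most $\frac{2\alpha\tau}{|\mB_{\pi(i)}^{(t)}|}\costC(\mX,\mC)$; the second average is at most $\frac{1}{|\mB_{\pi(i)}^{(t)}|}\costC(\mX,\mC)$. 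Dividing by $\costC(\mX,\mC)$ yields the $\frac{8\alpha\tau}{|\mB_{\pi(i)}^{(t)}|}$ term in~\eqref{eq:g-vkmc}.

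For the first term, I would first bound $d(\vx_i,\Tilde{\vc}_{\pi(i)})^2$ by lifting to party $t$. Since $\Tilde{\vc}_{\pi(i)}$ is the mean of $\{\vx_j:j\in\mB_{\pi(i)}^{(t)}\}$, Jensen's inequality gives $d(\vx_i,\Tilde{\vc}_{\pi(i)})^2 \le \frac{1}{|\mB_{\pi(i)}^{(t)}|}\sum_{j\in\mB_{\pi(i)}^{(t)}} d(\vx_i,\vx_j)^2$. Applying Assumption~\ref{ass:vkmc} pointwise to replace $d(\vx_i,\vx_j)^2$ by $\tau d(\vx_i^{(t)},\vx_j^{(t)})^2$, and then applying the triangle inequality $d(\vx_i^{(t)},\vx_j^{(t)})^2 \le 2 d(\vx_i^{(t)},\Tilde{\vc}_{\pi(i)}^{(t)})^2 + 2 d(\vx_j^{(t)},\Tilde{\vc}_{\pi(i)}^{(t)})^2$, produces
\[
d(\vx_i,\Tilde{\vc}_{\pi(i)})^2 \le 2\tau\, d(\vx_i^{(t)},\Tilde{\mC}^{(t)})^2 + \tfrac{2\tau}{|\mB_{\pi(i)}^{(t)}|}\textstyle\sum_{j\in\mB_{\pi(i)}^{(t)}} d(\vx_j^{(t)},\Tilde{\mC}^{(t)})^2,
\]
using that $\pi(j)=\pi(i)$ for $j\in\mB_{\pi(i)}^{(t)}$ so $d(\vx_j^{(t)},\Tilde{\vc}_{\pi(i)}^{(t)})=d(\vx_j^{(t)},\Tilde{\mC}^{(t)})$.

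Finally, to pass from a bound in terms of $\costC(\mX,\mC)$ to one in terms of $\costC(\mX^{(t)},\Tilde{\mC}^{(t)})$, I would use Lemma~\ref{lemma:approx-vkmc} again in the form $\costC(\mX^{(t)},\Tilde{\mC}^{(t)}) \le \alpha\,\costC(\mX,\mC)$ (which follows from the chain $\costC(\mX^{(t)},\Tilde{\mC}^{(t)})\le\alpha\costC(\mX^{(t)},\mC^{(t)})\le\alpha\costC(\mX,\mC)$). Substituting this, combined with the two triangle-inequality steps above, yields exactly the three-term bound~\eqref{eq:g-vkmc}. The main obstacle is the careful bookkeeping of the factors of $2$, $\alpha$, and $\tau$ across the two triangle inequalities and the Jensen step; the only substantive ingredient beyond elementary inequalities is Lemma~\ref{lemma:approx-vkmc}, used twice — once to bound $\costC(\mX,\Tilde{\mC})$ from above by $\alpha\tau\costC(\mX,\mC)$, and once to bound $\costC(\mX,\mC)$ from below by $\costC(\mX^{(t)},\Tilde{\mC}^{(t)})/\alpha$.
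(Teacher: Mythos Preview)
Your overall route matches the paper's: triangle inequality through $\Tilde{\vc}_{\pi(i)}$, Jensen plus Assumption~\ref{ass:vkmc} to handle $d(\vx_i,\Tilde{\vc}_{\pi(i)})^2$, and Lemma~\ref{lemma:approx-vkmc} to swap $\costC(\mX,\mC)$ for $\costC(\mX^{(t)},\Tilde{\mC}^{(t)})$ in the denominator. The only packaging difference is how you bound $d(\Tilde{\vc}_{\pi(i)},\mC)^2$: the paper introduces the projected multiset $\pi(\mX)=\{\Tilde{\vc}_{\pi(i)}:i\in[n]\}$, shows $\costC(\pi(\mX),\mC)\le 4\alpha\tau\,\costC(\mX,\mC)$, and then uses the trivial sensitivity bound $d(\Tilde{\vc}_l,\mC)^2/\costC(\pi(\mX),\mC)\le 1/|\mB_l^{(t)}|$, whereas you average a triangle inequality over $j\in\mB_{\pi(i)}^{(t)}$.

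There is, however, one step in your averaging argument that does not hold. You assert that the first average satisfies
\[
\frac{2}{|\mB_{\pi(i)}^{(t)}|}\sum_{j\in\mB_{\pi(i)}^{(t)}} d(\Tilde{\vc}_{\pi(i)},\vx_j)^2 \;\le\; \frac{2}{|\mB_{\pi(i)}^{(t)}|}\,\costC(\mX,\Tilde{\mC}),
\]
but $\costC(\mX,\Tilde{\mC})$ assigns each $\vx_j$ to its \emph{globally} nearest center in $\Tilde{\mC}$, and for $j\in\mB_{\pi(i)}^{(t)}$ that center need not be $\Tilde{\vc}_{\pi(i)}$ (the partition $\{\mB_l^{(t)}\}$ is defined from party~$t$'s local geometry only). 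One can build small two-party examples where $\sum_{j\in\mB_l^{(t)}} d(\vx_j,\Tilde{\vc}_l)^2$ strictly exceeds $\costC(\mX,\Tilde{\mC})$. The fix is to skip $\costC(\mX,\Tilde{\mC})$ altogether and use the intermediate inequality already established inside Lemma~\ref{lemma:approx-vkmc}:
\[
\sum_{j\in\mB_{\pi(i)}^{(t)}} d(\Tilde{\vc}_{\pi(i)},\vx_j)^2 \;\le\; \sum_{l\in[k]}\sum_{j\in\mB_l^{(t)}} d(\vx_j,\Tilde{\vc}_l)^2 \;\le\; \tau\,\costC(\mX^{(t)},\Tilde{\mC}^{(t)}) \;\le\; \alpha\tau\,\costC(\mX,\mC).
\]
This yields the same $\frac{8\alpha\tau}{|\mB_{\pi(i)}^{(t)}|}$ term, and the remainder of your plan goes through with the correct constants.
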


\begin{proof}
    Let the multi-set $\pi(\mX):=\{\Tilde{\vc}_{\pi(i)}:i\in[n]\}$ be the projection of $\mX$ to $\Tilde{\mC}$.
    We denote $s_{\mX}(\vx_i)$ to be $\sup_{\mC\in\gC}\frac{d(\vx_i,\mC)^2}{\costC(\mX,\mC)}$ for $i\in[n]$.
    Similarly, $s_{\pi(\mX)}(\Tilde{\vc}_l):=\sup_{\mC\in\gC}\frac{d(\Tilde{\vc}_l,\mC)^2}{\costC(\pi(\mX),\mC)}$ for $l\in[\Tilde{k}]$.
    First we show that for any $\mC\in\gG$, the \kMeans\ objective of the multi-set $\pi(\mX)$ w.r.t. $\mC$ can be upper bounded by that of $\mX$ with a constant factor.
    \begin{align*}
        \costC(\pi(\mX),\mC) & = \sum_{i=1}^{n} d(\Tilde{\vc}_{\pi(i)},\mC)^2 &\\
        & = \sum_{i=1}^{n} \min_{l\in[k]} d(\Tilde{\vc}_{\pi(i)}, \vc_l)^2 &\\
        & \leq \sum_{i=1}^{n} \min_{l\in[k]} \left( 2d(\vx_i,\vc_l)^2+2d(\vx_i,\Tilde{\vc}_{\pi(i)})^2 \right) & (\text{triangle inequality for $d^2$}) \\
        & = 2\costC(\mX,\mC)+2\costC(\mX,\Tilde{\mC}) &\\
        & \leq 2(1+\alpha\tau)\costC(\mX,\mC) & (\text{Lemma~\ref{lemma:approx-vkmc}})\\
        & \leq 4\alpha\tau\costC(\mX,\mC). & (\text{$\alpha\tau\geq 1$})
        \numberthis \label{ineq:cost_pi_kmeans}\\
    \end{align*}
    Then for any $\mC\in\gC$ and $\vx_i\in\mX$, we have
    \begin{align*}
        & && d(\vx_i,\mC)^2 & \\
        &= &&\min_{l\in[k]} d(\vx_i,\vc_l)^2 &\\
        & \leq &&\min_{l\in[k]} \left( 2d(\vx_i,\Tilde{\vc}_{\pi(i)})^2+2d(\Tilde{\vc}_{\pi(i)},\vc_l)^2 \right) & (\text{triangle inequality of $d^2$}) \\
        & = &&2d(\vx_i, \Tilde{\vc}_{\pi(i)})^2+2d(\Tilde{\vc}_{\pi(i)},\mC)^2 &\\
        & \leq &&2d(\vx_i, \Tilde{\vc}_{\pi(i)})^2+2s_{\pi(\mX)}(\Tilde{\vc}_{\pi(i)})\costC(\pi(\mX),\mC) & (\text{definition of $s_{\pi(\mX)}$})\\
        & \leq &&2d(\vx_i, \Tilde{\vc}_{\pi(i)})^2+8\alpha\tau s_{\pi(\mX)}(\Tilde{\vc}_{\pi(i)})\costC(\mX,\mC) & (\text{from \eqref{ineq:cost_pi_kmeans}}) \\
        & = &&2d(\vx_i, \frac{1}{|\mB_{\pi(i)}^{(t)}|}\sum_{j\in\mB_{\pi(i)}^{(t)}}\vx_j)^2+8\alpha\tau s_{\pi(\mX)}(\Tilde{\vc}_{\pi(i)})\costC(\mX,\mC) &  (\text{definition of $\Tilde{\mC}$}) \\
        & \leq &&\frac{2}{|\mB_{\pi(i)}^{(t)}|}\sum_{j\in\mB_{\pi(i)}^{(t)}} d(\vx_i,\vx_j)^2 + 8\alpha\tau s_{\pi(\mX)}(\Tilde{\vc}_{\pi(i)})\costC(\mX,\mC) & (\text{convexity of $d^2$}) \\
        & \leq &&\frac{2}{|\mB_{\pi(i)}^{(t)}|}\sum_{j\in\mB_{\pi(i)}^{(t)}} d(\vx_i,\vx_j)^2 + \frac{8\alpha\tau}{|\mB_{\pi(i)}^{(t)}|}\costC(\mX,\mC) & (\text{$s_{\pi(\mX)}(\Tilde{\vc}_{\pi(i)}) \leq \frac{1}{|\mB_{\pi(i)}^{(t)}|}$})\\
        & \leq &&\left( \frac{2}{|\mB_{\pi(i)}^{(t)}|}\sum_{j\in\mB_{\pi(i)}^{(t)}} \frac{d(\vx_i,\vx_j)^2}{\costC(\mX,\mC)} + \frac{8\alpha\tau}{|\mB_{\pi(i)}^{(t)}|} \right) \costC(\mX,\mC). & \\
    \end{align*}

    Thus,
    \begin{align*}
        & &&\frac{d(\vx_i, \mC)^2}{\costC(\mX,\mC)} & \\
        & \leq &&\frac{2}{|\mB_{\pi(i)}^{(t)}|}\sum_{j\in\mB_{\pi(i)}^{(t)}} \frac{d(\vx_i,\vx_j)^2}{\costC(\mX,\mC)} + \frac{8\alpha\tau}{|\mB_{\pi(i)}^{(t)}|} & \\
        & \leq &&\frac{2\tau}{|\mB_{\pi(i)}^{(t)}|}\sum_{j\in\mB_{\pi(i)}^{(t)}} \frac{d(\vx_i^{(t)},\vx_j^{(t)})^2}{\costC(\mX,\mC)} + \frac{8\alpha\tau}{|\mB_{\pi(i)}^{(t)}|} & (\text{Assumption~\ref{ass:vkmc}}) \\
        & \leq &&\frac{2\alpha\tau}{|\mB_{\pi(i)}^{(t)}|}\sum_{j\in\mB_{\pi(i)}^{(t)}} \frac{d(\vx_i^{(t)},\vx_j^{(t)})^2}{\costC(\mX^{(t)},\Tilde{\mC}^{(t)})} + \frac{8\alpha\tau}{|\mB_{\pi(i)}^{(t)}|} & (\text{Lemma~\ref{lemma:approx-vkmc}}) \\
        & \leq &&\frac{4\alpha\tau}{|\mB_{\pi(i)}^{(t)}|}\sum_{j\in\mB_{\pi(i)}^{(t)}} \frac{d(\vx_i^{(t)},\Tilde{\vc}_{\pi(i)})^2+d(\vx_j^{(t)},\Tilde{\vc}_{\pi(i)})^2}{\costC(\mX^{(t)},\Tilde{\mC}^{(t)})} + \frac{8\alpha\tau}{|\mB_{\pi(i)}^{(t)}|} & (\text{triangle inequality of $d^2$}) \\
        & = &&\frac{4\alpha\tau d(\vx_i^{(t)},\Tilde{\mC}^{(t)})^2}{\costC(\mX^{(t)},\Tilde{\mC}^{(t)})} + \frac{4\alpha\tau \sum_{j\in\mB_{\pi(i)}^{(t)}}d(\vx_j^{(t)},\Tilde{\mC}^{(t)})^2}{|\mB_{\pi(i)}^{(t)}|\costC(\mX^{(t)}, \Tilde{\mC}^{(t)})} + \frac{8\alpha\tau}{|\mB_{\pi(i)}^{(t)}|}, &\\
    \end{align*}
    taking supremum over $\mC\in\gC$ completes the proof.
\end{proof}

Now we are ready to prove Lemma~\ref{lemma:sensitivity-vkmc}.

\begin{proof}[Proof of Lemma~\ref{lemma:sensitivity-vkmc}]
    By Lemma~\ref{lemma:g-vkmc}, since some party $t\in[T]$ satisfies Assumption~\ref{ass:vkmc}, then
    \[\sup_{\mC\in \gC}\frac{\costC_i(\mX,\mC)}{\costC(\mX,\mC)} \leq 4\tau g_i^{(t)} \leq 4\tau \sum_{j\in[T]} g_i^{(j)},\]
    where $g_i^{(t)}$ is defined as the right side of \eqref{eq:g-vkmc} for any $t\in[T]$.
    Moreover,
    \begin{align*}
        \gG & = \sum_{i\in[n]} \sum_{j\in[T]} g_i^{(j)} &\\
        & = \sum_{j\in[T]} \sum_{i\in[n]} \left(\frac{\alpha d(\vx_i^{(j)},\Tilde{\mC}^{(j)})^2}{\costC(\mX^{(j)},\Tilde{\mC}^{(j)})} + \frac{\alpha \sum_{i'\in\mB_{\pi(i)}^{(j)}}d(\vx_{i'}^{(j)},\Tilde{\mC}^{(j)})^2}{|\mB_{\pi(i)}^{(j)}|\costC(\mX^{(j)}, \Tilde{\mC}^{(j)})} + \frac{2\alpha}{|\mB_{\pi(i)}^{(j)}|}\right) &\\
        & = \sum_{j\in[T]} \left(\alpha + \alpha + 2k\alpha\right) &\\
        & = 2(k+1)\alpha T.
    \end{align*}
    Hence, $\zeta=O(\tau)$ and $\gG=O(\alpha k T)$, which completes the proof.
\end{proof}

\section{Robust Coresets for \textrm{VRLR} and \textrm{VKMC}}
In this section, we prove that even if the data assumptions~\ref{ass:vrlr} and~\ref{ass:vkmc} fail to  hold, Algorithms~\ref{alg:coreset-vrlr} and~\ref{alg:coreset-vkmc} still provide robust coresets for \textrm{VRLR} (Theorem~\ref{thm:robust-coreset-vrlr}) and \textrm{VKMC} (Theorem~\ref{thm:robust-coreset-vkmc}) in the flavor of approximating with \emph{outliers}.
\subsection{Robust coreset}
In this section, we introduce a general  definition of robust coreset.
For preparation, we first give some notations for a function space, which can be easily specialized to the cases for \textrm{VRLR} and \textrm{VKMC}.
Given a dataset $\mX$ of size $n$, let $F$ be a set of cost functions from $\mX$ to $\R_{\geq 0}$.
For a subset $S \subseteq [n]$ with a weight function $w:S\rightarrow\R_{\geq0}$, we denote $f(S)$ to be the weighted total cost over $S$ for any $f \in F$, i.e., $f(S)=\sum_{i \in S} w(i) f(\vx_i)$.
With a slight abuse of notation, we can see $\mX$ as $[n]$ with unit weight such that $f(\mX)=\sum_{i\in [n]}f(\vx_i)$.
Now we define the \emph{robust coreset} as follows.
\begin{definition}[\bf{Robust coreset}]
    Let $\beta \in [0,1)$, and $\eps \in (0,1)$.
    Given a set $F$ of functions from $\mX$ to $\R_{\geq 0}$, we say that a weighted subset of $S \subseteq [n]$ is a $(\beta,\eps)$-robust coreset of $\mX$ if for any $f\in F$, there exists a subset $O_f \subseteq [n]$ such that
    \begin{align*}
        \frac{|O_f|}{n} \leq \beta, \frac{|S \cap O_f|}{|S|} &\leq \beta, \\
        \left| f(\mX \backslash O_f) - f(S \backslash O_f) \right| &\leq \eps f(\mX).
    \end{align*}
\end{definition}

Roughly speaking, we allow a small portion of data to be treated as outliers and neglected both in $\mX$ and $S$ when considering the quality of $S$.
Note that a $(0,\eps)$-robust coreset is equivalent to a standard $\eps$-coreset, and $S$ provides a slightly weaker approximation guarantee with additive error if $\beta > 0$.
Also note that our definition of robust coreset is a bit different from that in previous work~\citep{feldman2011unified,huang2018epsilon,wang2021robust}, which focus on generating robust coresets from uniform sampling, but basically they all capture similar ideas.
This is because we will be interested in the robustness of importance sampling under the case where a small percentage of data have unbounded sensitivity gap in Algorithm~\ref{alg:dis}, and the above definition gives simpler results.

We propose the following theorem to show that $(S,w)$ returned by Algorithm~\ref{alg:dis} is a $(\beta,\eps)$-robust coreset when size $m$ is large enough.
\begin{theorem}[\bf{The robustness of Algorighm~\ref{alg:dis}}]
    \label{thm:robust_dis}
    Let $\beta$,$\eps\in(0,1)$.
    Given a dataset $\mX$ of size $n$ and a set $F$ of functions from $\mX$ to $\R_{\geq 0}$, let $g_i=\sum_{j\in[T]}g_i^{(j)}$ and $\gG=\sum_{i\in[n]}g_i$.
    Let $S \subseteq [n]$ be a sample of size $m$ drawn i.i.d from $[n]$ with probability proportional to $\{g_i:i\in[n]\}$, where each sample $i\in[n]$ is selected with probability $\frac{g_i}{\gG}$ and has weight $w(i):=\frac{\gG}{m g_i}$.
    If $\forall i\in[n]$, $ j\in[T]$ we have $g_i^{(j)} \geq 1/n$,
    let $s_i:=\sup_{f\in F}\frac{f(\vx_i)}{f(\mX)}$ and $c=  \frac{2\sum_{i\in[n]}s_i}{\beta T}$.
    If
    \begin{align}\label{eq:robust_size}
        m = O\left( \frac{c^2\gG^2}{\eps^2}\left(\dim(F) + \log{\frac{1}{\delta}} \right) \right),
    \end{align}
    where $\dim(F)$ is the pseudo-demension of $F$.
    Then with probability $1-\delta$, $(S, w)$ is a $(\beta, \eps)$-robust coreset of $\mX$.
\end{theorem}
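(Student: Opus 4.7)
My strategy is to extend the \FL\ importance-sampling argument underlying Theorem~\ref{thm:fl} to a robust regime in which a $\beta$-fraction of the data may be declared outliers and removed before comparing $f(\mX)$ to $f(S)$. The key design choice is the outlier set; for each $f \in F$ I would set
\[
O_f := \{\, i \in [n] : f(\vx_i) > c\, g_i\, f(\mX)\,\}.
\]
This choice forces the effective sensitivity ratio $\frac{f(\vx_i)/f(\mX)}{g_i/\gG}$ to be at most $c\gG$ for every $i \notin O_f$, which is exactly what is needed to run Bernstein-type concentration on the importance-sampling estimator restricted to $\mX\setminus O_f$.

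Next I would verify the two cardinality conditions. Summing the defining inequality of $O_f$ over $i \in O_f$ and using the trivial bound $\sum_{i\in O_f}f(\vx_i) \leq f(\mX)$ yields $\sum_{i\in O_f} g_i < 1/c$. The hypothesis $g_i^{(j)} \geq 1/n$ gives $g_i \geq T/n$ and $\gG \geq T$; combined with $\sum_i s_i \geq 1$ (which holds because $\sum_i f(\vx_i)/f(\mX)=1$ for every $f$) and hence $c \geq 2/(\beta T)$, this produces
\[
\frac{|O_f|}{n} \leq \frac{1}{cT} \leq \frac{\beta}{2}, \qquad \Pr[\xi \in O_f] \;=\; \sum_{i\in O_f}\frac{g_i}{\gG} \leq \frac{1}{c\gG} \leq \frac{\beta}{2},
\]
where $\xi$ denotes a single draw from the importance-sampling distribution. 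A Chernoff bound combined with the small expected mass then gives $|S\cap O_f|/m \leq \beta$ with high probability for any fixed $f$.

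For the approximation inequality on $\mX\setminus O_f$, I would bound the deviation of the truncated estimator $f(S\setminus O_f) = \sum_{\xi \in S} w(\xi)\,f(\xi)\,\mathbf{1}[\xi \notin O_f]$ from its mean $f(\mX\setminus O_f)$ via Bernstein's inequality, using the effective sensitivity bound $c\gG$ on $\mX\setminus O_f$ to control both the per-sample range and the variance of each summand. This is the same calculation as in Theorem~\ref{thm:fl} with $\zeta\gG$ replaced by $c\gG$, and it produces exactly the $m = O(c^2\gG^2\eps^{-2}(\dim(F) + \log(1/\delta)))$ sample complexity in the statement.

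The main obstacle I anticipate is making both per-$f$ arguments uniform over all of $F$ while simultaneously letting $O_f$ vary with $f$. To handle this I would observe that the set system $\{O_f : f \in F\}$ consists of the positive level sets of the functions $\phi_f(i) := f(\vx_i) - c\, g_i\, f(\mX)$, whose combinatorial complexity is controlled by the pseudo-dimension $\dim(F)$. A standard VC / pseudo-dimension uniform-convergence argument then upgrades both the outlier-count concentration and the Bernstein approximation bound to simultaneous statements over $F$, and a final union bound over the two bad events, each of probability at most $\delta/2$, completes the proof.
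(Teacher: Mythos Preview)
Your proposal is essentially correct and tracks the paper's proof closely: you define the same outlier set $O_f=\{i:f(\vx_i)>c\,g_i\,f(\mX)\}$, you exploit $g_i\geq T/n$ and $c\geq 2/(\beta T)$ to bound $|O_f|/n$ and the sampling mass of $O_f$ by $\beta/2$, and you invoke the same $\eps$-approximation argument (the paper's Lemma~\ref{lemma:eps-approximation}, applied with $r=c\,f(\mX)$) to get the additive error $|f(\mX\setminus O_f)-f(S\setminus O_f)|\leq c\gG\,\eps\,f(\mX)$ uniformly in $f$.

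The one place you diverge is in making the \emph{cardinality} bound $|S\cap O_f|/|S|\leq\beta$ uniform over $f$. You propose to control the VC complexity of the set system $\{O_f:f\in F\}$ via $\dim(F)$ and then run a uniform Chernoff/VC argument. That works (indeed $O_f$ is a sublevel set of $i\mapsto f(\vx_i)/g_i$, and the $m$ in the statement satisfies $m\gtrsim \beta^{-2}\dim(F)$ because $c\gG\geq 2/\beta$), but the paper sidesteps it more cheaply: it first fixes the single set $O:=\{i:s_i\geq c\,g_i\}$ with $s_i=\sup_{f}f(\vx_i)/f(\mX)$, proves $|O|/n\leq\beta/2$ and $\Pr[\xi\in O]\leq\beta/2$, and then applies \emph{one} ordinary Chernoff bound to get $|S\cap O|/|S|\leq\beta$. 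Since $O_f\subseteq O$ for every $f$, both cardinality requirements follow for all $f$ simultaneously with no uniformity machinery at all. Your route costs an extra VC argument; the paper's buys simplicity by taking the supremum over $f$ pointwise before defining the outlier set.
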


The proof is in Section~\ref{sec:proof_robust_dis}.
Recall that the term $\frac{s_i}{g_i}$ represents the sensitivity gap of point $\vx_i$, and Algorithm~\ref{alg:dis} guarantees sublinear communication complexity only if the maximum sensitivity gap $\zeta$ over all points is independent of $n$.
The main idea in the above theorem is that we can reduce the portion of potential outliers (with large sensitivity gap) to a small constant both in $\mX$ and $S$ via scaling sample size $m$ by a sufficiently large constant.

\subsection{Robust coresets for \textrm{VRLR}}
The following theorem shows that Algorithm~\ref{alg:coreset-vrlr} returns a robust coreset for \textrm{VRLR} when sample size $m$ is large enough.
Note that $m$ is still independent of $n$.
\begin{theorem}[\bf{Robust coresets for \textrm{VRLR}}]\label{thm:robust-coreset-vrlr}
    For a given dataset $\mX \subset \R^d$, integer $T \geq 1$ and constants $\beta, \eps, \delta \in (0,1)$, with probability at least $1-\delta$, Algorithm~\ref{alg:coreset-vrlr} constructs a $(\beta,\eps)$-robust coreset for \textrm{VRLR} of size
    \[
    m = O\left( \frac{d^4}{\eps^2\beta^2T^2}\left(d^2+\log{\frac{1}{\delta}}\right) \right),
    \]
    and uses communication complexity $O(mT)$.
\end{theorem}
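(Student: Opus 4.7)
The plan is to invoke the general robustness result for the meta-algorithm (Theorem~\ref{thm:robust_dis}) with the function class $F_{\mathrm{VRLR}} = \{f_\vtheta(i) := (\vx_i^\top\vtheta - y_i)^2 + R(\vtheta)/n : \vtheta \in \R^d\}$. First I would verify the precondition $g_i^{(j)} \geq 1/n$, which holds by construction of Algorithm~\ref{alg:coreset-vrlr} since $g_i^{(j)} = \norm{\vu_i^{(j)}}^2 + 1/n \geq 1/n$. The proof then reduces to upper-bounding the three quantities $\sum_{i\in[n]} s_i$, $\gG$, and $\dim(F_{\mathrm{VRLR}})$ appearing in the bound of Theorem~\ref{thm:robust_dis}.

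For $\gG$, the same calculation used in the proof of Theorem~\ref{thm:coreset-vrlr} gives $\gG = \sum_{j\in[T]}(\normF{\mU^{(j)}}^2 + 1) = \sum_{j\in[T]}(d'_j + 1) \leq d + T + 1 = O(d)$ when $T \leq d$. For $\dim(F_{\mathrm{VRLR}})$, I would cite the standard pseudo-dimension bound of $O(d^2)$ for squared linear-regression loss functions, which is implicitly what drives the $d^2\log\gG$ factor in Theorem~\ref{thm:fl}. The main obstacle, and the only genuinely new ingredient, is to bound $\sum_{i\in[n]} s_i$ \emph{without} invoking Assumption~\ref{ass:vrlr}, since that assumption was needed in Lemma~\ref{lem:vrlr-sensitivity} to control individual sensitivities but is no longer available here.

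For this last step, I would use a joint-leverage-score argument on the extended matrix $\mX' = [\mX, \vy] \in \R^{n\times(d+1)}$. Writing $\vx'_i = [\vx_i; y_i]$ and using the reparametrization $\vw = [\vtheta; -1]$, we get $(\vx_i^\top\vtheta - y_i)^2 = (\vx_i^{\prime\top}\vw)^2$ and $\norm{\mX\vtheta - \vy}^2 = \norm{\mX'\vw}^2$, so
\[
\sup_{\vtheta \in \R^d} \frac{(\vx_i^\top\vtheta - y_i)^2}{\norm{\mX\vtheta - \vy}^2} \leq \sup_{\vw \neq 0}\frac{(\vx_i^{\prime\top}\vw)^2}{\norm{\mX'\vw\,}^2} =: \ell_i,
\]
where $\ell_i$ is the statistical leverage score of row $i$ of $\mX'$. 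Summing the regularization contribution $1/n$ as in the proof of Lemma~\ref{lem:vrlr-sensitivity} gives $s_i \leq \ell_i + 1/n$, and the standard identity $\sum_{i\in[n]}\ell_i = \rank(\mX') \leq d+1$ then yields $\sum_{i\in[n]} s_i \leq d + 2 = O(d)$, regardless of the singular-value structure of the per-party orthonormal bases.

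Finally, I would plug these estimates into Theorem~\ref{thm:robust_dis}: $c = 2\sum_i s_i/(\beta T) = O(d/(\beta T))$, $\gG^2 = O(d^2)$, $\dim(F_{\mathrm{VRLR}}) = O(d^2)$, giving
\[
m = O\!\left(\tfrac{c^2\gG^2}{\eps^2}(\dim(F_{\mathrm{VRLR}}) + \log(1/\delta))\right) = O\!\left(\tfrac{d^4}{\eps^2\beta^2 T^2}(d^2 + \log(1/\delta))\right),
\]
matching the claimed bound. The $O(mT)$ communication complexity then follows immediately from the complexity statement in Theorem~\ref{thm:meta_alg}, since Algorithm~\ref{alg:coreset-vrlr} only does local computation of $g_i^{(j)}$ before calling \texttt{DIS}.
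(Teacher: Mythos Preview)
Your proposal is correct and follows essentially the same route as the paper: invoke Theorem~\ref{thm:robust_dis}, check $g_i^{(j)}\ge 1/n$, use $\gG=O(d)$, $\dim(F)=O(d^2)$, and $\sum_i s_i=O(d)$, then plug in. In fact you supply more than the paper does: the paper simply asserts $\sum_{i\in[n]} s_i=O(d)$, whereas your leverage-score argument on $\mX'=[\mX,\vy]$ actually justifies it.
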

\begin{proof}
    By Theorem~\ref{thm:robust_dis}, in \textrm{VRLR}, $F=\{f_{\vtheta}:f_{\vtheta}(\vx)=(\vx^\top \vtheta - \vy)^2 + R(\vtheta)/n, \vtheta \in \R^d \}$.
    Note that in Theorem~\ref{thm:coreset-vrlr},  $g_i^{(j)}=\|\vu_i^{(j)}\|^2+\frac{1}{n} \geq \frac{1}{n}$, $\gG=O(d)$ and $\sum_{i\in[n]}s_i=O(d)$, we have $c\gG = O(\frac{d^2}{\beta T})$.
    Plugging $c\gG=O(\frac{d^2}{\beta T})$ and $\dim(F)=d^2$ into \eqref{eq:robust_size} completes the proof.
\end{proof}

\subsection{Robust coresets for \textrm{VKMC}}
The following theorem shows that Algorithm~\ref{alg:coreset-vkmc} returns a robust coreset for \textrm{VKMC} when sample size $m$ is large enough.
Note that $m$ is still independent of $n$.
\begin{theorem}[\bf{Robust coresets for \textrm{VKMC}}]\label{thm:robust-coreset-vkmc}
    For a given dataset $\mX \subset \R^d$, an $\alpha$-approximation algorithm for \kMeans\ with $\alpha=O(1)$, integers $k\geq 1$, $T\geq 1$ and constants $\beta, \eps, \delta \in (0,1)$, with probability at least $1-\delta$, Algorithm~\ref{alg:coreset-vkmc} constructs a $(\beta,\eps)$-robust coreset for \textrm{VKMC} of size
    \[
    m=O\left(\frac{\alpha^2 k^4}{\eps^2\beta^2}\left(dk+\log{\frac{1}{\delta}}\right)\right),
    \]
    and uses communication complexity $O(mT)$.
\end{theorem}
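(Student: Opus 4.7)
The plan is to apply the general robust-coreset guarantee of Theorem~\ref{thm:robust_dis} to the function class $F = \{f_{\mC}: f_{\mC}(\vx) = d(\vx,\mC)^2, \mC \in \gC\}$ induced by $k$-means. Algorithm~\ref{alg:coreset-vkmc} constructs the local weights $g_i^{(j)}$ and then invokes the meta-procedure \texttt{DIS} (Algorithm~\ref{alg:dis}), whose communication complexity is $O(mT)$, matching the claimed bound. So everything reduces to plugging the right quantities into the bound~\eqref{eq:robust_size}.

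First I would check the hypothesis $g_i^{(j)} \geq 1/n$ required by Theorem~\ref{thm:robust_dis}. From Line~10 of Algorithm~\ref{alg:coreset-vkmc}, $g_i^{(j)}$ contains the term $2\alpha/|\mB_{\pi(i)}^{(j)}|$, and since $|\mB_{\pi(i)}^{(j)}|\leq n$ and $\alpha \geq 1$, this summand alone already exceeds $1/n$. Next I would bound the total sensitivity $\gG=\sum_{i,j} g_i^{(j)}$: note that this calculation in the proof of Lemma~\ref{lemma:sensitivity-vkmc} does \emph{not} invoke Assumption~\ref{ass:vkmc} (the assumption is only used there to bound $\zeta$). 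Summing the three terms defining $g_i^{(j)}$ gives $\sum_{i\in[n]} g_i^{(j)} = 2\alpha + 2k\alpha = O(\alpha k)$ for every party $j$, hence $\gG = O(\alpha k T)$.

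For the quantity $c = 2(\sum_i s_i)/(\beta T)$, I would invoke the well-known bound $\sum_{i\in[n]} s_i = \sum_{i\in[n]}\sup_{\mC\in\gC}\frac{d(\vx_i,\mC)^2}{\costC(\mX,\mC)} = O(k)$ on the total sensitivity for $k$-means~\cite{varadarajan2012sensitivity,feldman2011unified}. This yields $c = O(k/(\beta T))$ and consequently $c\gG = O(\alpha k^2/\beta)$. The pseudo-dimension of the $k$-means cost class is $\dim(F) = O(dk)$~\cite{feldman2011unified,braverman2016new}. Plugging $c\gG = O(\alpha k^2/\beta)$ and $\dim(F) = O(dk)$ into~\eqref{eq:robust_size} yields
\[
m \;=\; O\!\left(\frac{\alpha^2 k^4}{\eps^2\beta^2}\bigl(dk + \log(1/\delta)\bigr)\right),
\]
exactly as claimed.

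The only nontrivial conceptual step is invoking Theorem~\ref{thm:robust_dis} in a setting where the sensitivity gap $\zeta = \max_i s_i/g_i$ may be unbounded (since Assumption~\ref{ass:vkmc} is dropped). This is precisely the point of the robust-coreset formulation: a small fraction of points with pathological sensitivity gap gets absorbed into the outlier set $O_{\mC}$, and the proof of Theorem~\ref{thm:robust_dis} handles this by inflating the sample size through the factor $c$, which is why $c$ depends on the \emph{total} sensitivity $\sum_i s_i$ rather than the per-point ratio. I expect the bookkeeping around this outlier argument inside Theorem~\ref{thm:robust_dis} to be the main technical obstacle, but once that theorem is in hand the present result follows by the direct substitution above.
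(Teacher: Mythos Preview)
Your proposal is correct and follows essentially the same approach as the paper: apply Theorem~\ref{thm:robust_dis} with $F$ the $k$-means cost class, verify $g_i^{(j)}\ge 1/n$, use $\gG=O(\alpha kT)$ (from the calculation in Lemma~\ref{lemma:sensitivity-vkmc}, which indeed does not use Assumption~\ref{ass:vkmc}), $\sum_i s_i=O(k)$, and $\dim(F)=O(dk)$, then substitute into~\eqref{eq:robust_size}. Your write-up is in fact more explicit than the paper's, which simply lists these quantities and plugs in.
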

\begin{proof}
    By Theorem~\ref{thm:robust_dis}, in \textrm{VKMC},
    $F=\{f_{\mC}:f_{\mC}(\vx)=d(\vx,\mC)^2=\min_{\vc\in\mC}d(\vx,c)^2, \mC\in\gC, |\mC|=k\}$.
    Note that in Theorem~\ref{thm:coreset-vkmc},
    $g_i^{(j)}\geq \frac{1}{n}$, $\gG=O(\alpha kT)$ and $\sum_{i\in[n]}s_i=O(k)$, we have $c\gG = O(\frac{\alpha k^2}{\beta})$.
    Plugging $c\gG=O(\frac{d^2}{\beta T})$ and $\dim(F)=dk$ into \eqref{eq:robust_size} completes the proof.
\end{proof}

\subsection{Proof of Theorem~\ref{thm:robust_dis}}
    \label{sec:proof_robust_dis}
    We first introduce the following lemma which mainly shows that importance sampling generates an $\eps$-approximation of $\mX$ on the corresponding weighted function space.
    \begin{lemma}[\bf{Importance sampling on a function space}~\citep{balcan2013distributed,feldman2011unified}]
        \label{lemma:eps-approximation}
        Given a set $F$ of functions from $\mX$ to $\R_{\geq 0}$ and a constant $\eps \in (0,1)$, let $S$ be a sample of size $m$ drawn i.i.d from $[n]$ with probability proportional to $\{g_i:i\in[n]\}$.
        If $g_i=\Omega(\frac{1}{n})$ for any $i\in[n]$, and let $\gG=\sum_{i\in[n]}g_i$.
        If
        \[
        m = O\left(\frac{1}{\eps^2}\left(\dim(F)+\log{\frac{1}{\delta}}\right)\right),
        \]
        where $\dim(F)$ is the pseudo-demension of $F$.
        Then with probability $1-\delta$, $ \forall f\in F$ and $\forall r\geq 0$,
        \[
        \left| \sum_{i\in[n],\frac{f(\vx_i)}{g_i}\leq r} f(\vx_i) - \sum_{i\in S, \frac{f(\vx_i)}{g_i}\leq r} \frac{\gG}{m g_i} f(\vx_i) \right| \leq \gG \eps r.
        \]
    \end{lemma}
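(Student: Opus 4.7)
The plan is to cast the claim as a uniform deviation bound over the two-parameter function class indexed by $(f,r)\in F\times\R_{\geq 0}$, and then invoke a standard pseudo-dimension sample-complexity theorem.

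First I would set up an unbiased, bounded estimator. Each sample $i_t\in S$ is drawn i.i.d.\ with $\Pr[i_t=i]=g_i/\gG$; for any $(f,r)$, define
\[
Y_t(f,r) := \frac{\gG}{m\, g_{i_t}}\, f(\vx_{i_t})\, \mathbf{1}\!\left[\tfrac{f(\vx_{i_t})}{g_{i_t}}\le r\right].
\]
A direct computation gives $\E[Y_t(f,r)]=\tfrac{1}{m}\sum_{i:\,f(\vx_i)/g_i\le r} f(\vx_i)$, so $\sum_{t=1}^{m}Y_t(f,r)$ is an unbiased estimator of the truncated sum in the lemma. The indicator forces $f(\vx_{i_t})/g_{i_t}\le r$ whenever $Y_t(f,r)\neq 0$, yielding the deterministic bound $Y_t(f,r)\in[0,\gG r/m]$. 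Dividing by $\gG r/m$ produces a $[0,1]$-valued function class
\[
\gH := \left\{\, i\ \mapsto\ \tfrac{f(\vx_i)}{g_i\, r}\cdot \mathbf{1}\!\left[\tfrac{f(\vx_i)}{g_i}\le r\right] \ :\ f\in F,\ r>0\,\right\},
\]
and the target inequality is equivalent to a uniform deviation statement $\sup_{h\in\gH}|\tfrac{1}{m}\sum_t h(i_t)-\E h(i_1)|\le \eps$ after multiplying back by $\gG r$.

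Next I would bound the pseudo-dimension of $\gH$. Dividing pointwise by $g_i$ is an $f$-independent rescaling and does not affect pseudo-dimension. Composing with the map $u\mapsto (u/r)\cdot\mathbf{1}[u\le r]$ introduces only one additional real parameter $r$ and is piecewise-monotone in $u$, so by standard range-composition rules (cf.\ the combinatorial lemmas used by \cite{feldman2011unified,braverman2016new}) it alters pseudo-dimension by at most an additive constant. Hence $\dim(\gH)=O(\dim(F))$. Then I would invoke the classical uniform convergence bound for $[0,1]$-valued function classes indexed by pseudo-dimension (Haussler/Talagrand; equivalently the $\eps$-approximation theorem for range spaces): with probability at least $1-\delta$, every $h\in\gH$ satisfies $|\tfrac{1}{m}\sum_t h(i_t)-\E h(i_1)|\le\eps$ provided $m=\Omega(\eps^{-2}(\dim(\gH)+\log(1/\delta)))$. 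Rescaling by $\gG r$ yields the claim uniformly in $(f,r)$. The assumption $g_i=\Omega(1/n)$ enters only to guarantee that each importance weight $\gG/(mg_i)$ is finite and the estimator is well-defined.

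The main obstacle I anticipate is the pseudo-dimension step: although the intuition that adding a single threshold parameter costs only $O(1)$ in pseudo-dimension is standard, a rigorous argument requires expressing the sublevel sets $\{i:f(\vx_i)/g_i\le r\}$ as a range space of dimension $\dim(F)+O(1)$ and then invoking a product-rule / Dudley-type lemma to bound the pseudo-dimension of the gated, rescaled class $\gH$. Once this combinatorial bookkeeping is done, the remainder is a direct appeal to an off-the-shelf concentration result.
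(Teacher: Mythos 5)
The paper does not actually prove this lemma --- it is imported verbatim from the cited works \citep{balcan2013distributed,feldman2011unified} and used as a black box in the proof of Theorem~\ref{thm:robust_dis} --- so there is no in-paper argument to compare against; your reconstruction is the standard one underlying those references, and it is correct. The unbiasedness computation, the deterministic bound $Y_t(f,r)\in[0,\gG r/m]$ forced by the indicator, and the rescaling by $\gG r$ to reduce to a uniform deviation bound for a $[0,1]$-valued class are all right. The one step you flag as an obstacle deserves a word of caution: the map $u\mapsto (u/r)\mathbf{1}[u\le r]$ is \emph{not} piecewise-monotone in the sense needed for the usual monotone-composition rule (it rises on $[0,r]$ and then drops to $0$), so that rule does not directly apply. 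The clean way to close it is to observe that for any $t\in(0,1]$ the subgraph set $\{i: h_{f,r}(i)\ge t\}$ equals $\{i: f(\vx_i)/g_i \ge tr\}\setminus\{i: f(\vx_i)/g_i > r\}$, i.e., a difference of two ranges from the threshold class induced by $F$ (which has VC dimension $\dim(F)$ by definition of pseudo-dimension, and is unchanged by the per-point rescaling by $g_i$); set differences of pairs of ranges from a class of VC dimension $D$ form a class of VC dimension $O(D)$, giving $\dim(\gH)=O(\dim(F))$, after which the Li--Long--Srinivasan/Haussler uniform convergence bound finishes the proof exactly as you describe.
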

    
    Now we are ready to prove Theorem~\ref{thm:robust_dis}.
    \begin{proof}[Proof of Theorem~\ref{thm:robust_dis}]
        Recall that $c = \frac{2\sum_{i\in[n]}s_i}{\beta T}$.
        Let $O \subseteq [n]$ be defined as 
        \[O := \{i\in[n]:s_i\geq c g_i\}.\]
        Note that $g_i = \sum_{j\in[T]}g_i^{(j)} \geq \frac{T}{n}$, and $\sum_{i\in [n]} s_i \geq \sum_{i\in O} s_i \geq \sum_{i\in O} c g_i \geq |O|\cdot \frac{cT}{n}$.
        Hence,
        \begin{align}\label{eq:robust_num}
            \frac{|O|}{n} \leq \frac{\sum_{i\in[n]}s_i}{cT} = \frac{\beta}{2} < \beta.
        \end{align}
        Let $p$ be the probability that a point in $S$ belongs to $O$, then
        \begin{align*}
            p = \frac{\sum_{i\in O} g_i}{\sum_{i\in [n]} g_i} \leq \frac{\sum_{i\in O} s_i}{c\sum_{i\in[n]} g_i} \leq \frac{\sum_{i\in O} s_i}{cT} \leq \frac{\sum_{i\in[n]} s_i}{cT} = \frac{\beta}{2}.
        \end{align*}
        Hence, by a standard multiplicative Chernoff bound, if $m = \Omega(\frac{1}{\beta}\log{1/\delta})$, then with probability $1-\delta/2$, we have
        \begin{align}\label{eq:robust_weight}
            \frac{|S \cap O|}{|S|} \leq \beta.
        \end{align}
        For any $f\in F$, we define a subset $O_f \subseteq O$ as follows,
        \[O_f := \{ i\in[n]: \frac{f(\vx_i)}{f(\mX)} \geq c g_i \}.\]
        By \eqref{eq:robust_num} and \eqref{eq:robust_weight}, we have that $\frac{|O_f|}{n} \leq \beta$ and $\frac{|S \cap O_f|}{|S|} \leq \beta$.
        Note that $f(\vx_i)/g_i \geq cf(\mX)$ if and only if $i\in O_f$. 
        Let $r=cf(\mX)$ and plug it into Lemma~\ref{lemma:eps-approximation}, then
        \begin{align*}
            \left| \sum_{i\in[n],\frac{f(\vx_i)}{g_i}\leq r} f(\vx_i) - \sum_{i\in S, \frac{f(\vx_i)}{g_i}\leq r} \frac{\gG}{m g_i} f(\vx_i) \right|=|f(\mX \backslash O_f)-f(S\backslash O_f)| \leq \gG c\eps f(\mX),
        \end{align*}
        scaling $\eps$ by $\frac{1}{c\gG}$ completes the proof.
    \end{proof}

\end{document}